\newenvironment{widequote}
  {\begin{list}{}{\leftmargin=1em \rightmargin=1em}\item[]}
  {\end{list}}
\newtcolorbox[use counter=section]{namedbox}[2][]{%
  colback=gray!2!white, colframe=black!25,
  title=\textbf{#2}, fonttitle=\bfseries,
  boxrule=0.4pt, arc=2pt, left=2pt, right=5pt, top=5pt, bottom=5pt,
  label={#1}
}
\newtcolorbox[auto counter, number within=section]{mybox}[2][]{%
  floatplacement=ht,
  colback=gray!2!white, colframe=black!25,
  title=Framework~\thetcbcounter: #2,
  label=#1,
  fonttitle=\bfseries,
  boxrule=0.4pt, arc=2pt,
  left=5pt, right=5pt, top=5pt, bottom=5pt,
  breakable, enhanced, float
}
\newtheorem{theorem}{Theorem}
\newtheorem{claim}{Claim}
\newtheorem{lemma}{Lemma}
\newtheorem{observation}{Observation}
\newtheorem{corollary}{Corollary}
\newtheorem{remark}{Remark}
\theoremstyle{definition}
\newtheorem{definition}{Definition}
\newcommand{\ignore}[1]{}
\newcommand{\notinsubmission}[1]{}
\newcommand{\context}{\mathsf{C}}
\newcommand{\Bin}{\mathsf{Bin}}
\newcommand{\Exp}{\mathsf{Exp}}
\newcommand{\calA}{\mathcal{A}}
\newcommand{\N}{\mathbb{N}}
\newcommand{\eps}{\varepsilon}
\newcommand{\E}{\mathsf{E}}
\DeclareMathOperator{\Cat}{Cat}
\newcommand{\fail}{\texttt{<fail>}}
\newcommand{\ESampler}{\mathcal{M}}
\title{Hot PATE: Private Aggregation of Distributions for Diverse Tasks}
\date{}
\author{
  Edith Cohen \\
  Google Research and Tel Aviv University \\
  \texttt{edith@cohenwang.com} \\
  \And
  Benjamin Cohen-Wang \\
  MIT \\
  \texttt{bencw@mit.edu} \\
  \And
  Xin Lyu \\
  UC Berkeley and Google Research \\
  \texttt{xinlyu@berkeley.edu} \\
  \And
  Jelani Nelson \\
  UC Berkeley and Google Research \\
  \texttt{minilek@alum.mit.edu} \\
  \And
  Tam\'{a}s Sarl\'{o}s \\
  Google Research \\
  \texttt{stamas@google.com} \\
  \And
  Uri Stemmer \\
  Tel Aviv University and Google Research \\
  \texttt{u@uri.co.il} \\
}
\author{Edith Cohen}
\email{edith@cohenwang.com}
\affiliation{%
  \institution{Google Research and Tel Aviv University}
  \city{Mountain View}
  \state{California}
  \country{USA}
  \postcode{94043}
}
\author{Benjamin Cohen-Wang}
\email{bencw@mit.edu}
\affiliation{%
  \institution{MIT}
}
\author{Xin Lyu}
\email{xinlyu@berkeley.edu}
\affiliation{%
  \institution{UC Berkeley}
  \city{Berkeley}
  \state{California}
  \country{USA}
  \postcode{94720}
}
\author{Jelani Nelson}
\email{minilek@alum.mit.edu}
\affiliation{%
  \institution{UC Berkeley and Google Research}
  \city{Berkeley}
  \state{California}
  \country{USA}
  \postcode{94720}
}
\author{Tam\'{a}s Sarl\'{o}s}
\email{stamas@google.com}
\affiliation{%
  \institution{Google Research}
  \city{Mountain View}
  \state{California}
  \country{USA}
  \postcode{94043}
}
\author{Uri Stemmer}
\email{u@uri.co.il}
\affiliation{%
  \institution{Tel Aviv University and Google Research}
  \city{Tel Aviv-Yafo}
  \country{Israel}
}
\author{Edith Cohen\\
Google Research and Tel Aviv University\\
Mountain View, CA, USA\\
\texttt{edith@cohenwang.com}
\and
Benjamin Cohen-Wang\\
MIT\\
\texttt{bencw@mit.edu}
\and
Xin Lyu\\
UC Berkeley\\
Berkeley, CA, USA\\
\texttt{xinlyu@berkeley.edu}
\and
Jelani Nelson\\
UC Berkeley and Google Research\\
Berkeley, CA, USA\\
\texttt{minilek@alum.mit.edu}
\and
Tam\'{a}s Sarl\'{o}s \\
Google Research \\
Mountain View, CA, USA\\
\texttt{stamas@google.com}
\and
Uri Stemmer\\
Tel Aviv University and Google Research\\
Tel Aviv-Yafo, Israel\\
\texttt{u@uri.co.il}
}
\begin{document}
\maketitle

\begin{abstract}
The Private Aggregation of Teacher Ensembles (PATE) framework enables privacy-preserving machine learning by aggregating responses from disjoint subsets of sensitive data. Adaptations of PATE to tasks with inherent output diversity such as text generation, where the desired output is a sample from a distribution, face a core tension: 
as diversity increases, samples from different teachers are less likely to agree, but lower agreement results in reduced utility for the same privacy requirements. 
Yet suppressing diversity to artificially increase agreement is undesirable, as it distorts the output of the underlying model, and thus reduces output quality.
 
We propose Hot PATE, a variant of PATE designed for diverse generative settings. 
We formalize the notion of a \emph{diversity-preserving} \emph{ensemble sampler} and introduce an efficient sampler that provably transfers diversity without incurring additional privacy cost.
Hot PATE requires only API access to proprietary models and can be used as a drop-in replacement for existing \emph{Cold} PATE samplers. 
Our empirical evaluations corroborate and quantify the benefits, showing significant improvements in the privacy–utility trade-off on evaluated in-context learning tasks, both in preserving diversity and in returning relevant responses.

  \end{abstract}

\section{Introduction}

Generative models, and in particular large language models (LLMs), can perform a variety of tasks without explicit supervision~\citep{Radford2019LanguageMA,brown2020language}.
Unlike conventional machine learning models, generative models support open-ended tasks with inherently \emph{diverse} outputs, where many different outputs may be appropriate.
This diversity, which is often essential for functionality, is tunable via a temperature parameter, with higher temperatures yielding greater variation in outputs.

When training or performing analytics on sensitive data such as medical records, incident reports, or emails, privacy of individual data records must be protected. Mathematical frameworks for privacy guarantees include Differential privacy (DP)~\citep{DMNS06}, considered a gold standard, which requires that the probability of each output can only change a little when a single record is swapped, and $k$-anonymity and its extensions, which require that each released record be indistinguishable from at least $k-1$ others~\citet{sweeney2002kanon}.
In practice, many large-scale analyses (e.g., Anthropic’s Clio and OpenAI’s usage reports \citet{tamkin2024clio,openai2025usage} adopt lighter privacy notions based on minimum-support thresholds or suppression of low-frequency categories before releasing aggregates.
Ultimately, these approaches all rely on \emph{high agreement}, ensuring that reported outputs are supported by many data records.

A popular paradigm for privacy protection is the Private Aggregation of Teacher Ensembles (PATE) paradigm~\citep{PapernotAEGT:ICLR2017,BassilyTT:NEURIPS2018,DBLP:conf/iclr/PapernotSMRTE18}, based on~\citet{nissim2007smooth}, and described as Framework~\ref{coldpatebox}. PATE partitions sensitive data among several teachers (each of which does \emph{not} preserve privacy) and aggregates their predictions to obtain a privacy-preserving output.  
\begin{mybox}[coldpatebox]{Cold PATE}
\footnotesize{
\begin{enumerate}
    \item Partition the dataset $D$ into $n$ disjoint parts: 
    $D = D_1 \sqcup \dots \sqcup D_n$.\\
    For each $i \in [n]$, train a \emph{teacher} model $\mathcal A_i$ on $D_i$.
    
    \item For each example $x \in X$:
    \begin{itemize}
        \item For each teacher $i \in [n]$, compute label prediction: $y_i := \mathcal A_i(x) \in V$.
        \item Construct the histogram $\boldsymbol{c}$ of votes:
        $\text{for } j \in V,\quad c_j = \sum_{i \in [n]} \mathbbm{1}\{ y_i = j \}.$
        \item Apply a privacy preserving aggregation mechanism to $\boldsymbol{c}$ to produce a final label $y \in V$. Abort if no confident agreement. Output $y$.
    \end{itemize}
\end{enumerate}
}
\end{mybox}
In the PATE framework, each data record affects at most one teacher and thus affects at most one vote. 
A \texttt{NoisyArgMax} DP aggregation mechanism masks these small differences by adding noise to each count $c_j$ in the histogram to obtain $(\tilde{c}_j)_{j\in V}$ and returning the index $\arg\max_j \tilde{c}_j$. Implementations vary in the noise distribution and privacy analyses (see discussion in \Cref{cooagg:sec}), but ultimately, a label $j$ can be returned only when the noise scale $\sigma$ is small relative to its count $c_j$.  
A light and interpretable privacy notion for histogram aggregation is \emph{threshold privacy}, parametrized by $T\in [n]$: With threshold privacy $T$, 
the aggregator is permitted to output only labels with $c_j \geq T$; if $\max_j c_j < T$, it must abstain (yielding no utility). Higher $T$ means more privacy (output must be supported by more teachers) but reduced utility. A threshold of 
$T = \Theta\!\bigl(\varepsilon^{-1}\log(1/\delta)\bigr)$ 
is a good proxy for $(\varepsilon,\delta)$-DP (for our purposes, see \Cref{cooagg:sec}).

\subsection{PATE in the diverse setting}
In \emph{diverse settings}, such as those involving generative models, the underlying model produces a probability distribution over the \emph{vocabulary} $V$ of \emph{tokens} and returns a sample from the distribution. 
Such distributions are typically \emph{diverse}, supporting open-ended responses with many plausible outcomes. 
In the corresponding PATE setup, each teacher $i\in[n]$ in the ensemble produces its own probability distribution $\boldsymbol{p}^{(i)}$. 
We formalize the aggregation step through an \emph{ensemble sampler}: a mechanism that maps the set of teacher distributions $(\boldsymbol{p}^{(i)})_{i\in[n]}$ 
to an aggregate distribution 
$\mathcal{M}((\boldsymbol{p}^{(i)})_{i\in[n]})$, from which the output token is sampled.

\paragraph{Utility of ensemble samplers}
As with basic PATE, henceforth \emph{Cold PATE}, the design goal of an ensemble sampler is to achieve a favorable privacy–utility trade-off.
We take \emph{basic utility} to be the \emph{yield}: returning any \emph{relevant} token (e.g., one whose average teacher probability exceeds a threshold or is an approximate maximizer). 
We further propose \emph{preserving diversity} as a utility criterion: 
Informally (formalized in the sequel), the aggregate should allocate proportional probability values to all tokens for which there is sufficient teachers support.
Why is it important to preserve diversity? Cold PATE targets classification, where there is a single ground-truth label and knowledge transfer proceeds via non-sensitive unlabeled examples. In generative settings, the response distribution \emph{is} the knowledge to be transferred. A diversity-preserving sampler enables that knowledge to flow to the student simply by repeated sampling from the aggregate distribution.

\paragraph{Cold PATE in diverse settings.}
When cold PATE is applied in a diverse setting, the ensemble sampler first samples a histogram  $\mathbf{c} \sim \mathcal{H}_{\mathrm{ind}}$ as follows and then aggregates it $\boldsymbol{c}\mapsto y$.
\begin{equation} \label{independentensembleagg:eq}
\mathbf{c} \sim \mathcal{H}_{\mathrm{ind}}\!\big((\boldsymbol{p}^{(i)})_{i \in [n]}\big)
\;\; \stackrel{\text{def}}{=}\;\;
\Bigl(c_j = \sum_{i \in [n]} \mathbbm{1}\!\{y_i = j\}\Bigr)_{j \in V}
\quad \text{where } y_i \sim \boldsymbol{p}^{(i)} \text{ independently.}
\end{equation}
The histogram sampling step induces an \emph{inherent privacy–utility trade-off}:
as output diversity increases, even basic utility (yield) drops sharply due to vote-splitting.  
For example, if there are \(r\) equally good responses, then \(\mathbb{E}[c_j]\approx n/r\) for each such \(j\), so utility under threshold-privacy requires \(T \approx n/r\), i.e., \emph{inversely proportional} to diversity.
Moreover, the subsequent \texttt{NoisyArgMax} aggregation 
is not diversity-preserving.  
Cold PATE histogram counts concentrate (e.g., by Chernoff) around their expectations \( \mathbb{E}[c_j]=n\,\bar p_j \) (where \(\bar p_j\) is the average teacher probability), the noisy maximizer is disproportionately more likely to be an approximate maximizer of \(c_j\) than a token whose average probability is, say, half as large.

All prior work we are aware of on applying PATE in diverse settings~\citep{seqPATE_Neurips2022,duan2023flocks,wu2023privacypreserving} either relied on the Cold PATE ensemble sampler or implemented custom samplers that explicitly reduced or constrained diversity (see Section~\ref{related:sec} for a detailed discussion).
Moreover, these works evaluated only basic utility (yield) and did not consider the importance of transferring diversity from teachers to the aggregate distribution.

In this work, we ask: Is the diversity–privacy trade-off observed in Cold PATE inherent? If not, can we design an ensemble sampler that (i) achieves high basic utility at a fixed privacy budget, even under substantial diversity, and (ii) preserves (transfers) diversity across teacher-supported responses?

\subsection{PATE framework for sequential text generation}

A motivating application for our study, also the setting of our experiments, is the generation of a representative set of synthetic, privacy-preserving records from sensitive data.
Such records often contain \emph{identifying elements} alongside elements shared across many samples.
A privacy-preserving generator must suppress the identifying elements while retaining the shared structure and variability of the data.
Crucially, it must also \emph{preserve diversity}: without sufficient diversity, the synthetic set underrepresents rare but valid patterns and fails to reflect the richness of the underlying distribution.
The resulting synthetic records can support multiple downstream uses, including training a (possibly non-generative) student model, fine-tuning a generative model, or constructing privacy-preserving prompts.

This motivates Framework~\ref{patesequential}: a PATE design tailored to sequential text generation, suitable for tasks such as synthetic record generation, summarization, and querying.
An \emph{autoregressive model} is a map
$\mathcal{A}:V^*\mapsto \boldsymbol{p}$ 
that takes a sequence of tokens and outputs a \emph{next-token} distribution over $V$. 
The framework is parametrized by a \emph{model generator} $\mathcal{G}:D\mapsto \mathcal{A}$ and an ensemble sampler $\mathcal{M}$.
For each data partition $D_i$, we instantiate a teacher model $\mathcal A_i \gets \mathcal{G}(D_i)$. 
Generation then proceeds in lockstep: at each step, each teacher produces its next-token distribution, and the next response token is sampled from  $\mathcal{M}\bigl((\boldsymbol{p}^{(i)})_{i\in [n]} \bigr)$.

\begin{mybox}[patesequential]{PATE for sequential text generation}
  \begin{center}
      \includegraphics[width=0.45\textwidth]{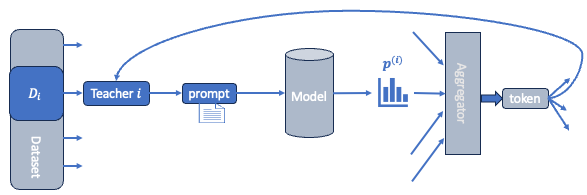}
  \end{center}

\SetKwInput{KwParams}{Parameters}
\begin{algorithm2e}[H]
\scriptsize
\DontPrintSemicolon
\caption{\textsf{PATE for Sequential Text Generation}}
\KwParams{Vocabulary $V$; Instruction $\context \in V^*$; ensemble sampler $\mathcal{M}: \big(\boldsymbol{p}^{(i)}\big)_{i \in [n]} \mapsto \boldsymbol{p}$ over $V \cup \{\fail\}$; autoregressive model generator $\mathcal{G}: D \mapsto \mathcal{A}$}%
\KwIn{Dataset $D$}%
\KwOut{Response string $R \in V^*$}
\BlankLine

\For{$i \in [n]$}{
    Randomly partition $D$ into disjoint subsets $D_i$\;
    $\mathcal{A}_i \gets \mathcal{G}(D_i)$ \tcp*{Construct teacher model from $D_i$}
}

$R \gets \{\}$ \tcp*{Initialize empty response string}

\Repeat{termination condition met}{
    \lFor(\tcp*[f]{Collect teachers' distributions over $V$}){$i \in [n]$}{
        $\boldsymbol{p}^{(i)} \gets \mathcal{A}_i(\context \cdot R)$ 
    }
    $y \sim \mathcal{M}\big((\boldsymbol{p}^{(i)})_{i \in [n]}\big)$ \tcp*{Aggregate and sample token}
    \lIf(\tcp*[f]{E.g., a sample from a public model}){$y = \fail$}{use fallback to obtain $y$}
    
    $R \gets R \cdot y$ \tcp*{Append sampled token to response}
}
\Return{$R$}
\end{algorithm2e}

\end{mybox}

The model generator abstraction captures two natural ways of instantiating teachers: in-context learning and fine-tuning.
With in-context learning, teacher $\mathcal{A}_i$ is specified by a context $\context_i$ constructed from data part $D_i$ for
\emph{few shots} learning~\citep{DBLP:journals/corr/abs-2101-06804,zhou2022teaching,garg2023transformers}.
A key advantage of in-context learning is that each teacher is simply a prompt provided to a shared model, requiring no additional training or  significant storage.
Scaling the number of teachers is inexpensive: prompts are cheap, and the current OpenAI API supports $10^6$
 context+output tokens for roughly US\$1~\citep{OpenAI-Pricing}.
Thus, the primary bottleneck is the amount of available sensitive data, and larger ensembles are especially attractive since under DP composition, the number of queries allowable for a fixed privacy budget grows quadratically with the number of teachers.
With fine-tuning, each teacher $\mathcal{A}_i$ is a model that is fine-tuned on $D_i$.
Parameter-efficient fine-tuning techniques (e.g., LoRA~\citep{hu2022lora}) and managed services for fine-tuning proprietary models~\citep{openai2023fine_tuning,azure2024finetune,anthropic2024claudehaiku} make this approach practical. Applying a PATE wrapper on top of such fine-tuned teachers is an appealing way to obtain privacy protection.

\subsection{Overview of Contributions and Roadmap}

Our primary contribution is Hot\footnote{The term ‘hot’ alludes to the temperature parameter that tunes diversity in LLMs.} PATE: ensemble samplers for PATE in the diverse setting that deliver high utility, both in terms of yield and in terms of diversity preservation. 
Hot PATE matches or exceeds the performance of the Cold PATE baseline on all inputs, with the advantage growing as output diversity increases.

\begin{figure}[h]
\vspace{-2pt}
\centering
\begin{minipage}{0.40\textwidth}
\includegraphics[width=0.99\textwidth]{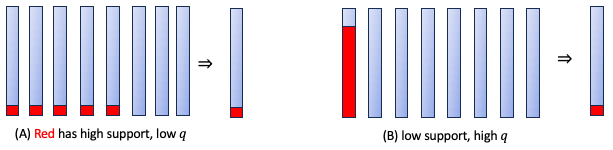}
\end{minipage}\hfill
\begin{minipage}{0.58\textwidth}
\caption{\small Illustration of two sets of probability distributions (each shown as a rectangle where the red segment indicates the probability of token 
$j$). In the left set, many teachers assign low probability $q$ to token $j$; in the right, few teachers assign high probability $q$. The average probability of token $j$ is the same in both cases, but the underlying support differs.
}\label{lowhigh:fig}
\end{minipage}
\vspace{-8pt}
\end{figure}

We begin with a key observation. As noted above, Cold PATE histogram counts concentrate around the scaled average probabilities. Consequently, a sampler with threshold privacy $T$ has yield
only if some token’s average teacher probability exceeds
 $T/n$. 
However (see Figure~\ref{lowhigh:fig}) the average distribution (and thus the Cold PATE
histogram) \emph{collapses a critical distinction}:
(i) \emph{high teacher support with low per-teacher probability} $q$ (transferable
under privacy even when $q \ll T/n$), versus
(ii) \emph{low teacher support with high $q$} (not transferable under privacy).
Because this distinction is lost under averaging, \emph{any} ensemble sampler that merely
\emph{post-processes} the average distribution (or a Cold PATE histogram) inherits Cold PATE’s
unfavorable diversity--privacy trade-off.
 
In Section~\ref{whatisdiversity:sec} we formalize a parameterized notion of
\emph{diversity preservation} that captures this distinction. Informally, for a
robustness parameter $\tau \in [n]$, we require:

\begin{itemize}
\item \textbf{Transfer.}
If a token $j$ has per-teacher probability at least $q>0$ across
$c \ge \tau$ teachers, then it is \emph{transferred}: the aggregate assigns it
probability at least $\Omega\!\big(q\,c/n\big)$.

\item \textbf{Relevance.}
Irrelevant tokens are not amplified: for every token $j$, its probability in the
aggregate is not much larger than its average probability across teachers.
\end{itemize}

The (diversity-preservation) utility of an ensemble sampler is captured by the
smallest $\tau$ for which the aggregate distribution is guaranteed to satisfy
the two requirements above \emph{for any} set of teacher distributions.
For Cold PATE, no meaningful guarantee is possible. For example, if all teacher
distributions are identical and \emph{uniform} over a support of size $m\gg n^2$, the probability that any histogram count exceeds $1$ is negligible, and therefore utility is possibly only for threshold privacy $T=1$ (i.e., no privacy). With $T=2$, 
the mechanism yields no utility, and in particular fails to preserve diversity even for $\tau = n$.
Our \emph{hot} ensemble samplers provide the following, asymptotically tight,\footnote{Tightness holds, e.g., when teachers form groups of size $\tau$ with identical distributions within each group and disjoint support across groups.} guarantees: 
\begin{theorem}[Hot ensemble samplers; Informal, see \Cref{diversityhighcount:thm}, \Cref{DPArgMax:cor}, \Cref{DPWS:cor}]\label{hotsamplers:thm}
There exist histogram-based ensemble samplers $\mathcal{M}_{\mathrm{thr}}$ and $\mathcal{M}_{\mathrm{dp}}$ such that:
\begin{itemize}
\item (\textbf{Threshold privacy}) For any threshold $T\in[n]$, $\mathcal{M}_{\mathrm{thr}}$ simultaneously satisfies $T$-threshold-privacy and is $\tau$-diversity-preserving with $\tau = O(T)$.
\item (\textbf{Differential privacy}) For any $(\varepsilon,\delta)$, $\mathcal{M}_{\mathrm{dp}}$  simultaneously satisfies $(\varepsilon,\delta)$-DP and is $\tau$-diversity-preserving with $\tau = O\!\big(\varepsilon^{-1}\log(1/\delta)\big)$.
\end{itemize}
\end{theorem}

In \Cref{empirical:sec}, we evaluate Hot PATE on two in-context learning tasks: (i) a natural task of synthetic record generation and (ii) curated, tunable-diversity task constructed to avoid training contamination.
The results demonstrate the properties and advantages of our design and corroborate the theory, showing orders-of-magnitude improvements over the Cold PATE baseline in the privacy cost required to achieve a given level of utility (including both diversity preservation and basic utility).

In the remaining part of the introduction we preview the  key ideas and design of our ensemble samplers.
Notably, our samplers are also \emph{histogram-based}: they have the form $\ESampler^{\mathrm{coo}}_{A} \;:=\; A \circ \mathcal{H}_{\mathrm{coo}}$, they first
construct a histogram 
$\mathbf c \sim \mathcal{H}_{\mathrm{coo}}\!\big((\boldsymbol{p}^{(i)})_{i\in[n]}\big)$ over $V$ with one vote per teacher, and then apply a privacy-preserving aggregation $A:\boldsymbol{c}\to V\cup\{\fail\}$ to this histogram.
Crucially (see \Cref{privacyprop:sec}), each teacher’s vote is computed \emph{without reference to other teachers’ distributions}. Hence the histogram has \emph{low sensitivity}: changing one teacher’s data affects at most that single vote. As a consequence, any privacy-preserving aggregation mechanism $A$ for histograms, including the DP aggregations used in Cold
PATE~\citep{PapernotAEGT:ICLR2017,DBLP:conf/iclr/PapernotSMRTE18}, applies unchanged to $\boldsymbol{c}\sim \mathcal{H}_{\mathrm{coo}}$, and the sampler $\ESampler^{\mathrm{coo}}_{A}$ inherits the privacy properties of $A$. A further benefit of a histogram-based method is interpretability with respect to privacy exposure; in particular, we can leverage threshold privacy.

We first preview the histogram distribution 
$\mathcal{H}_{\mathrm{coo}}\!\big((\boldsymbol{p}^{(i)})_{i\in[n]}\big)$ component of our samplers. The key
reason we obtain a much better privacy–utility trade-off than with Cold PATE's $\mathcal{H}_{\mathrm{ind}}$ is the
\emph{shape} of the histograms: they can vary widely across
samples (unlike Cold PATE’s concentrated histograms) and are \emph{peaky},
placing high mass on a few tokens and inducing larger margins (see
\Cref{fig:persamplefreqcounts}). The mechanism we use to generate these
histograms, \emph{ensemble coordination}, is introduced in
\Cref{cooensemble:sec}. The idea is simple: the ensemble draws shared
randomness, and each teacher $i$ emits a token $y_i$ as a function of its
distribution $\boldsymbol{p}^{(i)}$ and the shared randomness. This 
makes votes \emph{positively correlated} while preserving \emph{low sensitivity}.
Crucially, the marginal of each vote follows the teacher's distribution 
$\Pr[y_i=j]=p^{(i)}_j$, exactly as in the \emph{independent ensemble}
\eqref{independentensembleagg:eq}. The coordination only affects \emph{joint}
behavior: if two teachers have total-variation distance
$\mathrm{TV}(\boldsymbol{p}^{(i)},\boldsymbol{p}^{(i')})$, then they produce the same
token with probability $(1-\mathrm{TV}(\boldsymbol{p}^{(i)},\boldsymbol{p}^{(i')}))/(1+\mathrm{TV}(\boldsymbol{p}^{(i)},\boldsymbol{p}^{(i')}))$;
in particular, identical distributions yield identical tokens. More generally, if
a token $j$ has probability $q$ across a support of $\tau$ teachers, coordination
creates \emph{bursts of agreement}: the histogram count $c_j$ is
$\Omega(\tau)$ with probability $\Omega(q)$. This burstiness is precisely what
enables \emph{diversity transfer} with \emph{high privacy guarantees}: tokens supported by many teachers, even with
small per-teacher probability, surface as high peaks with large margins in the
aggregate histogram.

The second component of our ensemble samplers is the aggregation
mechanism that is applied to the histogram (see \Cref{hotsamplers:sec}). We consider two regimes:
(i)~\emph{Homogeneous ensembles:}
Data are randomly partitioned so that each teacher is representative (each
possesses the core knowledge to be transferred). In this setting it suffices to require
diversity preservation at scale $\tau=\Omega(n)$. 
(ii)~\emph{Heterogeneous ensembles:}
Teachers may correspond to single users or narrow subpopulations, so we must allow for lower agreement
and $\tau$. \emph{A weighted sampling} aggregation from the above-$T$ counts (under threshold privacy) or noisy counts (under DP) works for all $\tau$; 
additionally, threshold
$\arg\max$ and respectively \texttt{NoisyArgMax}
suffice in the homogeneous case, which notably, matches the regime and mechanism used in Cold PATE.

 In Sections~\ref{datadependent:sec} and~\ref{heteromethods:sec}
we discuss data-dependent DP privacy analysis methods that can increase the number of queries processed for a given privacy budget by orders of magnitude over naive analysis. We
benefit from a high \emph{margin} --  separation of the maximizer, which is more likely with coordinated ensembles, and make steps with no yield ``free.'' With heterogeneous ensembles, teachers can be individually charged (instead of the whole ensemble) when they contribute to the final token~\citep{HassidimKMMS20,targetcharging:arxiv2023}. Related work is surveyed in \Cref{related:sec}.

\section{Diversity-Preserving Aggregation} \label{whatisdiversity:sec}

\ignore{ 
Diversity and privacy appear to be conflicting in that
DP in its essence requires that the output token is supported by sufficiently many teachers. But to preserve diversity we need to also transfer tokens that have low probability in the teacher distributions to the aggregate distribution.
The most natural candidate for an aggregate distribution that preserves diversity is the average teacher distribution 
$\frac{1}{n}\sum_{i\in[n]}\boldsymbol{p}^{(i)}$, which is essentially what independent ensembles use. The caveat is the issue pointed out in the introduction (see Figure~\ref{lowhigh:fig}): It does not distinguish between tokens that are in the support of the distributions of very few teachers with high probability and those that are in the support of many teachers, with low probability. The privacy loss with independent ensembles (cold PATE) depends, in both cases, on the lowest average values we wish transferred. We propose a more nuanced requirement of preserving diversity that makes this distinction and is parametrized by a robustness parameter $\tau$, that corresponds to the number of supporting teachers. We then propose privacy preserving mechanisms that preserve diversity with privacy loss that depends only on $\tau$, regardless of how diverse the teacher distributions are. }

We formalize a parametrized definition of a 
diversity preservation property of ensemble samplers:
\begin{definition} [Diversity-preservation] \label{def:diversity}
    A map $\mathcal{M}\bigl( (\boldsymbol{p}^{(i)})_{i\in[n]}\bigr) \mapsto \boldsymbol{p}$ from $n$ probability distributions over $V$ to a probability distribution over $V\cup\{\fail\}$ is \emph{diversity-preserving} with $\tau \in \mathbbm{N}$, $\beta\in (0,1]$, $\gamma \geq 1$ if
    for any input $(\boldsymbol{p}^{(i)})_{i\in[n]}$ and $j\in V$    
    \begin{alignat*}{2}
        \text{1. (transfer)}\qquad & \text{For all $q\in [0,1]$,}\; ( c_{j,q} := \sum_{i\in [n]} \mathbbm{1}\{p^{(i)}_j \geq q\}) \geq \tau \implies p_j \geq \beta \cdot \frac{c_{j,q}}{n}\, q\ . \\
        \text{2. (relevance)}\qquad & p_j \leq \gamma \frac{1}{n}\sum_{i\in[n]} p^{(i)}_j\ .
    \end{alignat*}
    
\end{definition}
The first property is that probability $q$ across 
enough ($\tau$) teachers, no matter how small is $q$, is transferred to the aggregate distribution.  The second ensures that we do not output irrelevant tokens. 

Requirements are stricter (and can be harder to satisfy) when $\beta$ and $\gamma$ are closer to $1$ and when $\tau$ is smaller.
A setting of $\tau=1$ and $\beta=\gamma=1$ allows only for the  average distribution to be the aggregate. A larger $\tau$ increases robustness in that more teachers must support the transfer.

\begin{remark}[Failures]
When $\tau>1$, it is necessary to include a failure/abstention outcome $\fail$ in the support of the aggregate distribution. 
For example, if the prompt requests a \emph{patient ID} (and we assume no generalization), then teacher distributions have disjoint supports; no token attains support $\ge \tau$, so no valid token can be returned.
Practical remedies include:
(i) retrying the step with different shared randomness;
(ii) falling back to a non-private default prompt/model for this step; or
(iii) redesigning the prompt instruction to elicit non-identifying, higher-agreement responses.
\end{remark}

\section{Ensemble coordination} \label{cooensemble:sec}
\SetKwFunction{CSample}{CoordinatedHistogram}

A coordinated ensemble, similarly to an independent ensemble \Cref{independentensembleagg:eq}, defines a probability distribution
$\mathcal{H}_{\mathrm coo}\big((\boldsymbol{p}^{(i)})_{i\in[n]} \big)$ over histograms over $V$ with total count $\sum_{j\in V} c_j = n$.  
The sampling of a histogram $\boldsymbol{c} := (c_j)_{j\in V}$ is described in  Algorithm~\ref{coordinated:algo}. The algorithm samples shared randomness $\rho := (u_j)_{j\in V}$.
Each teacher $i\in [n]$ then contributes a single token $y_i \in V$ that is a function of its distribution $\boldsymbol{p}^{(i)}$ and $\rho$. The frequencies $c_j$ are computed as in \eqref{independentensembleagg:eq}.

The sampling method in ensemble coordination is a classic technique called \emph{coordinated sampling}. It was first introduced in statistics works in order to obtain samples that are stable under distribution shifts~\citep{KishScott1971,BrEaJo:1972,Saavedra:1995,Rosen1997a,Ohlsson:2000} and in computer science works for computational efficiency via sampling-based sketches and a form of Locality Sensitive Hashing (LSH)~\citep{ECohen6,ECohen6f,Broder:CPM00,IndykMotwani:stoc98,DBLP:journals/cacm/Haas11}. Its recent applications include private learning~\citep{GhaziKM:Neurips2021} and speculative decoding~\citep{speculativedecoding:ICML2023}. 

\paragraph{Implementation}
\CSample is simple to implement with access to the model. With proprietary models, an enhanced API can either (i) provide the shared randomness $\rho$ to the model to facilitate token selection or (ii) give the full distribution to the user. Without API enhancements, the distribution can be approximated by repeated sampling with the same prompt. This impacts computation, as the number of samples needed increases with diversity, but does not impact privacy.

\subsection{Properties of coordinated histograms}

\begin{algorithm2e}[t]\caption{\texttt{CoordinatedHistogram} \label{coordinated:algo}}
{\small
\DontPrintSemicolon
\KwIn{Teacher distributions $(\boldsymbol{p}^{(i)})_{i\in [n]}$}

\lForEach(\tcp*[f]{Sample shared randomness $\rho = (u_j)_{j\in V}$}) {token $j\in V$}{sample i.i.d.\ $u_j \sim \Exp[1]$}
\ForEach(\tcp*[f]{Compute coordinated samples $(y_i)_{i\in [n]}$}){teacher $i$}{$y_i \gets \arg\max_j \frac{p^{(i)}_j}{u_j}$\tcp*{bottom-$k$ sampling transform}}
\ForEach(\tcp*[f]{Compute frequencies}){token $j\in V$}{$c_j \gets \sum_{i\in [n]} \mathbbm{1}\{y_i = j \}$}
\Return{$(c_j)_{j\in V}$, $\rho=(u_j)_j$}\tcp*{Histogram of frequencies}
}
\end{algorithm2e}

Let $(\boldsymbol{p}^{(i)})_{i\in[n]}$ be probability distributions over $V$ and let $Y_\mathrm{coo}$ and $Y_{\mathrm{ind}}$ be the respective distributions of votes $(y_i)_{i\in[n]}$ generated by a coordinated or independent ensemble with teacher distributions $(\boldsymbol{p}^{(i)})_{i\in[n]}$.
Let $\mathcal{H}_\mathrm{coo}$ and $\mathcal{H}_\mathrm{ind}$ be the respective distributions of histograms.

For each token $j$, its expected frequency, over the sampling of histograms, is the same for coordinated and independent ensembles:
\begin{claim} [Expected token frequency] \label{samemarginal:claim}
\begin{equation} \label{expectedfreq:eq}
\forall j\in V,\   \E_{\boldsymbol{c}\sim \mathcal{H}_{\mathrm{coo}}}[c_j]= \E_{\boldsymbol{c}\sim \mathcal{H}_{\mathrm{ind}}}[c_j] = \sum_i p^{(i)}_j\ .
\end{equation}
\end{claim}
\begin{proof}
   The marginal distribution of $y_i$ for teacher $i$ is $\boldsymbol{p}^{(i)}$ with both independent and coordinated ensembles and thus the claim follows from linearity of expectation. 
\end{proof}    

In a coordinated ensemble, votes of different teachers are much more likely to agree than in an independent ensemble (see \Cref{cooprop:sec} for a proof):
\begin{claim} [Agreement probability] \label{agreementprob:claim}
For teachers $i,k\in [n]$ and token $j\in V$, the probability  
$\Pr_{\boldsymbol{y}\sim Y_{\mathrm{coo}}}[y_i = y_k = j]$ that both samples agree on token $j$ is
\footnotesize{
\begin{align*}
     \frac{\min\{p^{(i)}_j,p^{(k)}_j \}}{\sum_j \max\{p^{(i)}_j,p^{(k)}_j \} } \in \left[\frac{1}{2},1\right]\cdot \min\{p^{(i)}_j,p^{(k)}_j \}\ . 
\end{align*}
\begin{align*}
    \Pr_{\boldsymbol{y}\sim Y_{\mathrm{coo}}}[y_i = y_k = j] &\geq \Pr_{\boldsymbol{y}\sim Y_{\mathrm{ind}}}[y_i = y_k = j] = p^{(i)}_j\cdot p^{(k)}_j\ ,
\end{align*}
}
with equality possible only when
$\max\{p^{(i)}_j,p^{(k)}_j \}=1$.
\end{claim}

\subsection{Privacy properties} \label{privacyprop:sec}

With both independent and coordinated ensembles, we aggregate the histogram in a privacy-preserving way to select a single token. While the distribution of the histograms produced by these ensemble types is very different, the privacy properties in terms of the divergence between neighboring datasets are identical and  immediate: 

\begin{observation}\label{obs:sens}
For every fixture of the shared randomness $\rho$, changing one of the distributions $\boldsymbol{p}^{(i)}$ given as input to Algorithm~\ref{coordinated:algo} changes at most one item of the resulting histogram. That is, letting $H$ and $H'$ denote the resulting histograms before and after the modification, we have that $H,H'$ are at Hamming distance 2 (viewed as vectors in $\N^{|V|}$).
\end{observation}

The following corollary is immediate from Observation~\ref{obs:sens}.

\begin{corollary} \label{privacyprop:cor}
Let $\calA$ be an algorithm whose input is a histogram $H\in \N^{|V|}$, such that for any two neighboring histograms $H,H'$ (differing by at most one item) it holds that $\calA(H)\approx_{(\eps,\delta)}\calA(H')$. Then the composed algorithm $\calA\left(\CSample(\cdot)\right)$ is $(\eps,\delta)$-differentially private.\footnote{This  corollary holds for all variants of differential privacy, and is written here with $(\eps,\delta)$-DP for concreteness.}
\end{corollary}

\subsection{Aggregators and Ensemble Samplers} \label{hotsamplers:sec}

Define $S_T(\mathbf c):=\{\,j\in V:\ c_j\ge T\,\}$ and $M_T(\mathbf c):=\sum_{j\in S_T(\mathbf c)} c_j$. We define the thresholded maximizer and weighted sample aggregators. Observe that they trivially satisfy $T$-threshold privacy:
{\small
\begin{align*}
\textsc{TArgMax}_T(\mathbf c)\;&:=\;
\begin{cases}
\arg\max_{j\in S_T(\mathbf c)} c_j \ \text{ if } S_T(\mathbf c)\neq\varnothing;\ \fail\ \text{ otherwise.}
\end{cases}
\\
\textsc{TWS}_{T,\gamma}(\mathbf c)\;&:=\;
\begin{cases}
\text{with prob.} \min\!\Big\{1,\tfrac{\gamma M_T(\mathbf c)}{n}\Big\},\, 
y\sim \Cat\!\big(\tfrac{c_j}{M_T(\mathbf c)}\big)_{j\in S_T(\mathbf c)}\ ;\ \text{else } y=\fail.\end{cases}
\end{align*}
}

DP versions of these aggregators, $\textsc{DPArgMax}_{(\eps,\delta)}$  (\Cref{cooagghomo:sec}) and $\textsc{DPWS}_{(\eps,\delta)}$ (\Cref{cooagghetero:sec}), are presented in \Cref{cooagg:sec}.
We now establish end-to-end diversity-preservation (Definition~\ref{def:diversity}) and privacy guarantees for ensemble samplers of the form
\(
\ESampler^{\mathrm{coo}}_{A} \;:=\; A \circ \mathcal{H}_{\mathrm{coo}},
\)
which, given teacher distributions $(\boldsymbol{p}^{(i)})_{i\in[n]}$, first sample a coordinated histogram $\mathbf c \sim \mathcal{H}_{\mathrm{coo}}\!\big((\boldsymbol{p}^{(i)})_{i\in[n]}\big)$ and then return $A(\mathbf c)$, yielding a distribution over $V$.

\begin{theorem} [Ensemble samplers properties] \label{diversityhighcount:thm}
For any $\tau\in [n]$ and $\gamma\geq 1$, with $A= \textsc{TWS}_{\tau/2,\gamma}$, sampler $\ESampler^{\mathrm{coo}}_A$ 
satisfies $(\tau/2)$-threshold privacy and is diversity preserving  with parameters $(\tau, \beta=0.17,\gamma)$.

For $\tau > n/2$, with $A=\textsc{TArgMax}_{\lceil n/2 +1\rceil}$, sampler
$\ESampler^{\mathrm{coo}}_A$
 satisfies $(T=\lceil n/2 +1\rceil)$-threshold privacy  and is diversity preserving with $(\tau,\beta= (1/2)\log(2\tau/n),\gamma=2)$.

For $\eps,\delta>0$: With $A=\textsc{DPWS}_{(\eps,\delta)}$, sampler $\ESampler^{\mathrm{coo}}_A$ is $(\eps,\delta)$-DP and diversity preserving with $(\tau=4\eps^{-1}\log(1/\delta),\beta=\Theta(1),\gamma=1)$.

For $\eps,\delta>0$:
with $A=\textsc{DPArgMax}_{(\eps,\delta)}$, sampler $\ESampler^{\mathrm{coo}}_A$ is
$(\eps,\delta)$-DP and diversity preserving with
$(\tau=0.6n +3\eps^{-1}\log(1/\delta),\beta=\Theta(1),\gamma=2)$.
\end{theorem}
\begin{proof}
    From \Cref{privacyprop:cor}, the privacy properties $\ESampler^{\mathrm{coo}}_{A}$ inherit those of $A$.
    The diversity preservation properties for threshold privacy aggregators are established in \Cref{cooprop:sec} and those of the DP aggregators are established in \Cref{cooagg:sec}.
\end{proof}

\section{Empirical demonstration for sequential text generation} \label{empirical:sec}
We compare coordinated ensembles (Hot PATE) to a baseline of independent ensembles (Cold PATE) for sequential text generation as described in Framework~\ref{patesequential}. We evaluate on a natural and a curated task. We use  default temperature settings (e.g., $t=1$) and took a few minutes on a single A100 GPU.

\paragraph{Evaluation metrics:}
In our evaluation, at a given generation step, corresponding to a set of contexts $\context_i \cdot R$ for $i \in [n]$, we sample $r = 10^3$ vote histograms $(\mathbf{c}^{(h)})_{h=1}^r$ from each of the coordinated and independent ensembles. Each histogram aggregates votes from $n$ teachers, with each teacher contributing a single token. We denote by $c^{(h)}_j$ the count of token $j$ in the $h$th histogram (for $h \in [r]$).

We use a threshold value $T \in [n]$ on token counts as a \emph{proxy} for the \emph{inverse privacy cost}.\footnote{Under DP, a token can be reliably reported only when its count exceeds the scale of the noise introduced by the privacy-preserving mechanism (e.g., Gaussian or Laplace noise).}
We evaluate the utility of an ensemble type at a threshold value $T$ using the following measures:  
(i)~\emph{transferred probability mass (coverage)}: $\frac{1}{r} \sum_{h=1}^r \sum_{j \in V} c_j^{(h)} \boldsymbol{1}\{c_j^{(h)} \geq T\}$, the fraction of total votes assigned to tokens with frequency at least $T$;  
(ii)~\emph{transferred support size}: $|\{ j \in V : \max_{h \in [r]} c_j^{(h)} \geq T \}|$, the number of distinct tokens that appear above threshold in at least one histogram; and  
(iii)~average \emph{yield} per sample: $\frac{1}{r} \sum_{h=1}^r |\{ j \in V : c_j^{(h)} \geq T \}|$, the average number of above-threshold tokens per histogram.

\subsection{Natural task: synthetic instruction generation from a sensitive dataset of instructions}

\paragraph{Dataset:} We used
\textbf{Dolly 15K}~\citep{DatabricksBlog2023DollyV2}, a dataset of instructions and corresponding responses intended for training ``chat'' models like ChatGPT (in this work, we only use the instructions).
We filter the dataset to include only instructions without a context that are shorter than $256$ characters, resulting in a pool of about $10$K examples of the original $15$K.

\paragraph{Model and setup:} To generate synthetic instructions, we use the pre-trained \texttt{Llama-3.1-8B}~\citep{llama3} base model which is capable of in-context learning.
Specifically, when we present this model with a few instructions as context, it consistently generates another instruction.
The data was randomly partitioned to $n=512$ teachers with initial contexts $(\context_i)_{i\in[n]}$ of $10$ instructions. At each step of the generation, for a fixed partial response $R$, we sampled $r=1000$ histograms. We discuss the results, additional results are reported in~\Cref{Dolly15k_more:sec}.

\paragraph{Gains in utility:} \Cref{Dolly15k_diversity_transfer_small:fig} reports the coverage and support-size of the transfer for two  prefixes $R$. Coordinated ensembles attain high coverage and support even with $T=0.5n$ whereas independent ensembles transfer no diversity, only one token, for the first prefix and fail to even have yield  (return a relevant token) for the second prefix with $T > 0.17n$.
This is because independent ensembles can only transfer tokens when their average probability is $\gtrsim T/n$. 
\Cref{Dolly15k_morebenefits:fig} (left) shows the distribution of the maximum count in the histogram: for prefixes with diverse next-token, independent ensembles require much lower $T$ (high privacy cost) even for the basic utility of a yield.

\begin{figure}[h]  
    \centering  
    \includegraphics[width=0.49\textwidth]{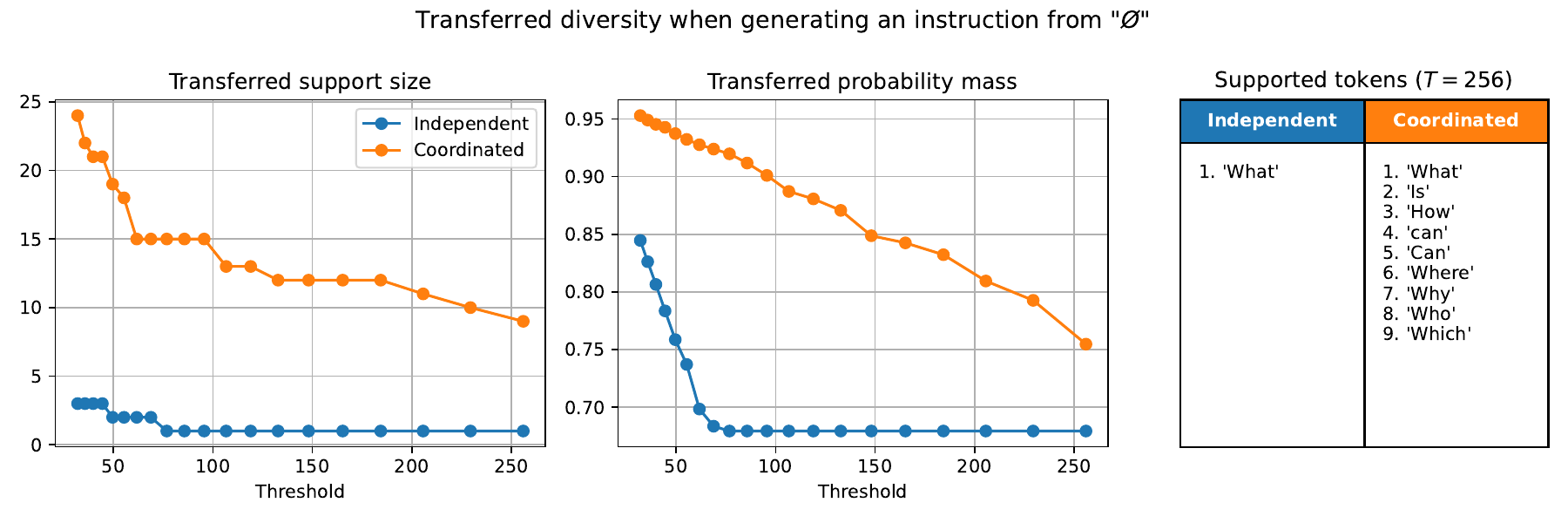}
    \includegraphics[width=0.49\textwidth]{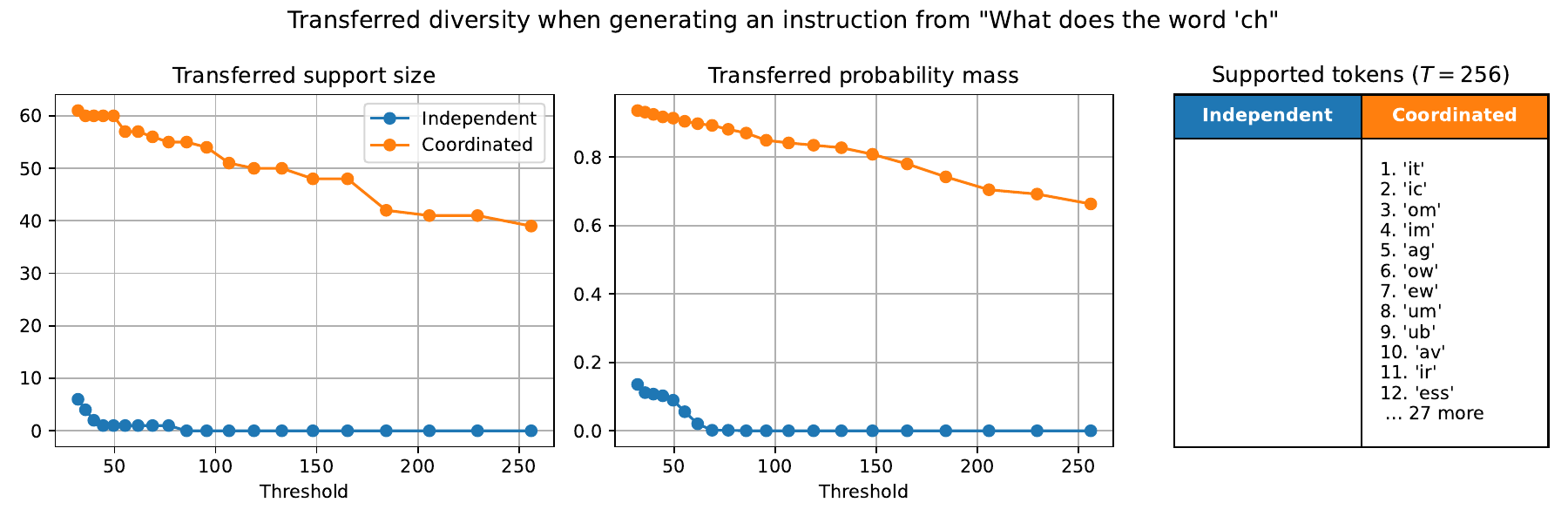}
    \caption{The transferred support-size and coverage  per threshold $T$ with coordinated and independent ensembles. Generating with prefixes $R=\emptyset$ (left) and $R=$``\texttt{What does the word 'ch}'' (right). \label{Dolly15k_diversity_transfer_small:fig}}
\end{figure}

\paragraph{Additional privacy analysis benefits:} 
The privacy noise scale (proxied by the threshold $T$) is a ``first order'' indicator for the privacy cost with basic privacy analysis. 
The variety of data-dependent privacy analysis techniques~\citep{DMNS06,DBLP:conf/iclr/PapernotSMRTE18,targetcharging:arxiv2023} benefit by ``not charging'' for failed aggregations and ``charging less'' when there is a larger \emph{margin} between the highest and second highest count.
We demonstrate that coordinated ensembles reap more of these benefits as well. 
\Cref{Dolly15k_morebenefits:fig} (middle) demonstrates that retries (with the same noise scale) are beneficial with coordinated ensembles, as the maximum count over several tries can be much larger than in a single try. With independent ensembles, counts concentrate around their expectations, and there is little benefits in retries.
Additionally, \Cref{Dolly15k_morebenefits:fig} (right) demonstrates large margins with coordinated ensembles. In independent ensembles, margins are smaller when diversity is higher as they simply reflect the difference in expected counts between the highest and second highest frequency in the average distribution. A large margin means that the output is much more \emph{stable} which is a significant benefit with a refined privacy analysis \cite{pmlr-v30-Guha13,BassilyTT:NEURIPS2018,targetcharging:arxiv2023}
(see \Cref{Dolly15k_more:sec}).

\begin{figure}[h]  
    \centering  
    \includegraphics[width=0.32\textwidth]{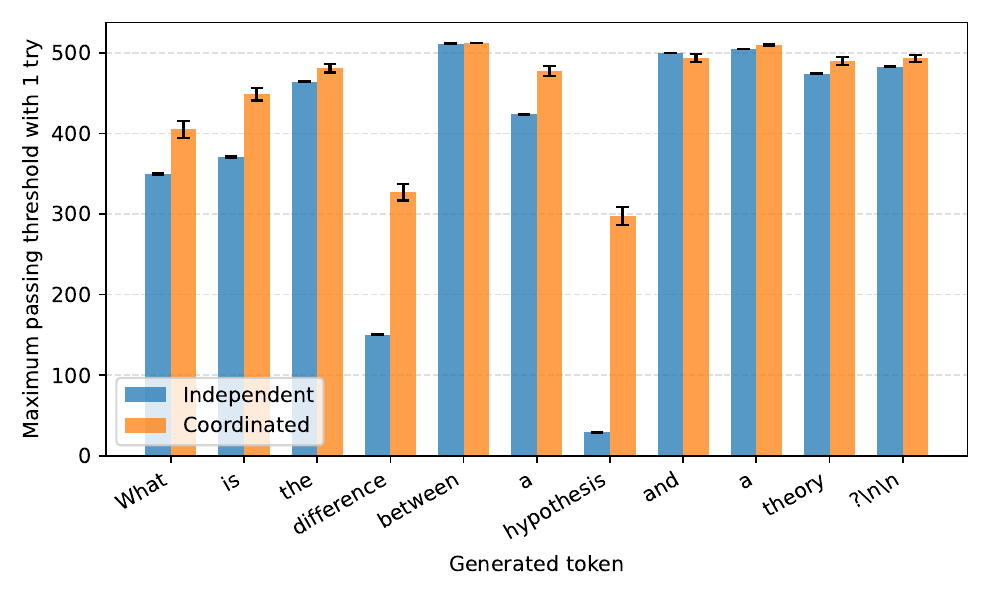}
    \includegraphics[width=0.32\textwidth]{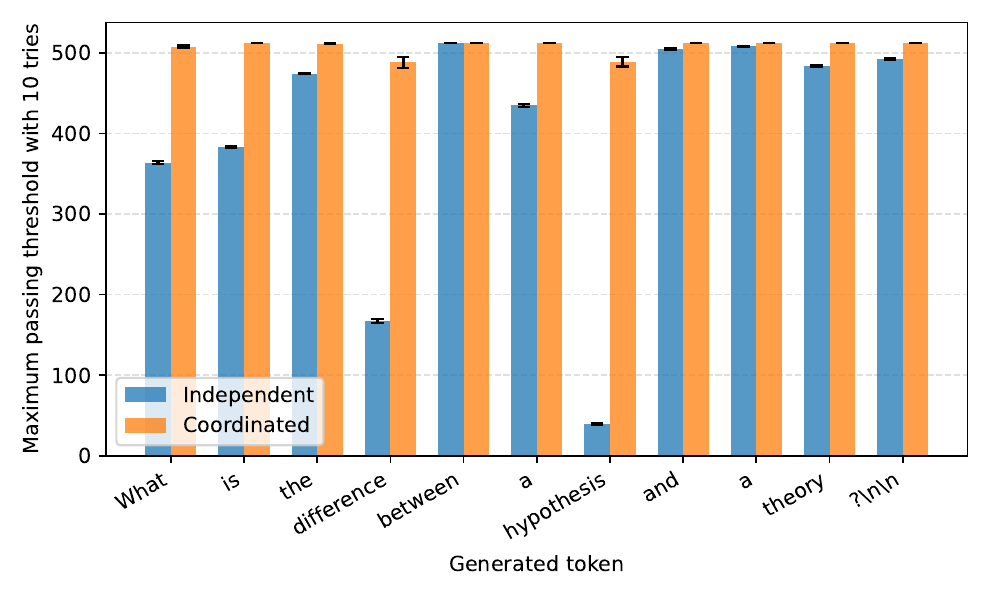} 
        \includegraphics[width=0.32\linewidth]{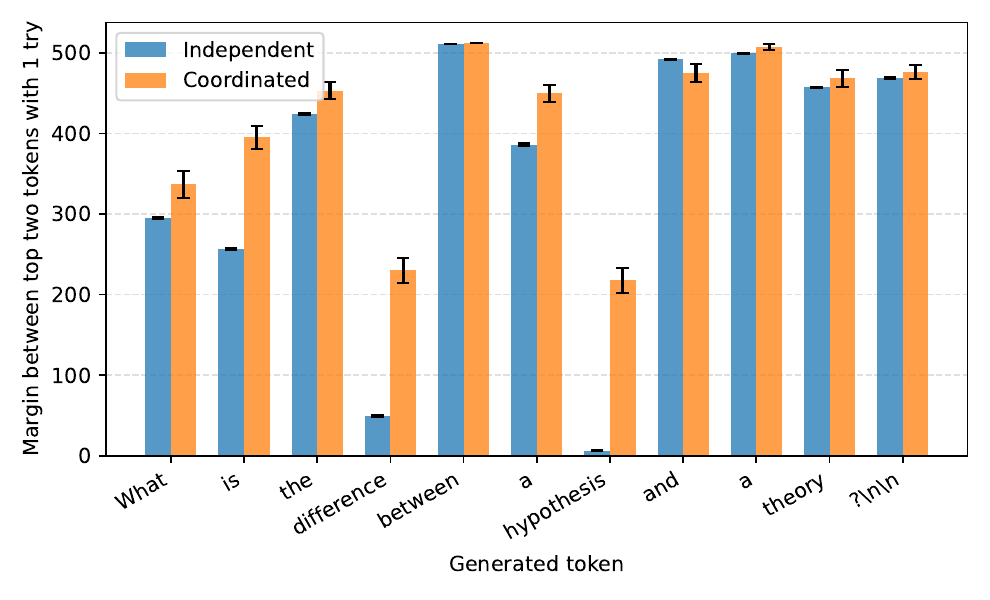}
    \caption{Maximum token count per histogram for different prefixes $R$ (left: single attempt, middle: max in 10 attempts). Margin between highest and second highest counts (right).\label{Dolly15k_morebenefits:fig}}
\end{figure}

\subsection{Curated task} \label{planetZdemo:sec}

We designed a task for which (i)~the pre-trained model has no prior exposure so that the ``sensitive'' context \emph{must} be used for generating a good response, (ii)~some mechanism is necessary for protecting the ``private'' information, and (iii)~diversity is tunable. For simplicity, the task is designed to return a single token. 
We use the instruction-tuned \texttt{Llama-3-8B}~\citep{llama3}~\citep{llama3,llama3modelcard} model. 

\vspace{-3pt}\paragraph{Prompts:}
For each experiment we use $n=10^4$ text prompts (teachers) of the form:
\footnotesize{
\begin{widequote}
\parbox{\linewidth}{\ttfamily{On planet Z, some numbers are edible.} <name> \ttfamily{from planet Z eats the following numbers for breakfast:} <random permutation of $C \cup \{ \text{<priv num>}\}$ >
\ttfamily{Give me an example breakfast number in planet Z. Respond with just the number.}}
\end{widequote}
}
The fixed set $C$ is a uniform sample of size $|C|=k$ from the set $\mathbbm{N}_{100}^{999} = \{100,\ldots,999\}$ of the $900$ 3-digit numbers. The strings <name> and <priv num> $ \sim U[\mathbbm{N}_{100}^{999}\setminus C]$ were generated separately for each prompt $i\in [n]$. For our purposes, the set $C$ is the information we want transferred whereas the <name>, <priv num>, and the ordering of $C$ in the prompt are prompt-specific and sensitive.  Each prompt is designed to have $k+1$ correct answers. We report results with $k\in\{20,100\}$.
\texttt{Llama-3-8B} uses a vocabulary $V$ of 128k tokens and 3-digit numbers are encoded as single tokens.  
The distributions $\boldsymbol{p}^{(i)}$ exhibited biases towards certain numbers and high variation. The probability of returning a 3-digit number was $0.995$; but the model generalized and returned with $~25\%$ probability numbers outside the input set. Note that our goal is simply to reflect what the model does, including biases and generalizing.
See \Cref{distproperties:sec} for further details.

\begin{figure}[h]  
    \centering  
    \includegraphics[width=0.22\textwidth]{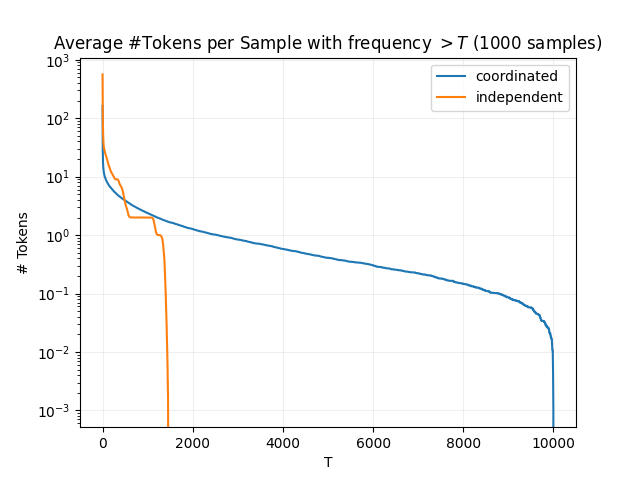}
    \includegraphics[width=0.22\textwidth]{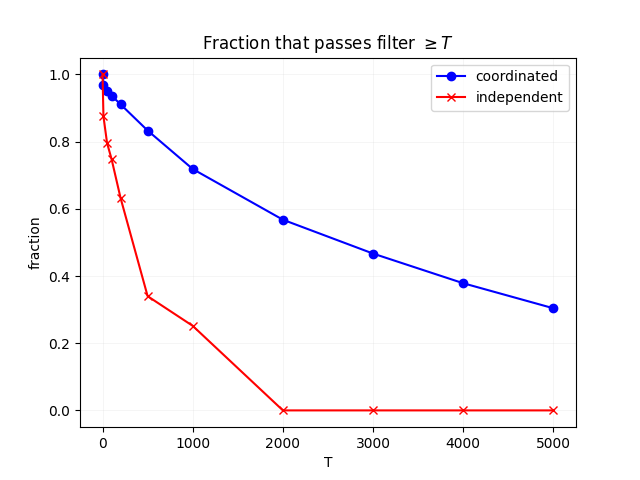}
    \includegraphics[width=0.22\textwidth]{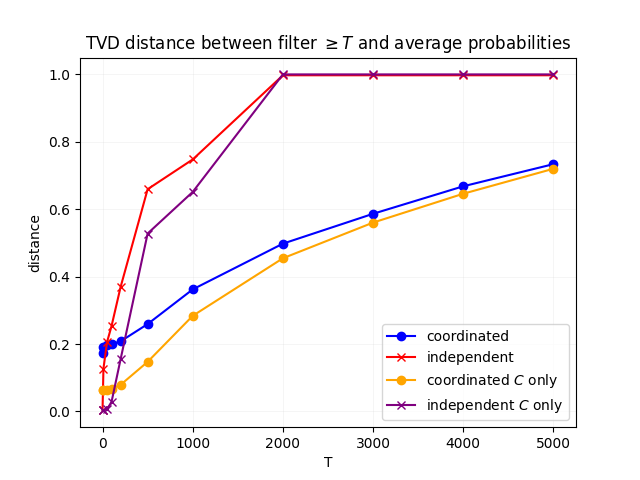}
    \includegraphics[width=0.22\textwidth]{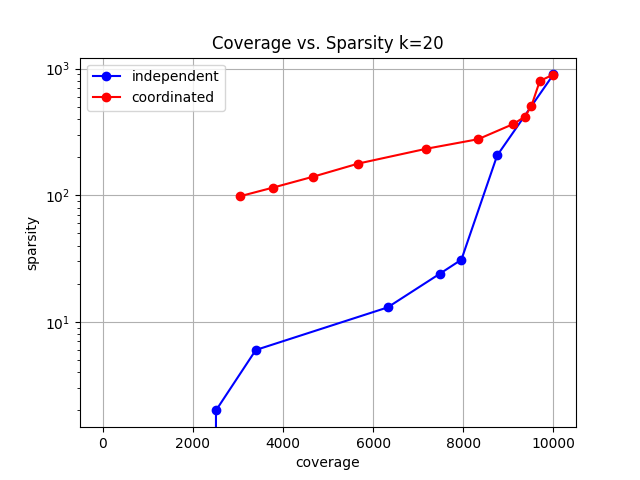}
        \\
    \includegraphics[width=0.22\textwidth]{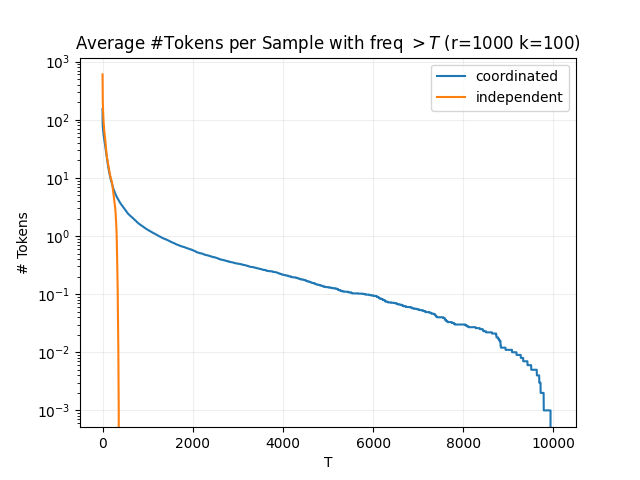}
    \includegraphics[width=0.22\textwidth]{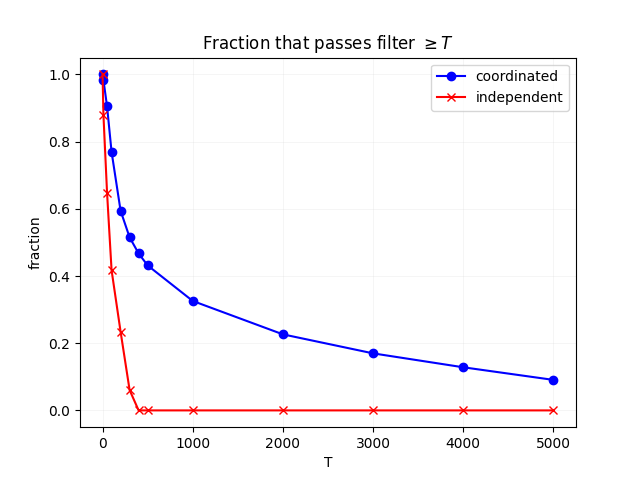}
    \includegraphics[width=0.22\textwidth]{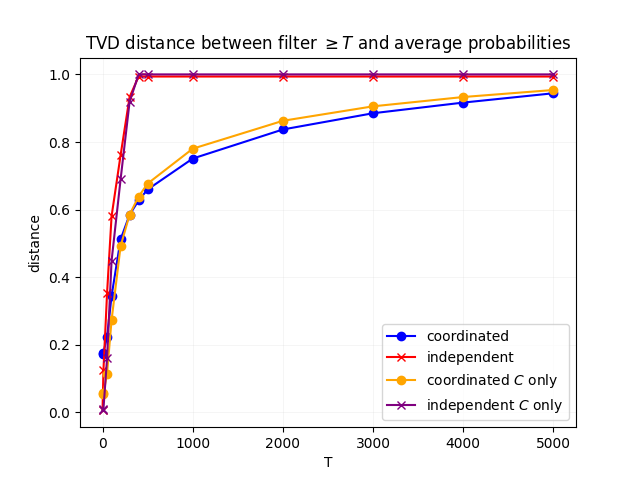}
    \includegraphics[width=0.22\textwidth]{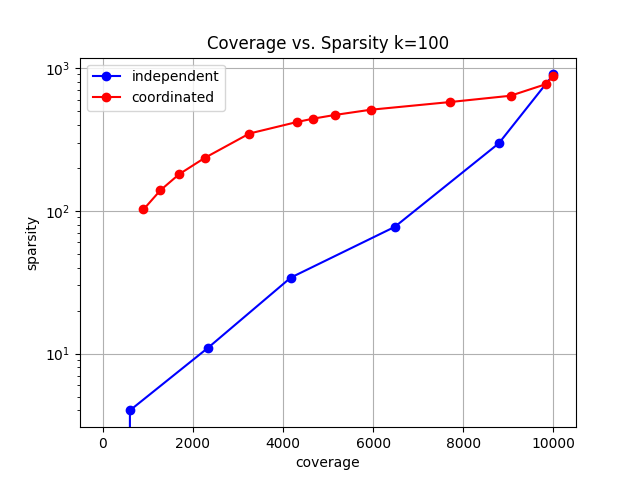}
    \caption{Left: Average yield per sample. Middle left: Coverage. Middle right: Total Variation Distance between transferred and average distribution; all as a function of $T$. Right: Coverage versus support-size with coordinated and independent ensembles, when sweeping the parameter $T$ (not shown). Top: $k=20$. Bottom: $k=100$.}
    \label{fig:frequency_cdf}  \label{fig:coveragesparsity} 
\end{figure}

\paragraph{Utility Evaluation}
\Cref{fig:frequency_cdf} (left) shows the average yield per sample
 for varying $T$. Observe that with independent ensembles, the maximum frequency $\max_{h,j\in V} c^h_j$ (over histograms and tokens) corresponds to the maximum token average probability: for $k=20$ it is $0.14 n$ and for $k=100$ it is $0.03 n$. With coordinated ensembles, the majority of samples contained a token with frequency above $0.25 n$ (that is much higher than the maximum token average probability). 
\Cref{fig:frequency_cdf} (middle right) reports the 
total variation distance from the average distribution and
\Cref{fig:frequency_cdf} (middle left) reports coverage 
for varying $T$. We observe much higher coverage with coordinated ensembles compared with independent ensembles. Additionally, we observe that the coverage corresponds to the $T/n$-robust part of the distribution shown in \Cref{fig:taurobust}, that is, it corresponds to what we can hope to transfer (see Theorem~\ref{diversityhighcount:thm} and Section~\ref{quantifytransferable:sec}).
For $k=100$, we see 20\% coverage with $T=2000$ with coordinated sampling but we need $T\leq 250$ with independent sampling ($8\times$ in privacy budget).
For $k=20$, we see 40\% coverage with $T=4000$ with coordinated sampling but we need $T\leq 1000$ with independent sampling ($4\times$ in privacy budget). Moreover, independent samples have 0\% coverage with $T\geq 1500$ for $k=20$ and with $T\geq 400$ for $k=100$ (when $T/n$ exceeds the maximum average frequency) whereas coordinated ensembles are effective with high $T$.
\Cref{fig:coveragesparsity} (right) shows a parametric plot (by threshold 
$T$, not shown) relating coverage and support size for coordinated and independent ensembles. Coordinated ensembles exhibit substantially greater diversity, achieving significantly larger support sizes at the same coverage levels, often with an order-of-magnitude gap compared to independent ensembles.

\section*{Conclusion}
We introduced \emph{Hot PATE}, an enhancement of the PATE framework that achieves significantly higher utility and effective diversity transfer for tasks with diverse outputs. We demonstrated orders-of-magnitude improvements over the baseline ``cold'' PATE in in-context learning scenarios, such as generating privacy-preserving synthetic data records from sensitive inputs.

Our core technical contribution is a formal notion of a robust, diversity-preserving aggregation of distributions, along with the proposal of \emph{coordinated ensembles}, a method that enables both high utility and diversity preservation under the same privacy budget. Compared to Cold PATE, which uses independent ensembles, coordinated ensembles produce voting histograms with properties more favorable to privacy analysis, including higher maximum counts and greater margins.

Finally, our design supports not only differential privacy but also lighter forms of protection that offer higher utility for tasks such as synthetic record generation. These relaxed goals include robustness to a small number of outliers and suppression of idiosyncratic subsequences—patterns that depend on one or a few examples and do not arise from generalization—while preserving diversity. Such protections can be achieved with greater utility by using fewer teachers, a lower robustness threshold, and omitting DP noise from the vote counts.

{
\bibliographystyle{plainnat}
\bibliography{mybib,cycle}
}

\appendix

\section{Related work} \label{related:sec}

\begin{figure}[h]
\centering

\begin{subfigure}[t]{0.48\textwidth}
        \caption{\small Ensemble types for Hot PATE. Homogeneous ensembles use representative data splits. Heterogeneous ensembles use user-specific data (i.e., ``privacy units'').}
    \label{homohetro:fig}
    \includegraphics[width=0.85\linewidth]
    {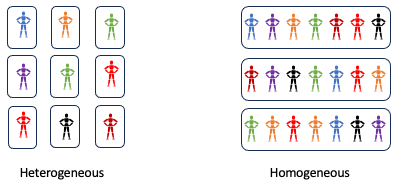}
\end{subfigure}
\hfill
\begin{subfigure}[t]{0.48\textwidth}

    \caption{\small Diversity \emph{within} teachers arises from shared knowledge; \emph{across} teachers, from knowledge specific to few teachers. With coordinated ensembles, high $\tau$ value suffices for the former.}
    \label{withinacross:fig}
        \includegraphics[width=\linewidth]{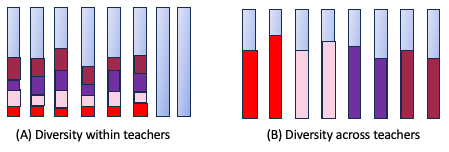}
\end{subfigure}

\vspace{-8pt}
\end{figure}

We place our contribution in the context of prior and independent concurrent works on PATE adaptations for text generation. These works either (i) did not consider diversity or (ii) recognized it and the importance of transferring it but proposed aggregation schemes where utility decreases with diversity together with methods to limit diversity as to mitigate this perceived privacy-diversity trade-off.  In some of these designs, our Hot PATE ensemble samplers can be used as a plug-in replacement to improve utility.

\citet{seqPATE_Neurips2022} proposed a PATE extension for sequential text generation tasks in diverse settings. Their approach limited diversity: Average the teachers distributions and then  truncate the tail by keeping only the top-$k$ frequencies. The work of
\citet{tang2024:iclr2025} (independent concurrent) took a similar approach. The distribution of each teacher is reduced to a uniform distribution over its top-$k$ token probabilities. An independent ensemble is then applied to this set of reduced distributions. This  design limits diversity to $k$ and modifies the distributions and still incurs the privacy-utility trade-off of independent ensembles.

\citet{duan2023flocks}
explored adaptations of PATE for in-context learning via prompting, where 
each part $D_i$ of the data is used to create a text prompt $\context_i$. The ensemble is then used to label curated queries. But while some design elements were tailored to LLMs, the workflow and privacy analysis were identical to Cold PATE~\citep{DBLP:conf/iclr/PapernotSMRTE18}, and in particular, did not consider diverse responses. 

\citet{wu2023privacypreserving} (independent concurrent work) proposed approaches to private aggregation for in-context learning with diversity. 
They proposed to reduce the perceived diversity in sequentially-generated text outputs by different teachers by clustering together outputs that are semantically equivalent and aggregating each cluster in a semantic space. This essentially reduces the dimensionality of the output space. The aim then is to extract and transfer this common semantics in a privacy preserving way: Map responses into a common low  dimensional  embedding space and privately aggregate embedding vectors or identify frequent keywords in diverse teachers' responses. 
The limitations are that the approach only addresses same-semantics diversity and offers no solution for semantically-distinct diverse responses and are subjected to a privacy diversity trade-off. Additionally and importantly, they require hand crafted tools to map and curate responses back and forth from a semantic space.
The added value of such a mapping approach, if combined with coordinated ensembles, depends  on whether the reduction of diversity that is achieved  is within or across teachers. 
The \emph{across} variety (see Figure~\ref{withinacross:fig}~(B)), where the knowledge of each teacher only contains one or limited variations of the same semantic, is not eliminated by ensemble coordination and thus there is added value by addressing it via other means. 
The \emph{within} variety (see Figure~\ref{withinacross:fig}~(A)) is handled effectively by ensemble coordination and can be transferred fluidly with no privacy loss and without the need for mitigation of diversity via additional engineering. We suspect that for the in-context learning use case, and for semantic similarity that can be captured by tools external to the model (such as an embedding), the diversity eliminated is anyhow encapsulated in the base model and thus present in most teacher distributions. That is, we expect the diversity to overwhelmingly be the ``within'' variety.

\citet{lin2024differentially,xie2024differentially} (independent concurrent work) proposed an approach called \emph{private evolution} for generating synthetic examples from private examples. The design used heterogeneous teachers, where each is a single private example. Initially, the base model is sampled to generate a collection of candidate (full) responses.  The teachers then vote on candidates by nearest neighbor to their sensitive example in an embedding space. The next iteration then consist of a weighted sample from a privacy-preserving vote histogram. The resulting candidates are then used to generate a new set of candidates by the base model that are closer to the private distribution. This is repeated for multiple iterations. The inherent drawbacks of this approach, compared with sequential text generation, are that it is not suitable for transferring specific patterns (such as extension numbers for specific departments within an org) that are common in the private data but do not exist in the pre-training data and are not memorized by the model and can not be generalized by it. Additionally, it requires a number of candidates that is exponential in the intrinsic dimensionality of the candidate space. Therefore the realm of applications is different than Hot Pate and they are not directly comparable.

\citet{PapernotAEGT:ICLR2017} (Appendix B.1) discussed using additional outputs (beyond just the noisy the maximizer) in the teachers' votes histogram for distillation tasks. They concluded that it is beneficial for utility but does not justify the privacy loss. Despite the superficial resemblance, this is very different from  what we do as we capture diversity in the generation of the histogram where we 
``force'' the teachers to agree but there is a distribution on the agreement token.

Finally, there are multiple innovative adaptations of PATE to non-categorical settings (aggregate vectors rather than labels) applied with generative models. The works we are aware of address different problems and use different techniques than Hot PATE. For example, image generation using generative adversarial networks (GAN):
\citet{Jordon2018PATEGANGS} proposed to train
student discriminator using a cold-PATE like labeling approach.
\citet{NEURIPS2021_171ae1bb} proposed to train a student generator by aggregating the gradients produced by teachers discriminators.
Notably, as with Hot PATE, this design does not require external generation of examples in order to facilitate transfer. Instead, it uses the built-in property of generators to produce examples from random strings.

\section{Properties of Coordinated Ensembles} \label{cooprop:sec}

\begin{proof}[Proof of \Cref{agreementprob:claim}]
  The first statement in the claim follows from the denominator satisfying
\begin{equation}\label{denombound:eq}
1\leq \sum_j \max\{p^{(i)}_j,p^{(k)}_j \} \leq 2- \max\{p^{(i)}_j,p^{(k)}_j \}\leq 2\ .
\end{equation}
The inequality follows using the more refined upper bound~\eqref{denombound:eq} on the denominator. 
\end{proof}

It follows from~\Cref{agreementprob:claim} that
the overall agreement probability of the two teachers $i,i'$ (over all tokens) is:
\[
\Pr_{\boldsymbol{y}\sim Y_{\mathrm{coo}}}[y_i = y_{i'}] = \frac{\sum_j \min\{p^{(i)}_j,p^{(i')}_j \}}{\sum_j \max\{p^{(i)}_j,p^{(i')}_j \} } = J(\boldsymbol{p}^{(i)},\boldsymbol{p}^{(i')})\ ,
\]
where
$J(\boldsymbol{p},\boldsymbol{q}):=\frac{\sum_{j\in V}\min\{p_j,q_j\}}{\sum_{j\in V}\max\{p_j,q_j\}}=\frac{1 - \mathrm{TV}(\boldsymbol{p},\boldsymbol{q})}{1 + \mathrm{TV}(\boldsymbol{p},\boldsymbol{q})}$ is the \emph{weighted Jaccard similarity}~\citep{jaccard1901etude} of the distributions $\boldsymbol{p},\boldsymbol{q}$.

In particular, when two teacher distributions are identical, the samples are the same 
\[\boldsymbol{p}^{(i)} = \boldsymbol{p}^{(k)} \implies
\Pr_{\boldsymbol{y}\sim Y_{\mathrm{coo}}} [y_i =  y_k]=1.
\]

\begin{lemma} [diversity transfer]\label{diverseAgg:Lemma}
    For any token $j$ and $p,q\in [0,1]$,
    \[
\Pr_{\boldsymbol{c}\sim \mathcal{H}_{\mathrm{coo}}} \left[c_j \geq \left\lfloor p\cdot \sum_{i\in n} \boldsymbol{1}\{p^{(i)}_j \geq q\}\right\rfloor \right] \geq \frac{1}{2} \ln(1/p) q\ .
    \]
\end{lemma}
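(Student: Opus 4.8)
Fix a token $j$. Let $S := \{i \in [n] : p^{(i)}_j \geq q\}$ and $m := |S| = \sum_{i\in[n]}\mathbbm{1}\{p^{(i)}_j \geq q\}$. The plan is to lower bound the probability that at least $pm$ of the teachers in $S$ cast their vote for token $j$, using only the randomness of the shared variable $u_j \sim \Exp[1]$ and conditioning on all other $(u_\ell)_{\ell \neq j}$. The key observation is that the events ``$y_i = j$'' for $i \in S$ become \emph{monotone} and strongly correlated once we fix the other coordinates: teacher $i$ votes for $j$ iff $p^{(i)}_j / u_j \geq p^{(i)}_\ell / u_\ell$ for all $\ell \neq j$, i.e. iff $u_j \leq p^{(i)}_j \cdot \min_{\ell\neq j} (u_\ell / p^{(i)}_\ell) =: p^{(i)}_j \cdot A_i$, where $A_i > 0$ depends only on $(u_\ell)_{\ell\neq j}$. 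So conditioned on $(u_\ell)_{\ell\neq j}$, teacher $i$ votes for $j$ precisely when $u_j$ falls below the threshold $p^{(i)}_j A_i \geq q A_i$ (using $i \in S$).

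First I would reduce to a one-dimensional statement: conditioned on $(u_\ell)_{\ell\neq j}$, we want to show that with probability at least $\frac12 \ln(1/p) q$ over $u_j \sim \Exp[1]$, at least $pm$ of the thresholds $\{p^{(i)}_j A_i : i \in S\}$ exceed $u_j$. Now sort the teachers in $S$ so that $p^{(i)}_j A_i$ is decreasing; if $u_j$ is below the $\lceil pm\rceil$-th largest threshold, then at least $pm$ teachers vote for $j$. That threshold is at least the smallest value we can guarantee: since each threshold is $\geq q A_i$, I would instead look at it more carefully — actually it suffices that $u_j \leq t^\star$ where $t^\star$ is the $\lceil pm \rceil$-th largest of $\{p^{(i)}_j A_i\}_{i\in S}$. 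The event $\{u_j \leq t^\star\}$ has probability $1 - e^{-t^\star} \geq \min(t^\star, \text{something})$; but $t^\star$ can be small, so this crude bound is not enough, and here is where the main work lies.

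The cleaner route, which I expect to be the heart of the argument, is to \emph{not} condition but to directly use the scale-invariance / exponential structure. Rescale: teacher $i$ votes for $j$ iff $u_j / p^{(i)}_j = \min_\ell u_\ell / p^{(i)}_\ell$. Let $E_i := u_j / p^{(i)}_j$; marginally $E_i \sim \Exp[1] \cdot$ (rate $p^{(i)}_j$), i.e. $E_i \sim \Exp[p^{(i)}_j]$, but they are all coupled through the single variable $u_j$. Condition on the value $r$ of $R_i := \min_{\ell \neq j} u_\ell / p^{(i)}_\ell$ — wait, this still couples across $i$. The approach I would actually push through: condition on $u_j = s$. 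Then teacher $i \in S$ votes for $j$ iff $u_\ell / p^{(i)}_\ell \geq s / p^{(i)}_j$ for all $\ell \neq j$, which given $i \in S$ is implied by $u_\ell / p^{(i)}_\ell \geq s/q$ for all $\ell$, i.e. by $u_\ell \geq s p^{(i)}_\ell / q$ for all $\ell \neq j$. Taking the union over $i\in S$ is awkward because the $p^{(i)}_\ell$ differ. So the realistic plan is to integrate over $u_j = s$ from $0$ to some cutoff and bound, for the \emph{weakest} teacher, the conditional probability that it still votes for $j$; more precisely, $\Pr[c_j \geq pm] \geq \int_0^\infty e^{-s}\,\Pr[\,\#\{i\in S : i \text{ votes } j\} \geq pm \mid u_j = s\,]\,ds$, and I would lower bound the inner probability by $\mathbbm{1}[s \leq$ (a quantile statement)$]$ using that the vote-indicators, conditioned on $u_j = s$, are increasing functions of the independent variables $(u_\ell)_{\ell\neq j}$ and hence positively associated (FKG), so their count exceeds its "typical" value with the stated probability — but since we only need a \emph{lower} tail on the probability (not concentration), the honest simplification is: for $s$ small, \emph{every} teacher in $S$ votes for $j$ with decent probability, and integrating $e^{-s}$ against the right threshold yields $\int_0^{q}\!\big(\text{factor}\big)\,ds$-type quantity producing the $\ln(1/p)$.

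\textbf{Main obstacle.} The crux — and where I expect to spend the real effort — is handling the \emph{correlation structure among teachers in $S$} to go from "each teacher in $S$ individually votes for $j$ with probability $\Theta(q)$" to "at least a $p$-fraction of them simultaneously do, with probability $\geq \frac12\ln(1/p)\,q$." The factor $\ln(1/p)$ strongly suggests conditioning on a latent variable (e.g. the order statistic $u_j$, or equivalently $\min_i u_\ell/p^{(i)}_\ell$ across the complement) and observing that once this latent value is in a favorable range of measure $\sim \ln(1/p)\cdot q$, a $p$-fraction of $S$ deterministically votes for $j$; the $\frac12$ then comes from a worst-case split/rounding loss. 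I would look for the precise statement: sort $i \in S$ by $p^{(i)}_j$; condition on $G := \min_{\ell\neq j} u_\ell / (\max_{i\in S} p^{(i)}_\ell)$ or similar; show that $\{c_j \geq pm\} \supseteq \{u_j \le q \cdot p \cdot G\}$ roughly, wait that gives probability $\sim pqG$ not $\ln(1/p)q$ — so the $\ln(1/p)$ must instead come from the \emph{distribution} of the relevant threshold rather than a point estimate, i.e. from integrating $\Pr[u_j \le t]\,dt$ against the density of a ratio of exponentials. Pinning down exactly which latent quantity to condition on so that the bound comes out as $\frac12\ln(1/p)\,q$ with no dependence of the frequency-fraction $p$ on $q$ is the technical heart; everything else (marginals are $\boldsymbol p^{(i)}$, monotonicity in $(u_\ell)$, the exponential CDF bound $1-e^{-t}\ge t e^{-t}$ or $\ge \min(t,1-1/e)$) is routine.
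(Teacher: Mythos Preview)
Your plan arrives at the paper's structure --- condition on $u_j$, observe that the vote indicators are monotone in the remaining $(u_\ell)_{\ell\neq j}$ and hence positively associated --- but you stop short of the two concrete computations that close it, and your guess about the source of the $\tfrac12$ is off. The paper sets $x:=u_j/q\sim\Exp[q]$. For $i\in S$ one has $u_j/p^{(i)}_j\le x$, while $\min_{\ell\neq j}u_\ell/p^{(i)}_\ell$ is exponential with rate $1-p^{(i)}_j\le 1-q$; hence $\Pr[y_i=j\mid x]\ge e^{-x(1-q)}$, which is $\ge p$ whenever $x<\ln(1/p)/(1-q)$. The probability of that event under $x\sim\Exp[q]$ is $1-p^{q/(1-q)}$, which the paper lower-bounds by $q\ln(1/p)$. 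That single line is the ``latent quantity'' you were hunting for; there is no integration or order-statistic analysis to do --- the factor $\ln(1/p)\,q$ is just the exponential CDF evaluated at a fixed cutoff.

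The $\tfrac12$ is not a rounding loss: it is a median-of-binomial step. Conditioned on $x$ in the good range, each of the $m$ indicators has marginal $\ge p$, and the paper asserts (leaning on the positive correlation you identified) that the count stochastically dominates $\Bin[p,m]$; since a binomial's median is at least its mean, $\Pr[c_j\ge pm\mid x]\ge\tfrac12$. Your detour at ``taking the union over $i\in S$ is awkward'' is where you lose the thread: no joint estimate across teachers is needed, only the uniform per-teacher marginal $\ge p$ together with this domination/median argument.
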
  
\begin{proof} 
    Let $i$ be such that $p^{(i)}_j \geq q$.  Fix the sampled min value $x\sim\Exp[q]$ for $q$ part of the probability of $j$. The distribution of the remaining part is $y\sim \Exp[1- p^{(i)}_j]$ which is stochastically smaller than
    $\Exp[1- q]$.
    We get that 
    \begin{align*}
        \Pr[y_i =j] &\geq 
    \Pr_{y\sim \Exp[1-q]} [y> x] = e^{-x (1-q)}\ .
    \end{align*}
    
Fix $p\in [0,1)$. It follows that the probability
that $\Pr[y_i =j]$, conditioned on 
$x<\frac{-\ln p}{1-q}$
is at least $e^{-x (1-q)} \geq p$. 
The respective random variables $y_i$ on 
different teachers that may share part of the distribution can only be nonnegatively correlated.
Therefore, if there are $c_{j,q}$ teachers with $p^{(i)}_j \geq q$ then the distribution of the number of teachers with $y_i = j$ is stochastically larger than $\Bin[e^{-x (1-q)},c_{j,q}]$,  which for any 
$x\leq \frac{-\ln p}{1-q}$ is stochastically larger than $\Bin[p,c_{j,q}]$.
The median of the Binomial distribution $\Bin[p,c_{j,q}]$
with probability at least $1/2$ is larger than $\lfloor p c_{j,q}\rfloor$.  Therefore, with this conditioning on $x$,
there are at least $\lfloor p c_{j,q}\rfloor$ teachers with $y_i = j$.
\begin{equation}\label{conditioned:eq}
\Pr_{ (y_i)_{i\in [n]} \mid x<\frac{-\ln p}{1-q}}[c_j \geq \lfloor p c_{j,q}\rfloor] \geq 1/2\ .
\end{equation}

The event $x<\frac{-\ln p}{1-q}$ occurs
with probability at least 
\[\Pr_{x\sim \Exp[q]} [x < \frac{-\ln p}{1-q}] = 1- e^{(\ln p)q/(1-q)}\geq -(\ln p) q \ .\]
Combining with \eqref{conditioned:eq}, we obtain the claim in the statement of the Lemma.
\end{proof}
To establish relevance we show that high frequency must have a ``backing.'' The following is immediate
from \eqref{expectedfreq:eq} and Markov's inequality (and is tight in the sense that for any $T$ there are distributions where equality holds):
\begin{lemma} [relevance]\label{nojunk:Lemma}
    For any token $j$ and $T$,
    \[
\Pr_{\boldsymbol{c}\sim \mathcal{H}_{\mathrm{coo}}} \left[c_j \geq T\right] \leq \frac{1}{T} \sum_{i\in[n]} p^{(i)}_j\ .
    \]
\end{lemma}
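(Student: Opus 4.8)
The plan is to read the bound off from a single application of Markov's inequality, so the only real work is assembling pieces the text has already put in place. Fix a token $j$ and a threshold $T$; we may assume $T>0$, as otherwise the statement is vacuous. First I would note that $c_j=\sum_{i\in[n]}\mathbbm{1}\{y_i=j\}$ is a non-negative random variable (a deterministic function of the shared randomness $\rho$), and that its expectation is given by \eqref{expectedfreq:eq}, i.e. $\E_\rho[c_j]=\sum_{i\in[n]}p^{(i)}_j$. That identity is itself just linearity of expectation combined with the marginal-preservation property of coordinated sampling already recorded above: the coordinated sample $y_i=\arg\max_j p^{(i)}_j/u_j$ has marginal law $\boldsymbol{p}^{(i)}$, hence $\Pr_\rho[y_i=j]=p^{(i)}_j$ for every $i$. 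Markov's inequality then yields $\Pr_\rho[c_j\ge T]\le \E_\rho[c_j]/T=\frac{1}{T}\sum_{i\in[n]}p^{(i)}_j$, which is exactly the claim.

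For the tightness remark I would exhibit, for each positive integer $T\le n$ and each $p\in[0,1]$, teacher distributions attaining equality. Let teachers $1,\dots,T$ all hold the same distribution that puts mass $p$ on token $j$ and spreads the remaining $1-p$ over other tokens (identically across these $T$ teachers), and let teachers $T+1,\dots,n$ put mass $0$ on $j$. Since teachers $1,\dots,T$ have identical distributions, coordinated sampling makes them produce identical samples (almost surely, as the $u_j$ are continuous), so with probability exactly $p$ all of them output $j$ and otherwise none of them does, while the remaining teachers never output $j$. Hence $c_j=T$ with probability $p$ and $c_j=0$ otherwise, so $\Pr_\rho[c_j\ge T]=p=\frac{1}{T}(Tp)=\frac{1}{T}\sum_{i\in[n]}p^{(i)}_j$. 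For non-integer $T$ one gets equality with $\lceil T\rceil$ in place of $T$, since $c_j$ is integer-valued; this is the sense in which the bound is tight.

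There is essentially no obstacle: the upper bound holds for \emph{any} joint coupling of the $y_i$ with the prescribed marginals $\boldsymbol{p}^{(i)}$, so no property of the coordination is used for it — which is precisely the contrast with Lemma~\ref{diverseAgg:Lemma}, where the coordination is what drives the lower bound. The only points requiring a little care are invoking \eqref{expectedfreq:eq} correctly and handling the integrality of $T$ in the tightness construction, both of which are routine.
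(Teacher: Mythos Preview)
Your proposal is correct and matches the paper's approach exactly: the paper states that the lemma is ``immediate from \eqref{expectedfreq:eq} and Markov's inequality,'' which is precisely what you do. Your explicit tightness construction (identical distributions on $T$ teachers) is a nice addition that the paper only asserts without writing out.
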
 

\begin{proof}[Proof of \Cref{diversityhighcount:thm} (diversity properties)]
We first consider the $\gamma$ parameter.
From \Cref{samemarginal:claim}, for each $j$, $\E_{\mathrm{coo}}[c_j] = n \bar{p}_j$. Therefore, if for $\gamma \geq 1$ our aggregator returns $j$ with probability at most $\gamma c_j/n$, it satisfies the relevance condition of \Cref{def:diversity} with the respective $\gamma$ value.

For $\textsc{TArgMax}_{\lceil n/2 +1\rceil}$, a token $j$ is returned if and only if $c_j > n/2$. Therefore we get $\gamma=2$.
For $\textsc{TWS}_{\tau/2,\gamma}$, a token $j$ is returned with probability $\frac{c_j}{\max\{M_{\tau/2}(\boldsymbol{c}),n/\gamma} \ge \gamma c_j/n$.

We next establish the claim for the transfer property of 
\Cref{def:diversity}.
For $\textsc{TWS}_{\tau/2,\gamma}$, 
consider 
a token $j$ for which $m \geq \tau$ teachers $i$ have $p^{(i)}_j > q$. Then from
\Cref{diverseAgg:Lemma} with $p=1/2$ it follows that
$\Pr[c_j \geq m/2] \geq (1/2)\log(2) q \ge 0.34 q$. In this case, 
the probability that it is the output is at least $\frac{c_j}{n}\geq \frac{m}{2n}$. Therefore, the overall probability that it is returned by $\textsc{TWS}_{\tau/2,\gamma}$ is at least $(0.34/2)q \frac{m}{2n} = \beta qm/n$ for $\beta = 0.17$.


For homogeneous ensembles via  $\textsc{TArgMax}_{\lceil n/2 +1\rceil}$ aggregator, we assume $\tau \gg n/2$. We apply \Cref{diverseAgg:Lemma} with $p=n/(2\tau)$ we obtain $\beta=(1/2)\log(2\tau/n)$.
\end{proof}

\section{Further details for the instruction generation demonstration}\label{Dolly15k_more:sec}

\paragraph{Diversity transfer:} Diversity transfer with coordinated and independent ensembles for additional prefixes $R$ are reported in \Cref{Dolly15k_diversity_transfer:fig}. We observe that with coordinated ensembles, more of the probability mass is transferred and it is much more diverse.

\paragraph{Maximum count:}
\Cref{Dolly15k_repetitions_more:fig} (left) shows the distribution of the maximum count for additional prefixes; (right) shows the maximum count over 10 tries (histograms generated with different samplings of shared randomness). 
We observe that with coordinated ensembles, the maximum token count is consistently at or above $0.6 n$ with one try and above $0.9n$ for the maximum over 10 tries. In particular, there is significant benefit to repetitions. As for independent ensembles, we observe that 
when there is high diversity (many appropriate choices for the next-token), the maximum count is frequently below $0.2 n$ and there is nearly no benefits for retries. As explained, the noise scale of the DP aggregation depends linearly in this maximum count. This means that even with basic privacy analysis (which does not benefit from margin), coordinated ensembles require over $4$ times the number of teachers (and data) for the \emph{basic utility} of producing an instruction. As demonstrated, the produced instruction by independent ensembles would also be much less diverse. Furthermore, by using privacy accounting with
\texttt{BetweenThresholds}~\citep{targetcharging:arxiv2023,BunSU:SODA2017} we can generate a number of tokens that is exponential in the number of teachers when histograms are such that the maximum count is either very high (say above $0.6n$) or very low (say below $0.4n$).



\paragraph{Margin:}
The vote histograms generated by coordinated ensembles benefit not only a higher \emph{maximum} count but also from a high \emph{margin} between the highest count and second highest count tokens. 
Additional results that show the size of the margin between the highest and second highest counts in the histogram are reported in  \Cref{Dolly15k_repetitions_more:fig}.
We observed a margin that is consistently above $0.4n$, where $n$ is the number of teachers, with coordinated ensembles whereas a very small margin occurs frequently with independent ensembles.

\paragraph{Benefits of high margin:}
We explain how high margins are leveraged in data dependent data analysis using the techniques of~\cite{BassilyTT:NEURIPS2018}.\footnote{See Algorithm 3 in \cite{BassilyTT:NEURIPS2018}.} 
Similar benefits are reaped via other methods such as~\citep{targetcharging:arxiv2023}.
Informally, their technique is based on a coupling argument between the {\em distance to instability} framework of \cite{pmlr-v30-Guha13} and the {\em sparse vector} technique of \cite{DNRRV:STOC2009}. More specifically, the algorithm of~\cite{BassilyTT:NEURIPS2018} uses the sparse vector technique in order to continuously verify that the number of ``unstable queries'' seen so far does not cross some predefine threshold $k$; and uses the distance to instability framework to answer queries as long as the number of unstable queries is indeed below $k$. If we assume, as is supposed by our experiments, that the margin in our algorithm is consistently above $\eta\, n$ (in our experiments we observed $\eta=0.4$), then it suffices to assert that $\eta\, n\geq \frac{32\sqrt{2}}{\eps}\log\left(\frac{4m}{\delta}\right)\sqrt{\log\left(\frac{2}{\delta}\right)}$ in order to generate $m$ tokens while satisfying $(\eps,\delta)$-DP.
 This means that (with high margin histograms) the number of tokens generated for given privacy parameters increases \emph{exponentially} with the number of teachers. This can be contrasted with only a quadratic increase with the number of teachers obtained using standard analysis with advanced composition.

\begin{figure}[h]  
    \centering  
    \includegraphics[width=0.47\textwidth]{plots/Dolly15k/diversity_transfer/step_0.pdf}
    \includegraphics[width=0.47\textwidth]{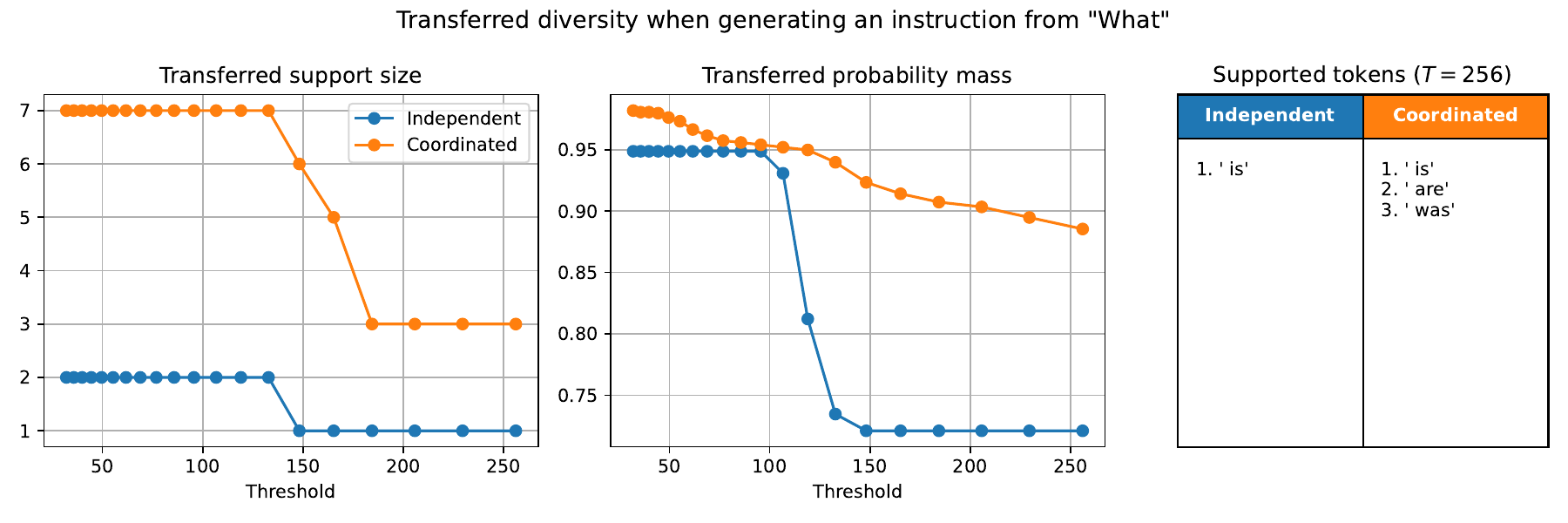}
    \includegraphics[width=0.47\textwidth]{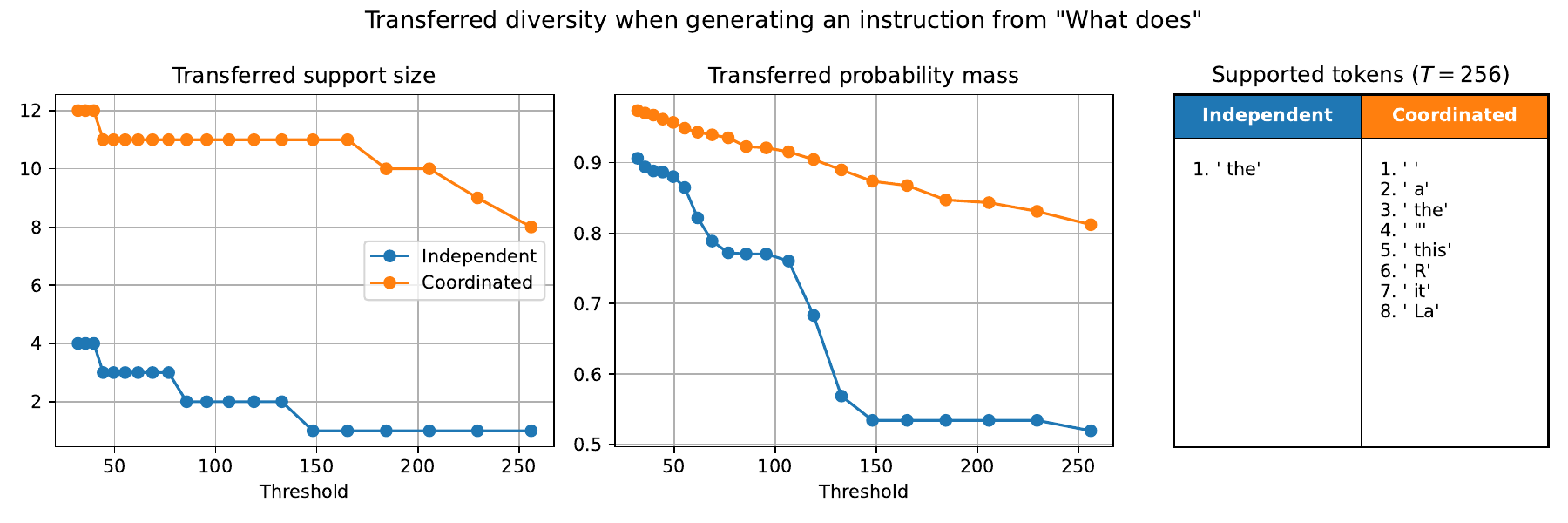}
    \includegraphics[width=0.47\textwidth]{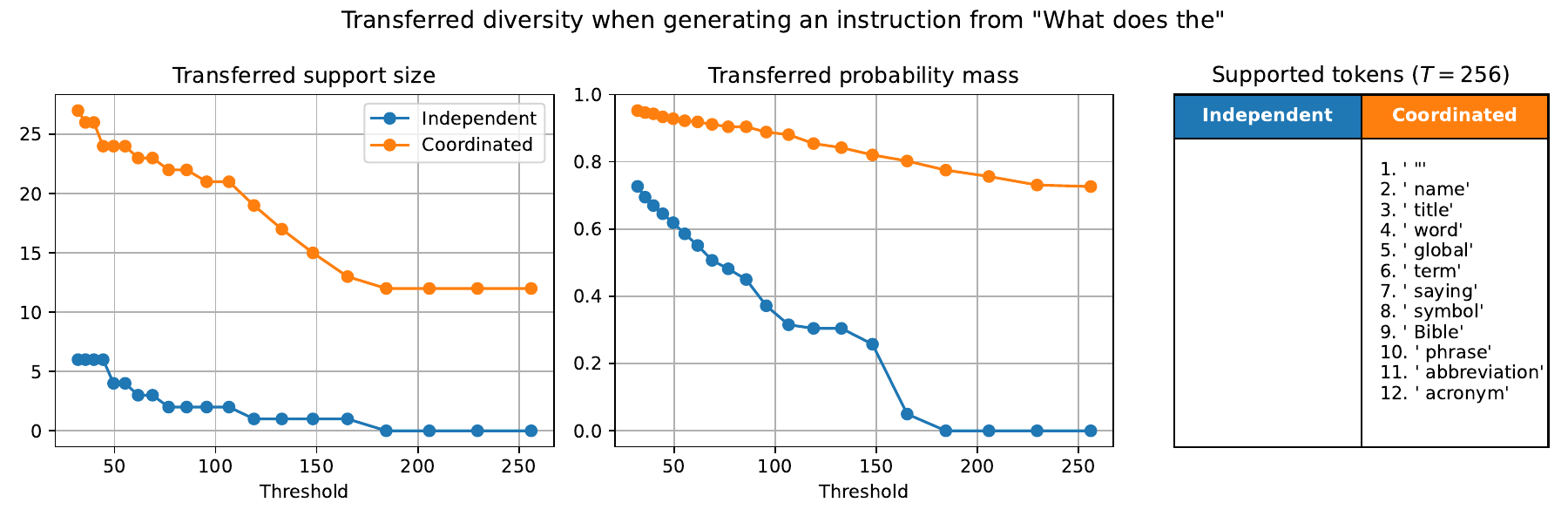}
    \includegraphics[width=0.47\textwidth]{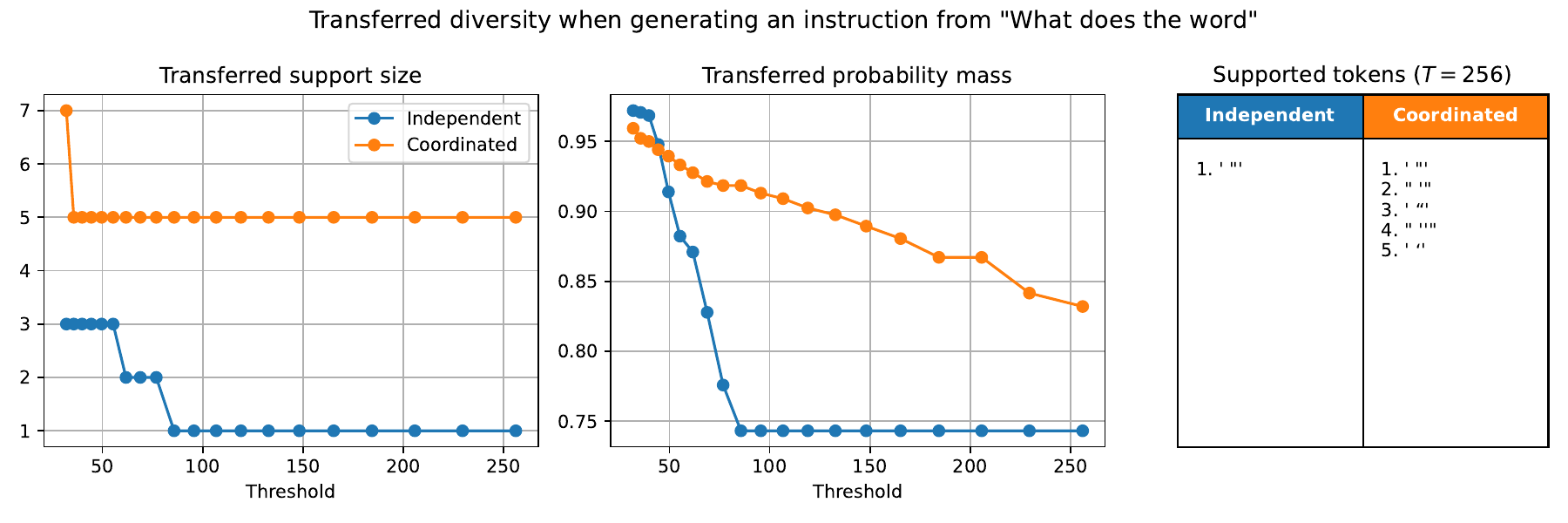}
    \includegraphics[width=0.47\textwidth]{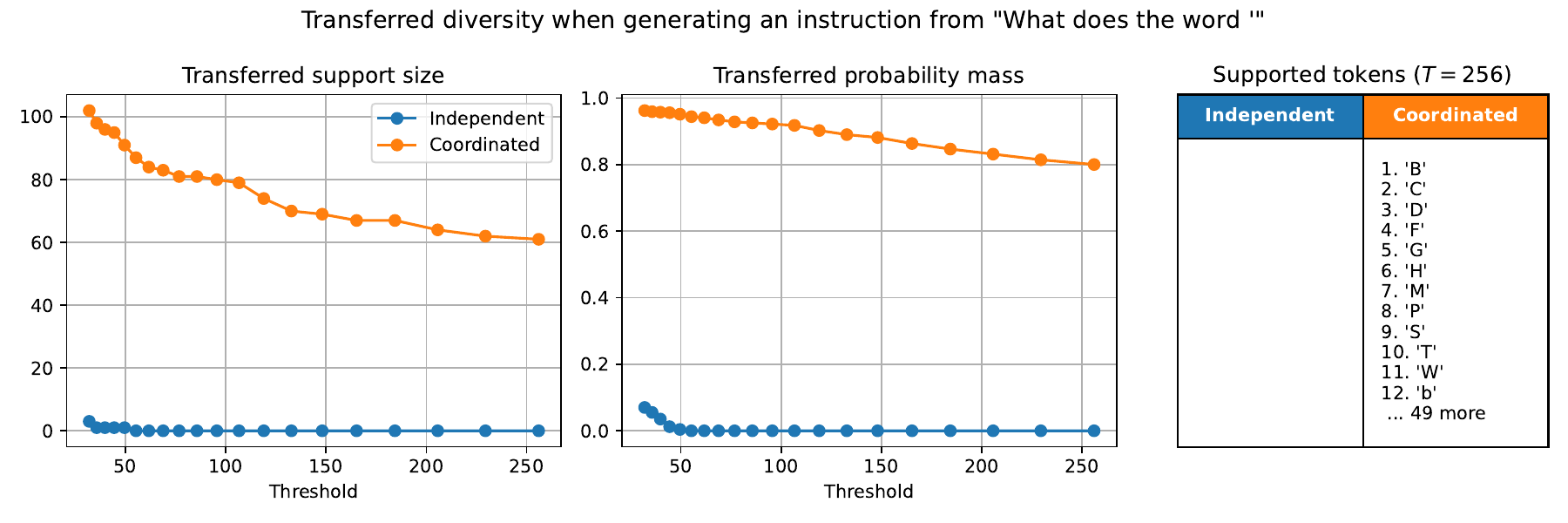}
    \includegraphics[width=0.47\textwidth]{plots/Dolly15k/diversity_transfer/step_6.pdf}
    \includegraphics[width=0.47\textwidth]{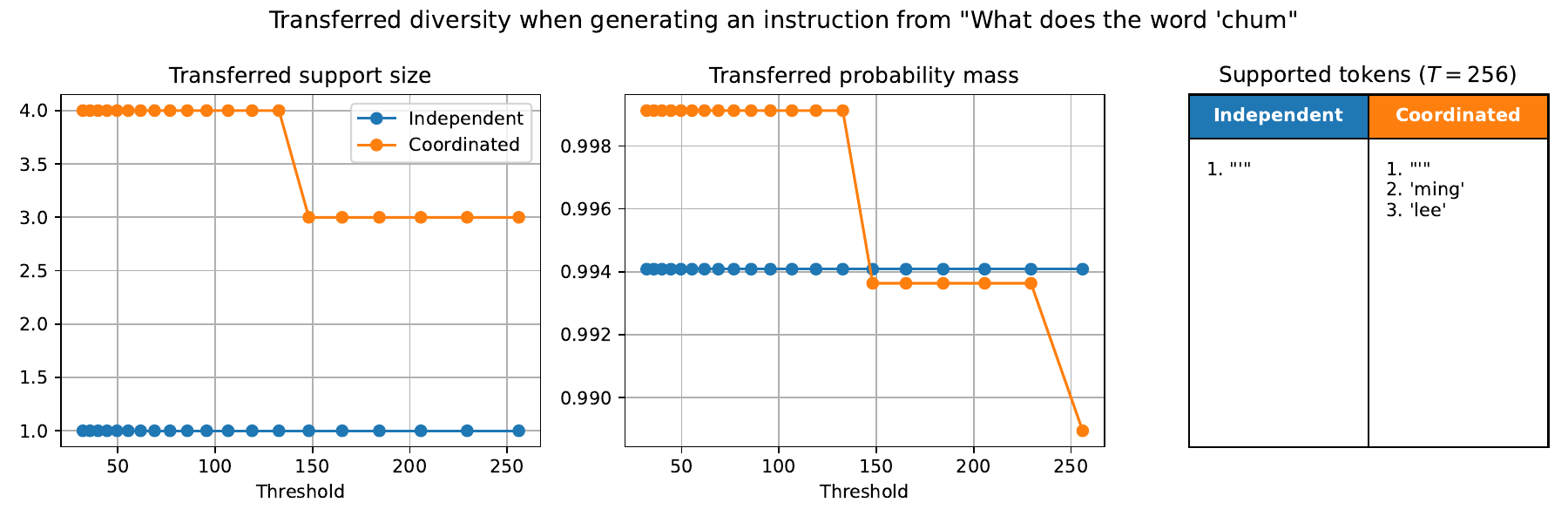}
    \includegraphics[width=0.47\textwidth]{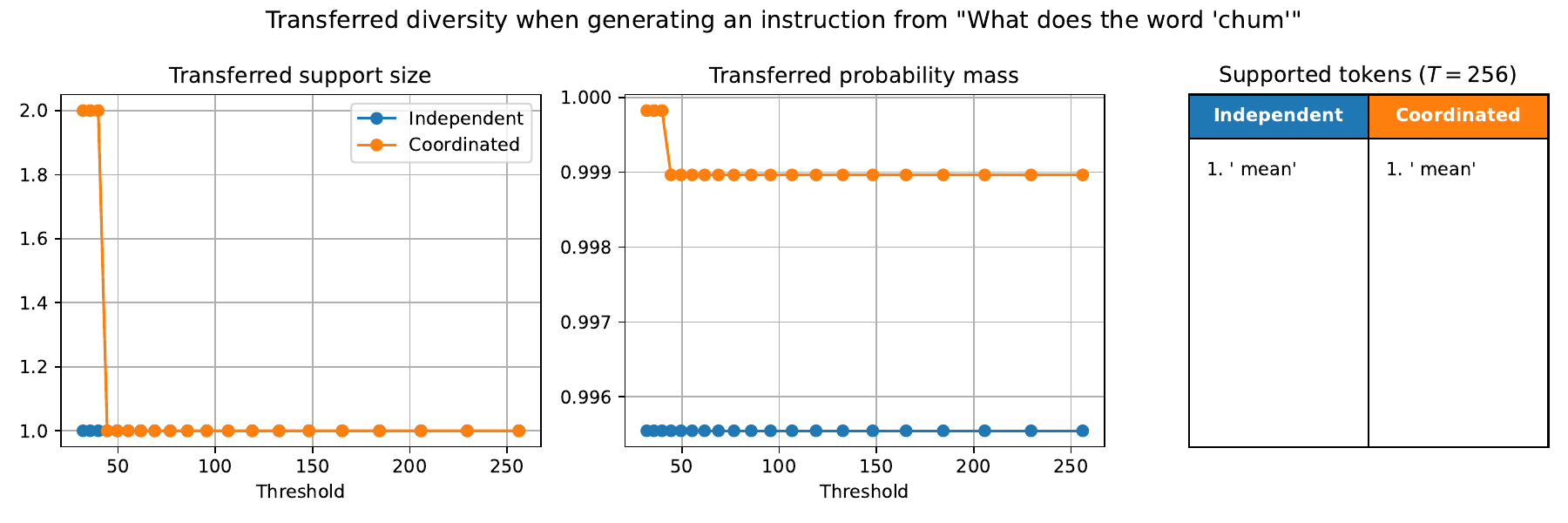}
    \includegraphics[width=0.47\textwidth]{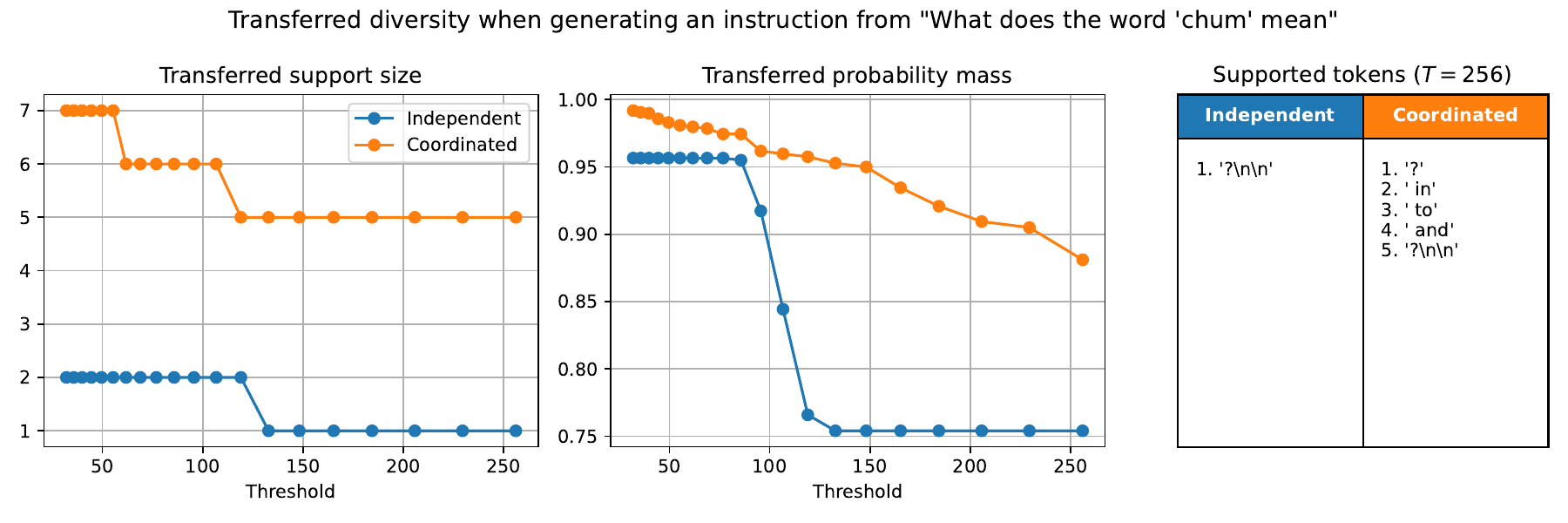}
    \caption{The transferred support-size and coverage  per threshold $T$ with coordinated and independent ensembles, when generating a synthetic instruction. For multiple prefixes $R$. \label{Dolly15k_diversity_transfer:fig}}
\end{figure}

\begin{figure}[h]  
    \centering  
    \includegraphics[width=0.47\textwidth]{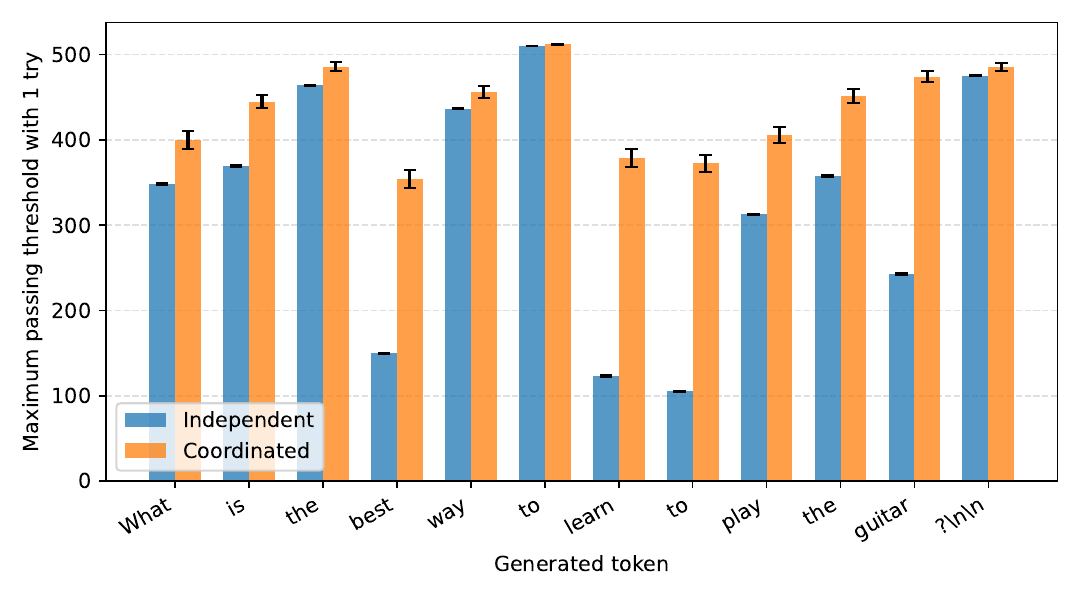}
    \includegraphics[width=0.47\textwidth]{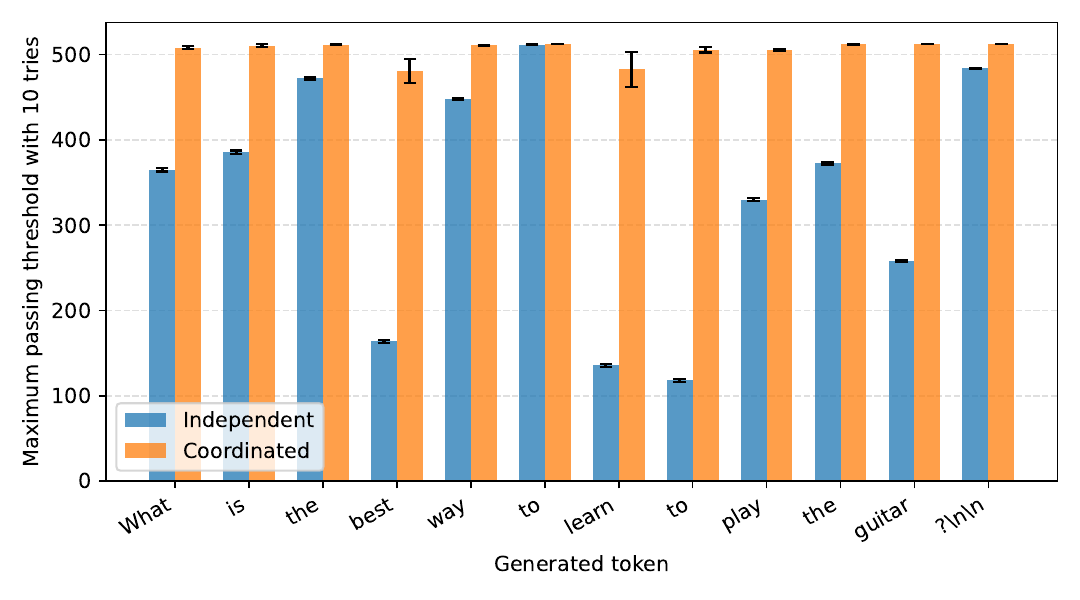}\\
    \includegraphics[width=0.47\textwidth]{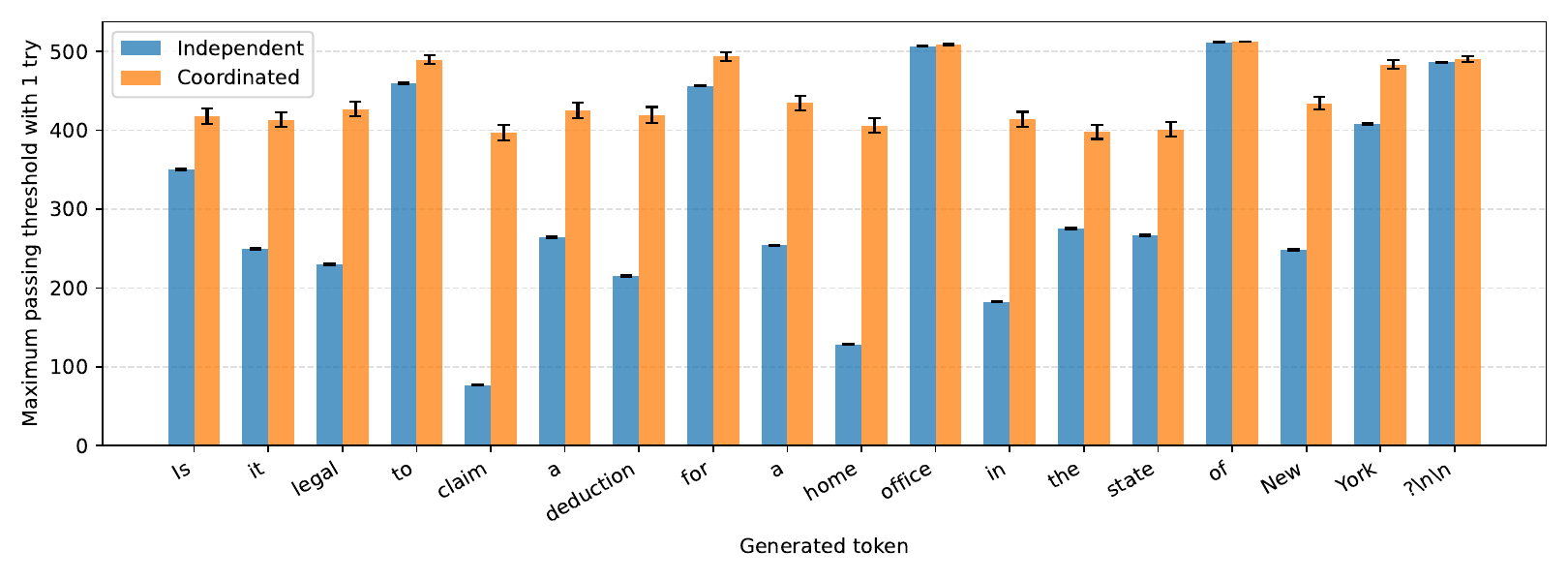}
    \includegraphics[width=0.47\textwidth]{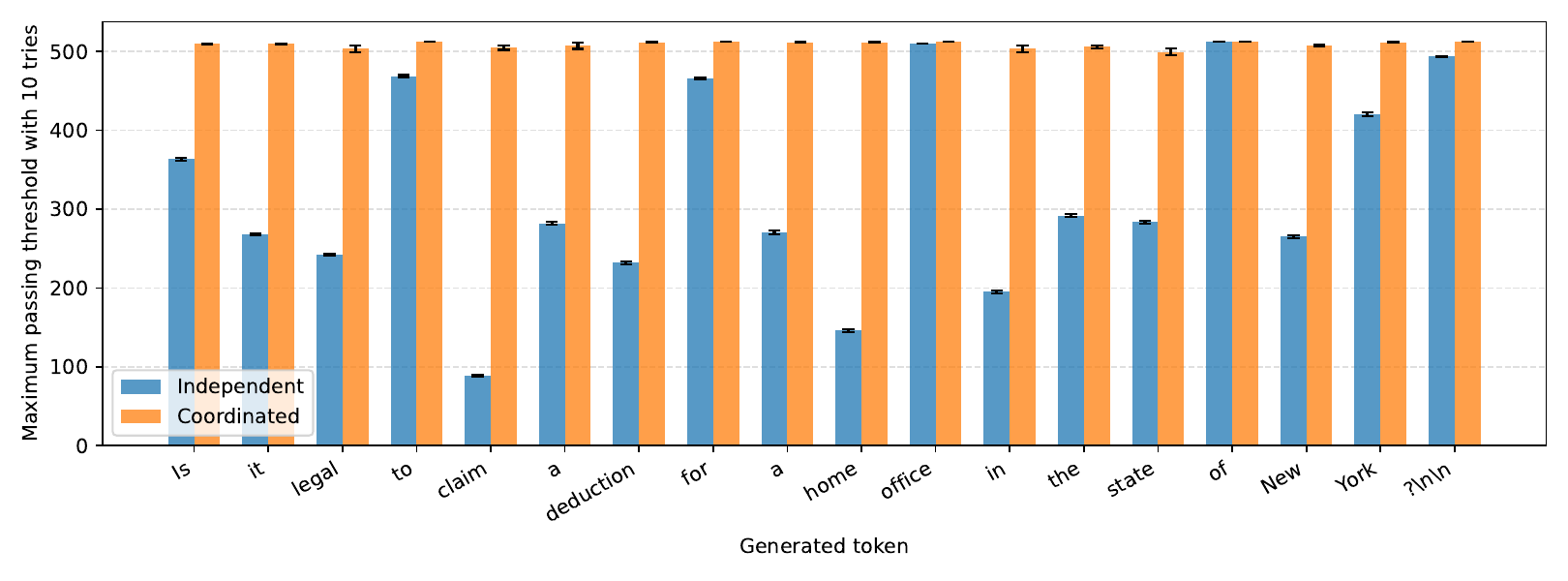}\\
    \includegraphics[width=0.47\textwidth]{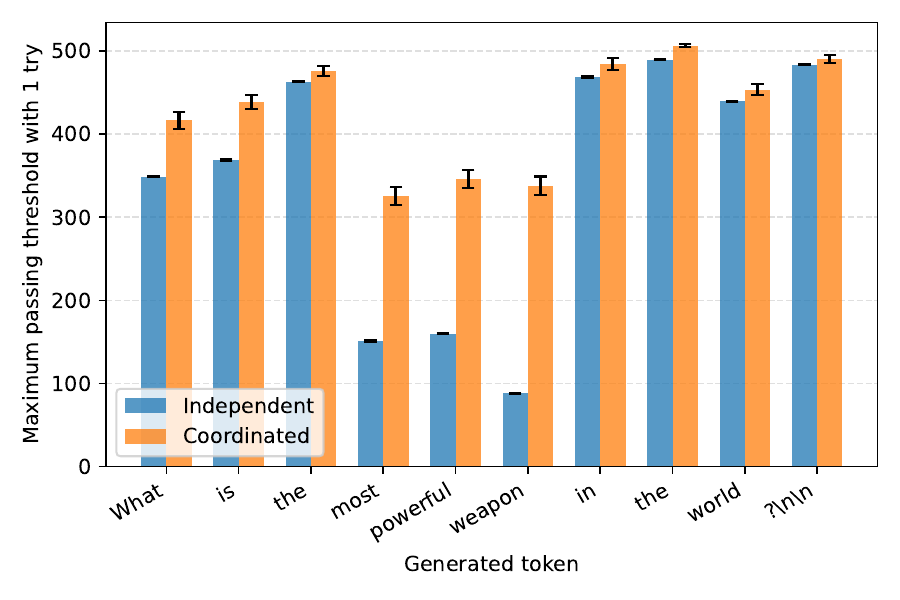}
    \includegraphics[width=0.47\textwidth]{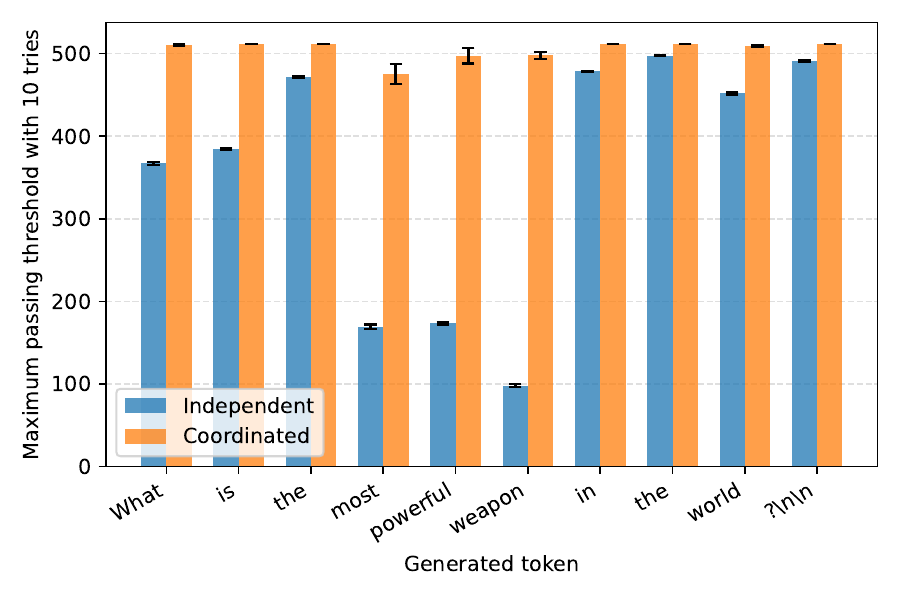}\\   
    \includegraphics[width=0.47\textwidth]{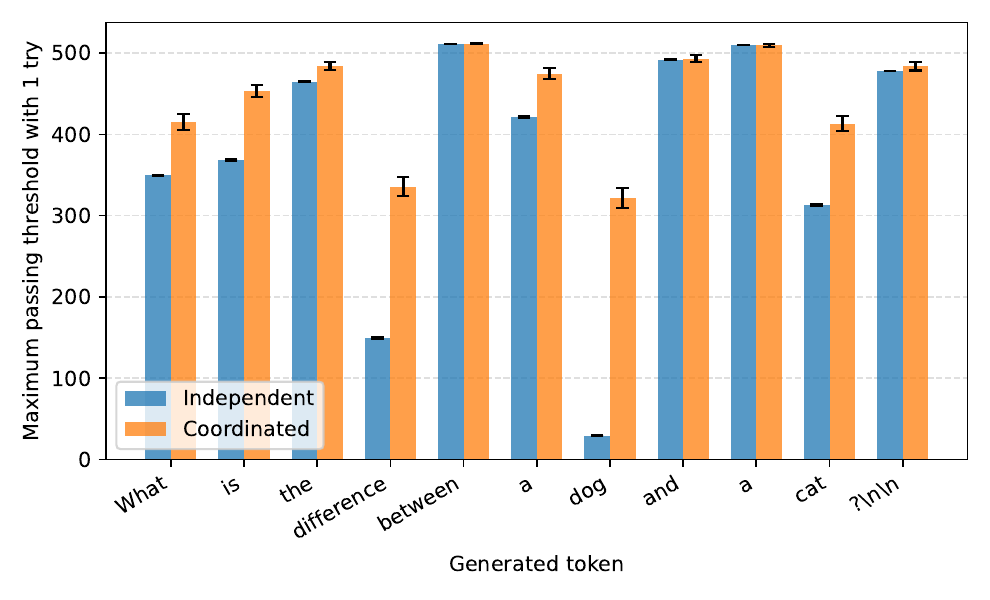}
    \includegraphics[width=0.47\textwidth]{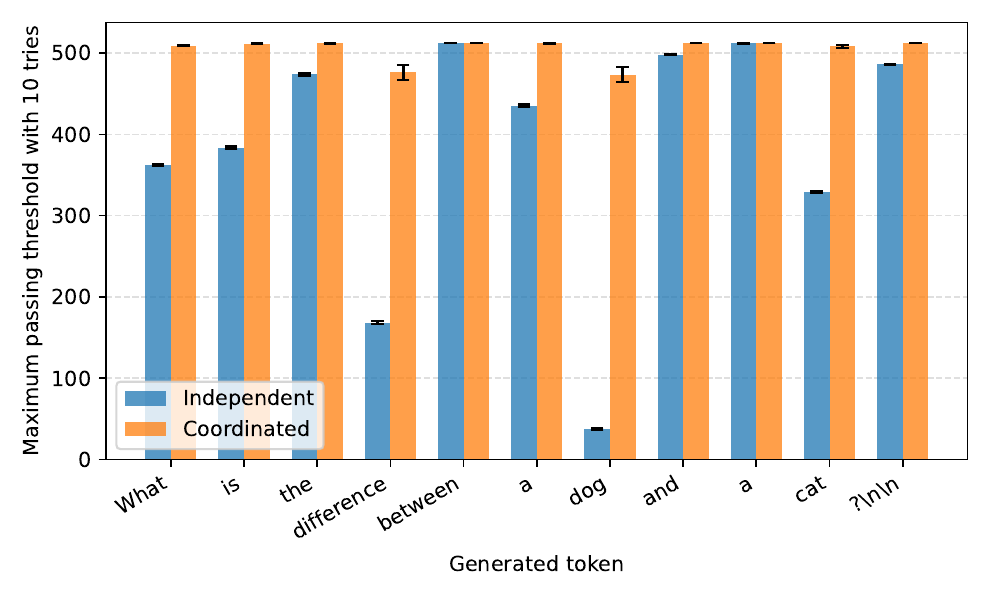}\\
    \includegraphics[width=0.47\textwidth]{plots/Dolly15k/max_passing_threshold_1_retries/sample_9.pdf}
    \includegraphics[width=0.47\textwidth]{plots/Dolly15k/max_passing_threshold_10_retries/sample_9.pdf}\\  
    \caption{Maximum token count per next-token vote histogram for different prefixes $R$ in a single attempt (left) and in 10 attempts (right) \label{Dolly15k_repetitions_more:fig}}
\end{figure}   

\begin{figure}
    \centering
    \includegraphics[width=0.47\linewidth]{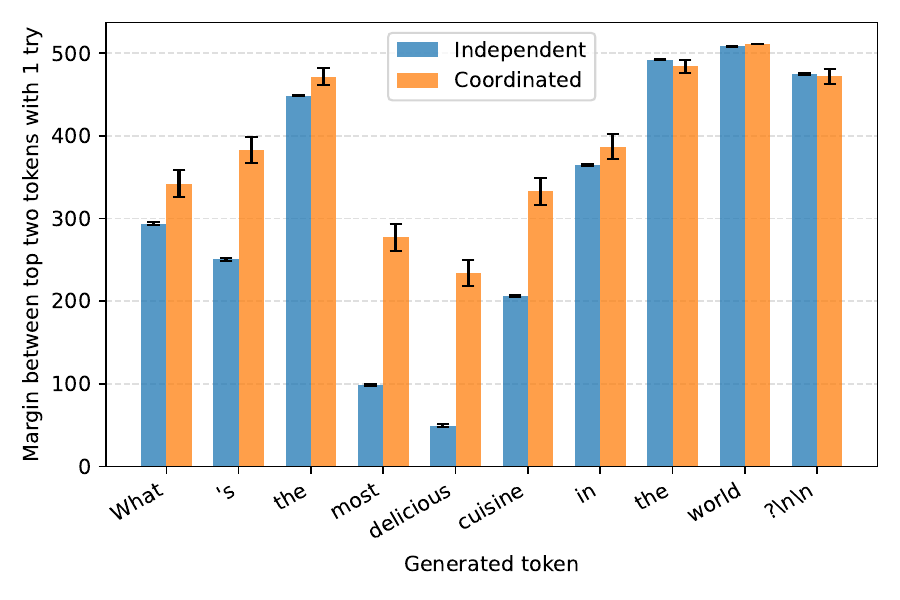}
     \includegraphics[width=0.47\linewidth]{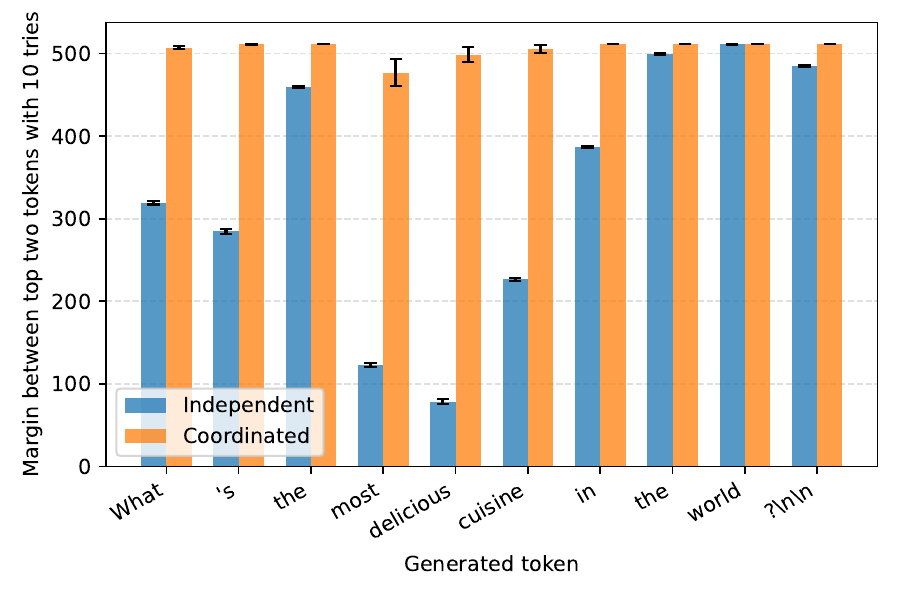}\\
         \includegraphics[width=0.47\linewidth]{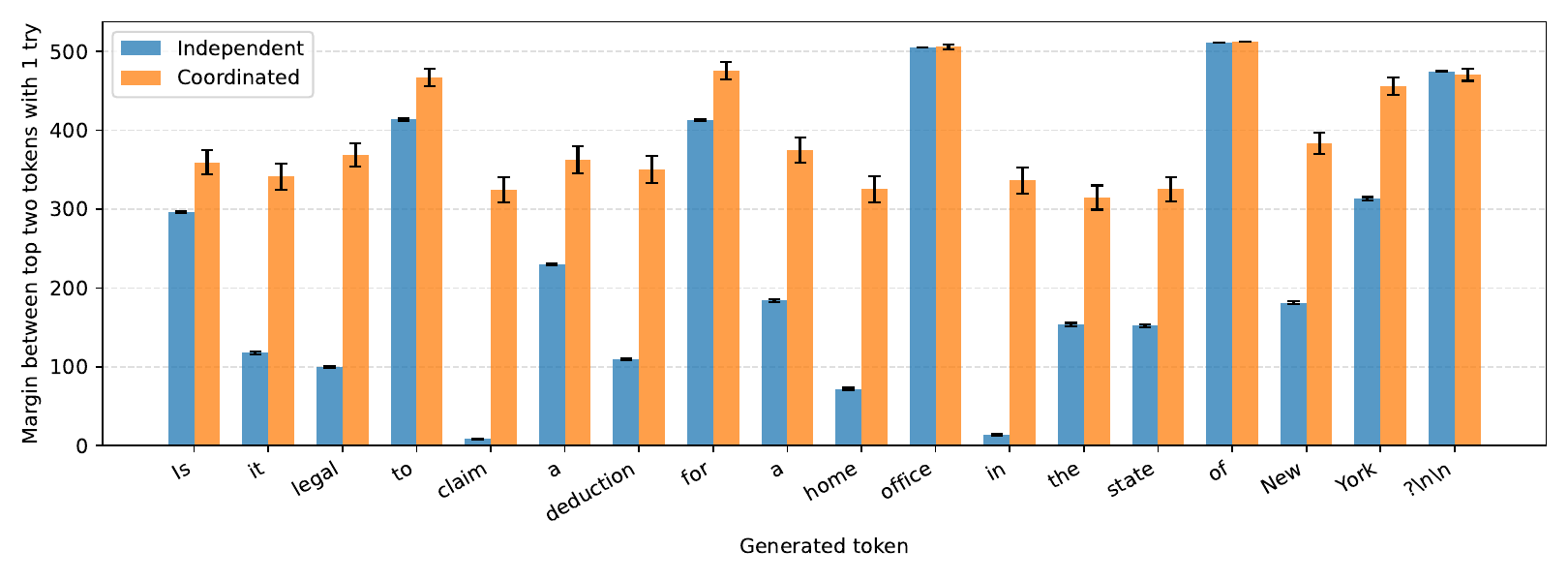}
     \includegraphics[width=0.47\linewidth]{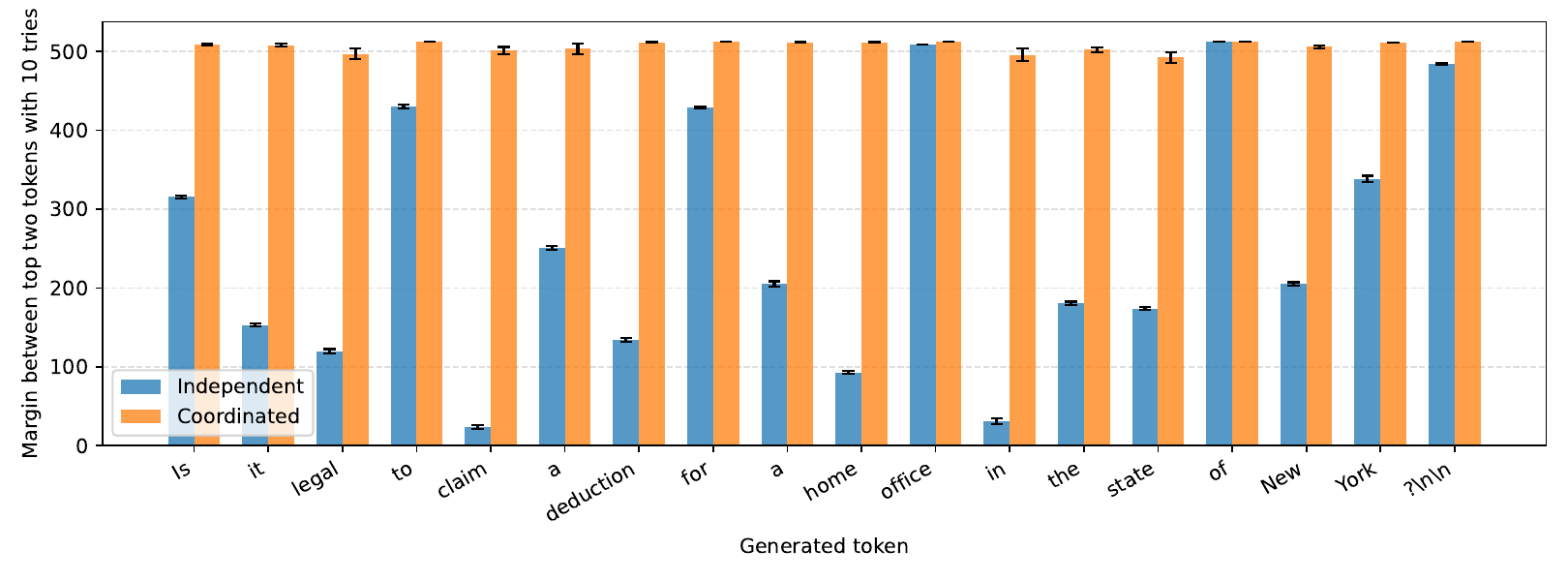}
     \\
         \includegraphics[width=0.47\linewidth]{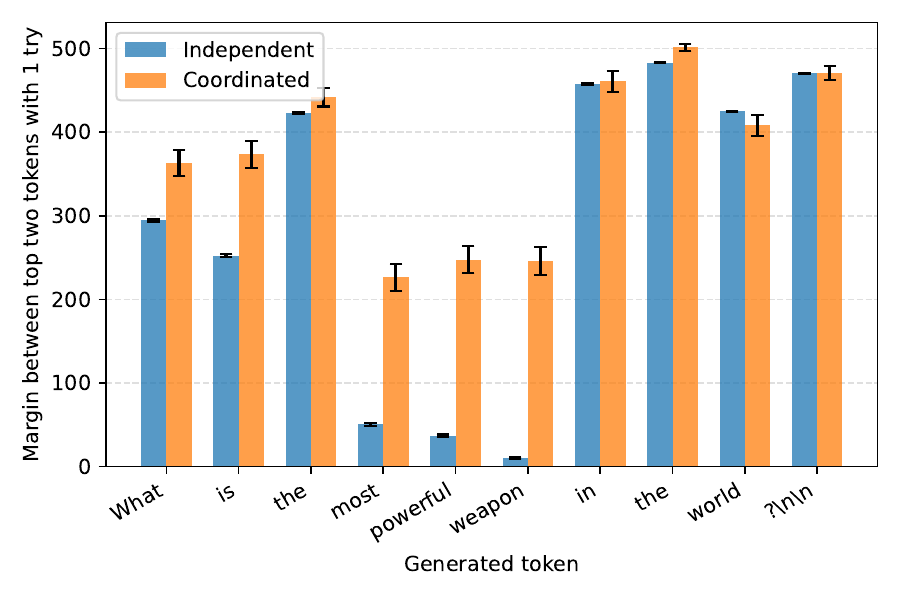}
     \includegraphics[width=0.47\linewidth]{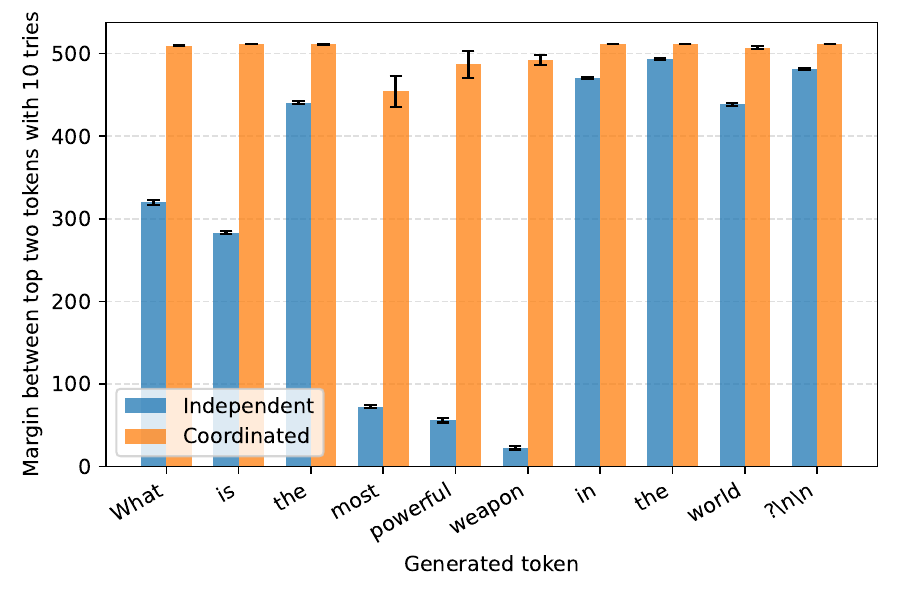}
     \\
         \includegraphics[width=0.47\linewidth]{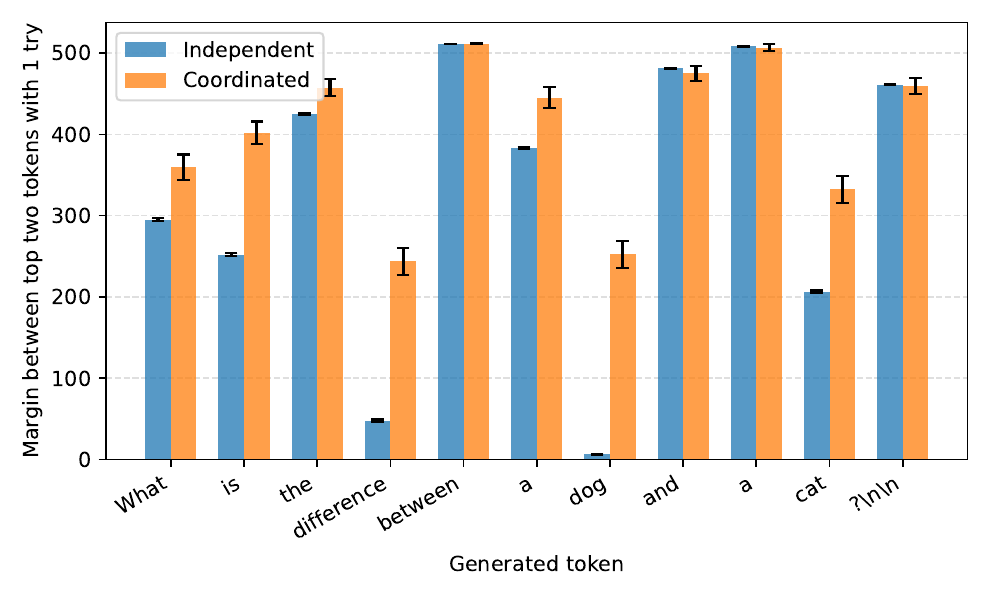}
     \includegraphics[width=0.47\linewidth]{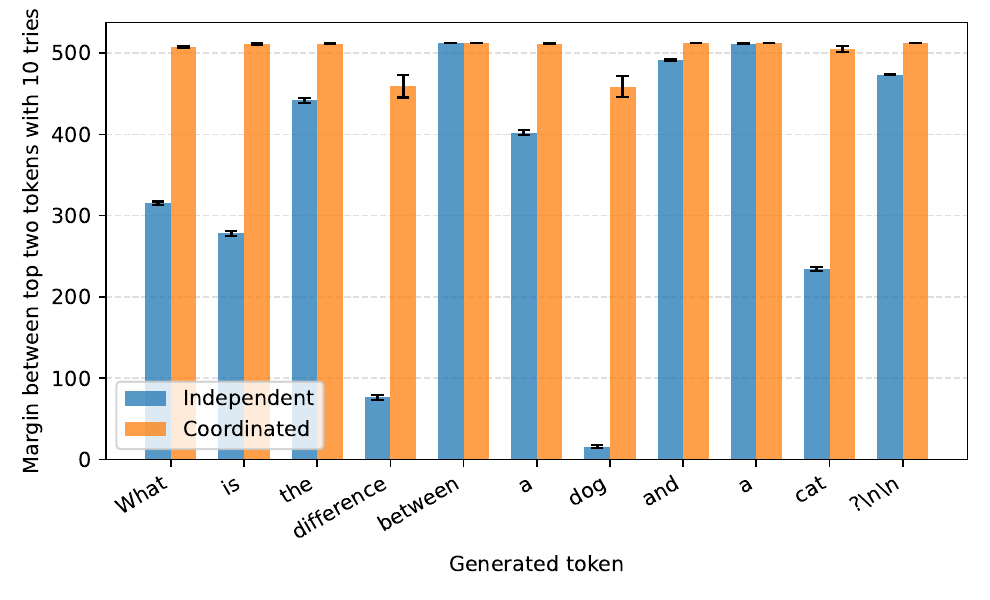}
     \\
         \includegraphics[width=0.47\linewidth]{plots/Dolly15k/max_margin_1_retries/sample_9.pdf}
     \includegraphics[width=0.47\linewidth]{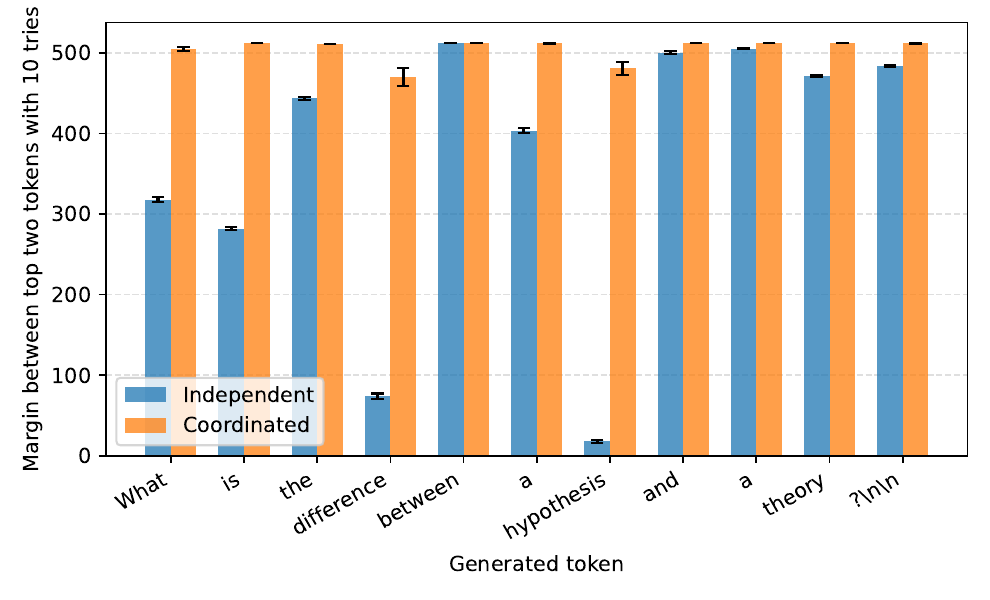}
    \caption{Margin between highest and second highest counts per histogram. A single try (left) and largest of 10 tries (right).}
    \label{Dolly15k_margin:fig}
\end{figure}

\section{Further details on Planet Z Demonstration} \label{experimentsmore:sec}
\subsection{Properties of the Generated Distributions} \label{distproperties:sec} The distributions deviated from the ``intended'' one of a uniform distribution over the numbers in the prompt: The model exhibited bias towards certain numbers, had spurious dependencies on private components, and generalized. Note that our evaluation focuses on the effectiveness of transferring the \emph{knowledge of the model}, as reflected in its generated response distributions, including its biases and generalizations. We observed the following:
\begin{itemize}
    \item 
    The probability assigned by the model to tokens that are not 3-digit numbers is negligible: 
The average probability (over teachers) of a response token in $\mathbbm{N}_{100}^{999}$ was $\E_{i\in[n]} \sum_{j\in \mathbbm{N}_{100}^{999}} p^{i}_j \approx 0.997$ for $k=20$ and $\approx 0.994$ for $k=100$.
    \item Tokens in $C$ dominate but other 3-digit numbers are likely:
     The average probability of a token in $C$  was $\E_{i\in[n]} \sum_{j\in C} p^{i}_j \approx 0.716$ ($k=20$ tokens) and $\approx 0.75$ ($k=100$). Recall that only one in $k$ numbers in the prompt was in $\mathbbm{N}_{100}^{999}\setminus C$, therefore the probability of 25\%$+$ assigned to these
tokens is explained by the model generalizing that additional 3-digit numbers are edible on Planet Z.
  \item Despite symmetric prompt construction, there is significant  variability in the average probability of different tokens in $C$ and in the probability across teachers of the same token. This is an artifact of the model. Figure~\ref{fig:tokensavprob} reports the average (over prompts) of the probability of each token and demonstrates variability between tokens. The error bars indicate variability in the token probability across teachers.
\end{itemize}

\subsection{Quantifying how much is Transferable} \label{quantifytransferable:sec}

\begin{remark}[Robust Average] \label{robustaverage:rem}
We use the $\tau$-robust part of the average of the teachers distributions as an indicative upper bound on the part that is privately transferrable: 
\begin{equation} \label{robustaverage:eq}
    P_j(\tau) := \frac{1}{n}\sum_{i\in[n]} \min\left\{ p^{(i)}_j, (\{p^{(h)}_j\}_{h\in [n]})_{(\tau)}\right\} \, \text{for $j\in V$}
\end{equation}
where $(\{p^{(h)}_j\}_{h\in [n]})_{(\tau)}$ is the $\tau$th largest probability of token $j$ in a teacher distribution. 
Note that $(P_j(1))_{j\in V}$ is the average distribution and the values are non-increasing with $\tau$. The \emph{$\tau$-robust probability mass}, defined as $P(\tau) := \sum_{j\in V} P_j(\tau) \leq 1$, upper bounds the transferrable probability mass. The complement $1-P(\tau)$ is indicative lower bound on the probability of $\fail$ in the robust aggregate.
\end{remark}

Figure~\ref{fig:taurobust} reports the 
$\tau$-robust fraction of the average distribution for varying $\tau$ (see Remark~\ref{robustaverage:rem}). This is the part of the average distribution that we can hope to transfer via coordinated ensembles with support $\tau$. Recall that variability in the same token among teachers decreases transferability whereas variability among tokens does not.

\begin{figure}[t]  
    \centering  
    \includegraphics[width=0.45\textwidth]{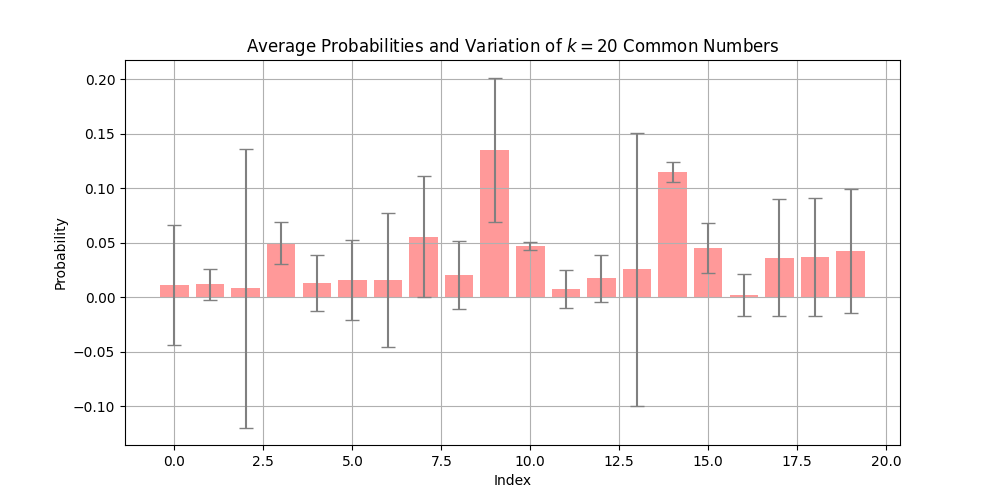} 
    \includegraphics[width=0.45\textwidth]{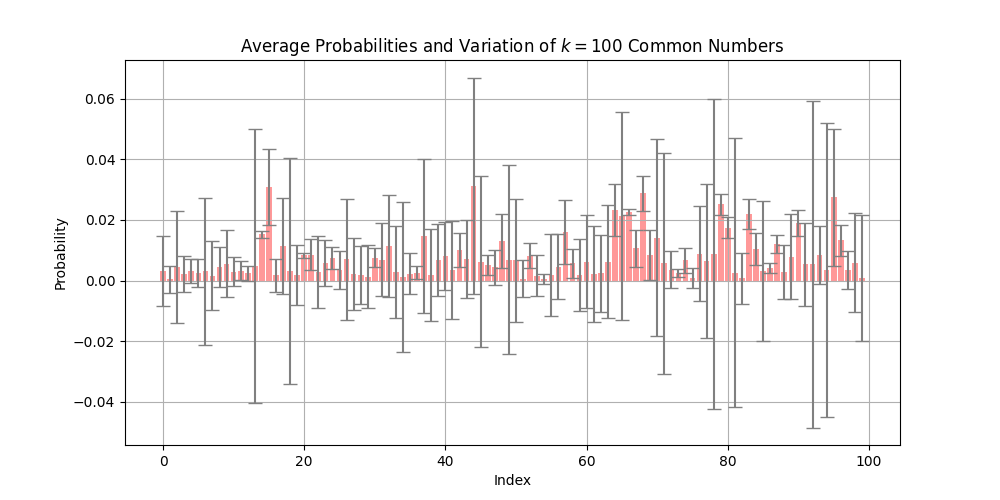} 
    \caption{Average probability, over teachers, of the $k$ tokens in $C$ (left is $k=20$, right is $k=100)$. The error bars indicate the contribution of the token to the average total variation distance over pairs of teacher distributions.}  
    \label{fig:tokensavprob}  
\end{figure}

\begin{figure}[h]  
    \centering  
    \includegraphics[width=0.42\textwidth]{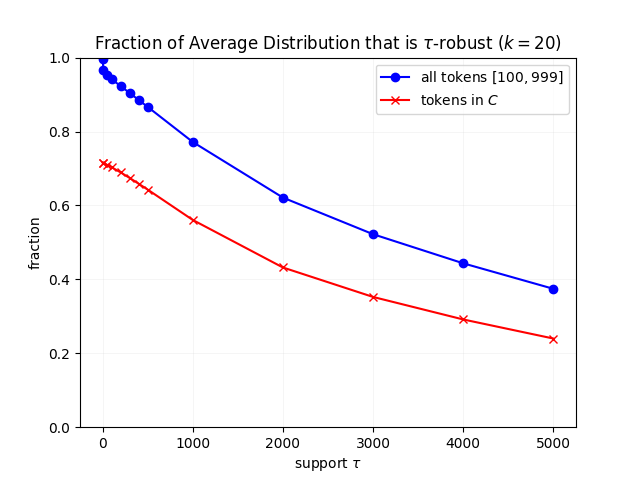}
    \includegraphics[width=0.42\textwidth]{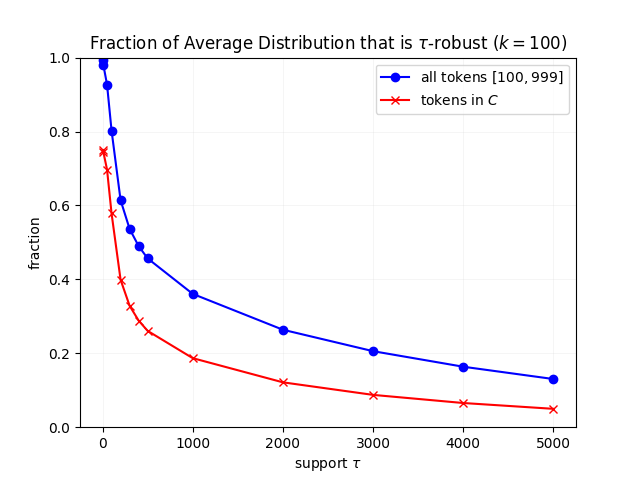}
    \caption{The $\tau$-robust part of the distribution for varying $\tau$ (see Remark~\ref{robustaverage:rem}). Left is $k=20$ right is $k=100$.}  
    \label{fig:taurobust}  
\end{figure}

\subsection{Independent versus Coordinated Histograms}  \label{indvscoohists:sec}

Figures~\ref{fig:averageprob_freq20} and~\ref{fig:averageprob_freq100} visualize the average probability $\frac{1}{n}\sum_{i\in [n]} p^{(i)}_j$ of each token $j\in \mathbbm{N}_{100}^{999}$ across teacher distributions and the average frequency $\frac{1}{r}\sum_{h=1}^r c^h_j$ over the $r=10^3$ samples from each of independent and coordinated ensembles. This demonstrates the property in \Cref{samemarginal:claim} that 
the expected number of votes for each token is the same for the two ensemble types and corresponds to the average distribution.
The qualitative difference between coordinated and independent ensembles (see \Cref{agreementprob:claim}) is visualized in Figure~\ref{fig:persamplefreqcounts} which zooms on individual sampled histograms, showing one for independent sampling and two for coordinated sampling. With independent sampling, frequency counts of each token $j$ are  concentrated close to the expectation $\sum_i p^{(i)}_j$ and are similar across different samples and to the averages shown in Figures~\ref{fig:averageprob_freq20} and~\ref{fig:averageprob_freq100}. With coordinated ensembles there is high variability in the shape of different samples and it is possible for the frequency of a token to far exceed the average value $\sum_i p^i_j$.

\ignore{
Figures~\ref{fig:averageprob_freq20} and~\ref{fig:averageprob_freq100} visualize that the mean of the sampled histograms is the same for independent and coordinated ensembles and is equal to the average distribution, as stated in Claim~\ref{samemarginal:claim}. 
Figure~\ref{fig:persamplefreqcounts} visualizes individual sampled histograms: The independent histogram is very close to the average distribution whereas the sampled coordinated histograms have different shapes and are also different from the average distribution. 
}

\begin{figure}[h]  
    \centering  
    \includegraphics[width=0.32\textwidth]{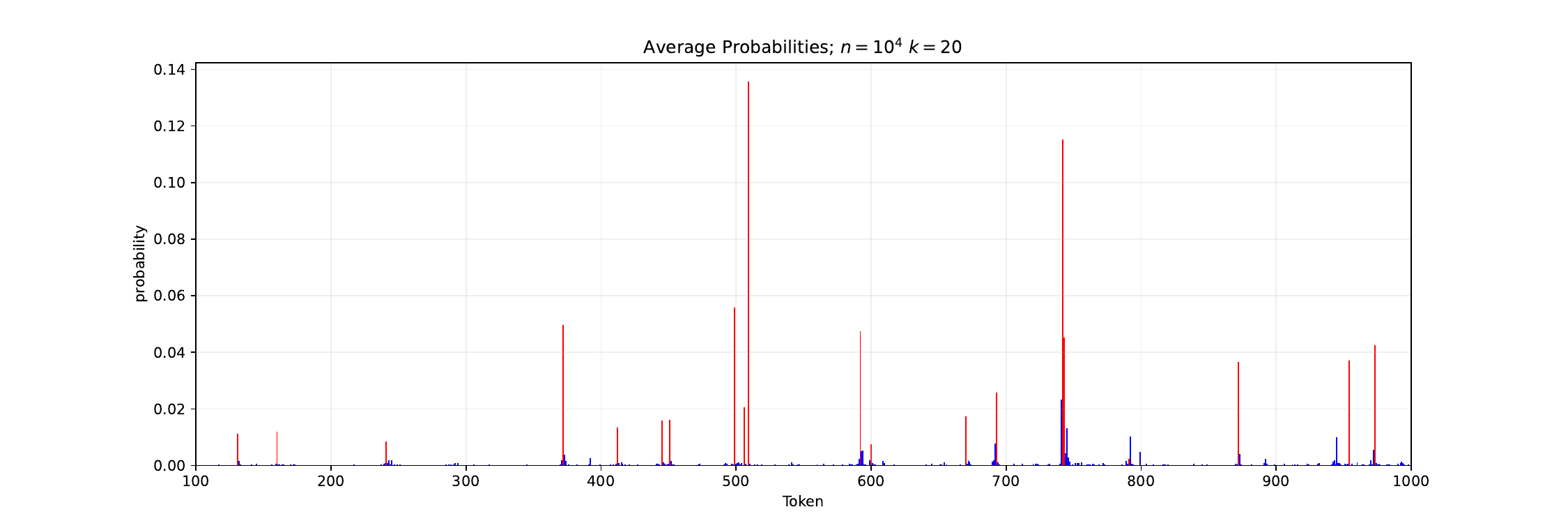}
    \includegraphics[width=0.32\textwidth]{
    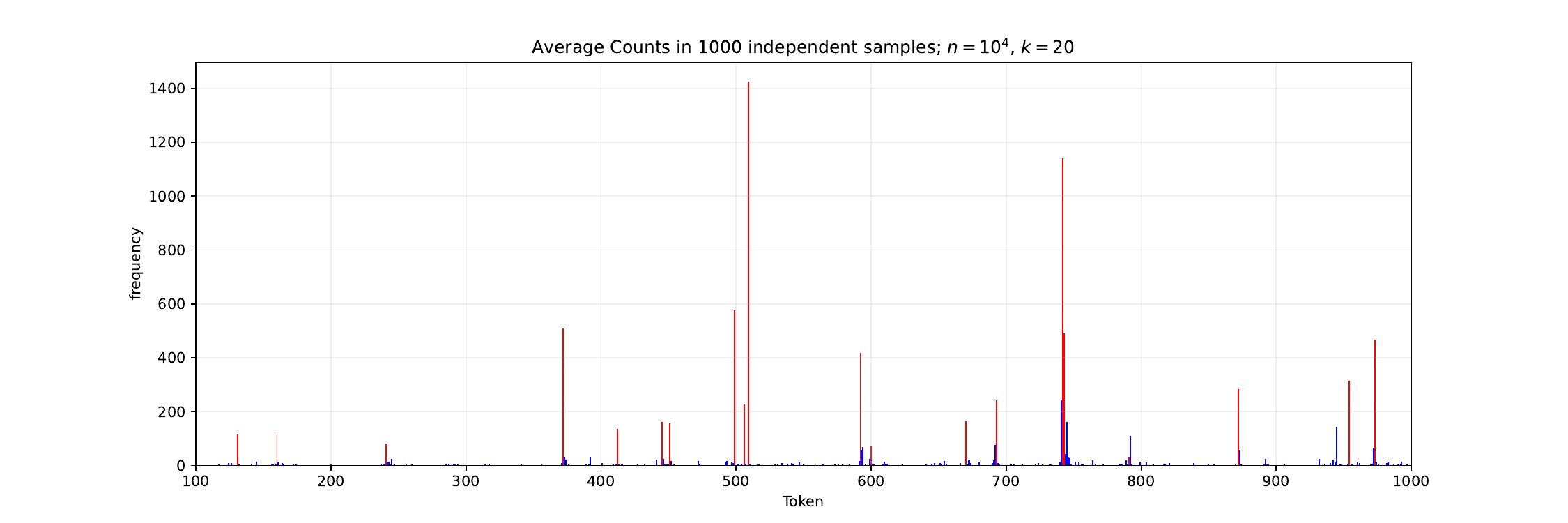}
    \includegraphics[width=0.32\textwidth]
    {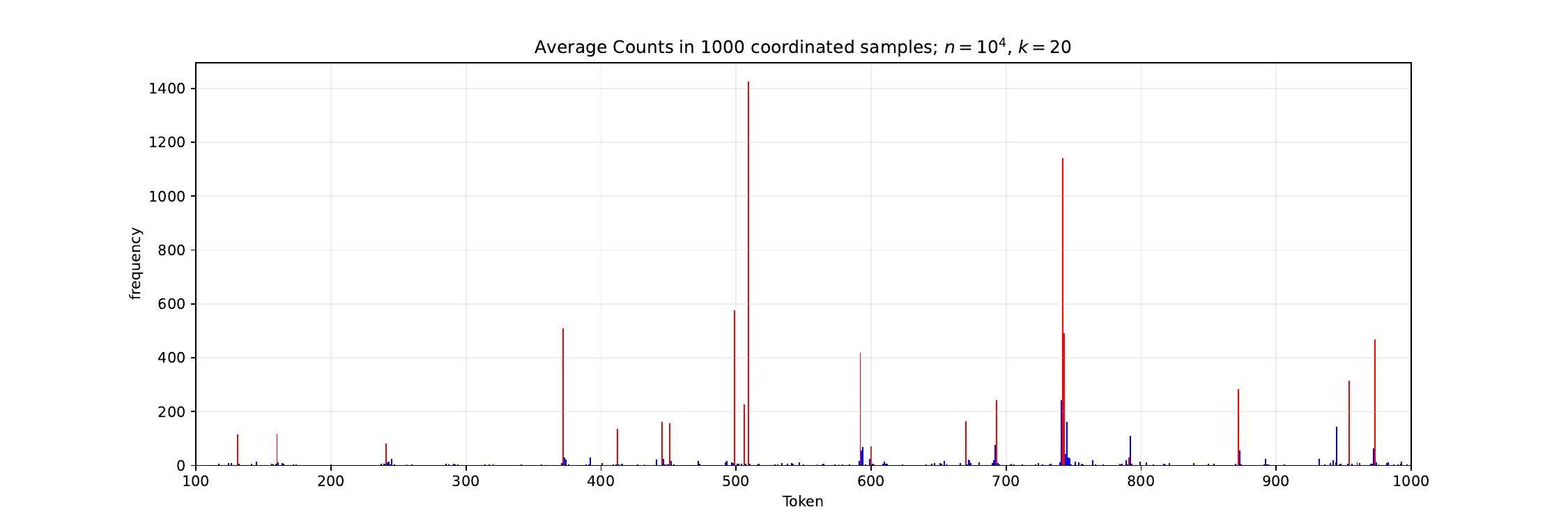} 
    \caption{$k=20$: For all tokens (tokens in $C$ shown in read): Average probability over teachers (left). Average frequency of $r=1000$ samples using independent (middle) and coordinated (right) ensembles.}  
    \label{fig:averageprob_freq20}  
\end{figure}

\begin{figure}[h]  
    \centering  
    \includegraphics[width=0.32\textwidth]{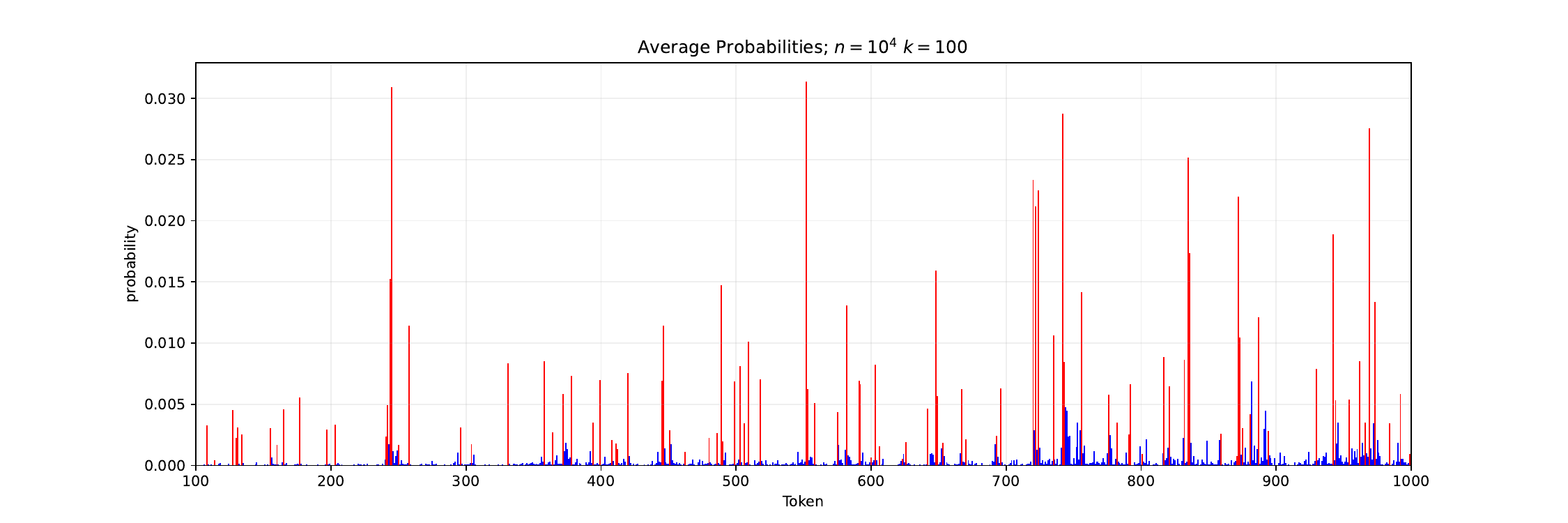}
    \includegraphics[width=0.32\textwidth]{
    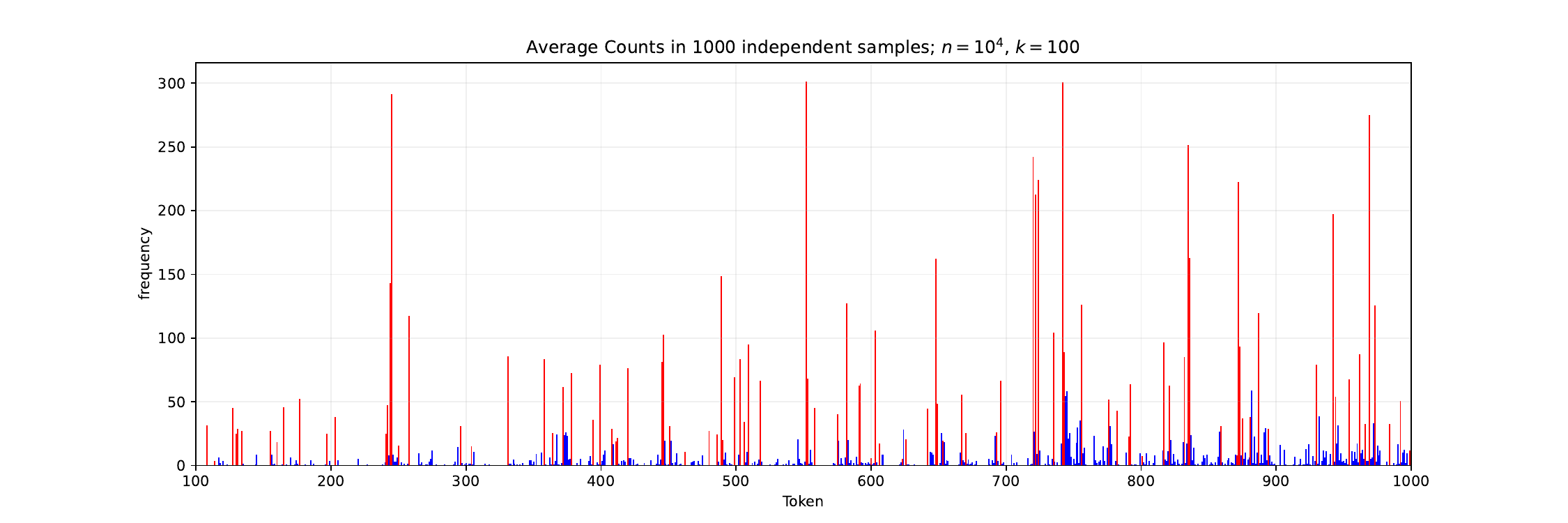}
    \includegraphics[width=0.32\textwidth]
    {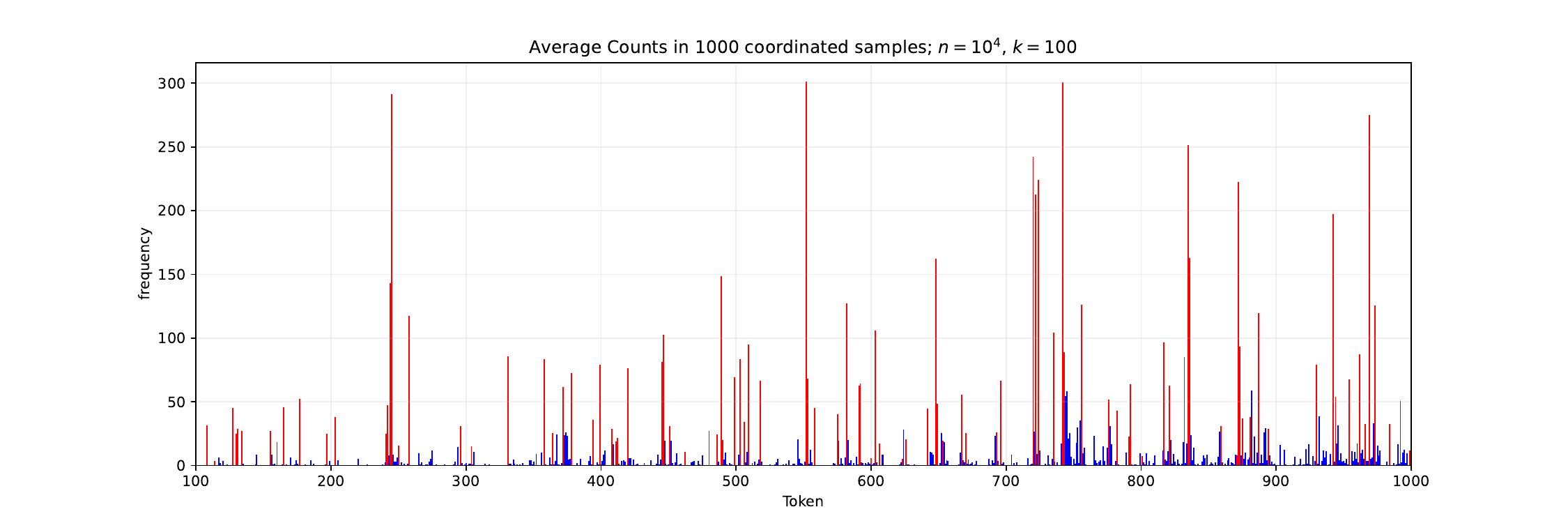} 
    \caption{$k=100$: For all tokens (tokens in $C$ shown in read): For all tokens (tokens in $C$ shown in read): Average probability over teachers (left). Average frequency of $r=1000$ samples using independent (middle) and coordinated (right) ensembles.}  
    \label{fig:averageprob_freq100}  
\end{figure}
\begin{figure*}[h]  
    \centering  
    \includegraphics[width=0.32\textwidth]{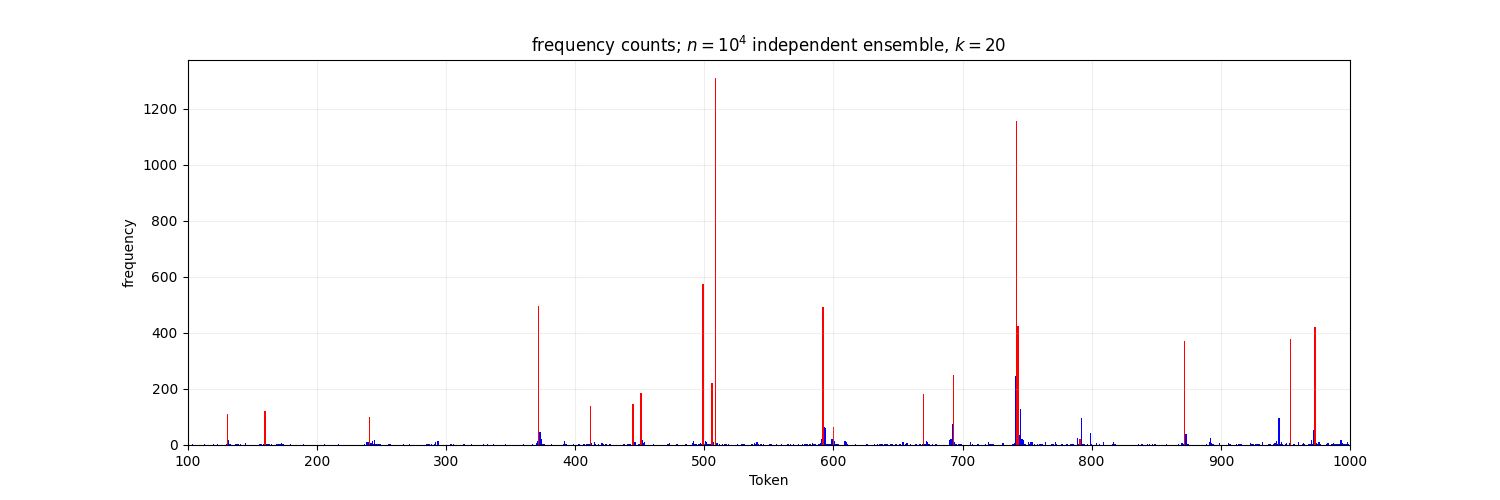}
    \includegraphics[width=0.32\textwidth]{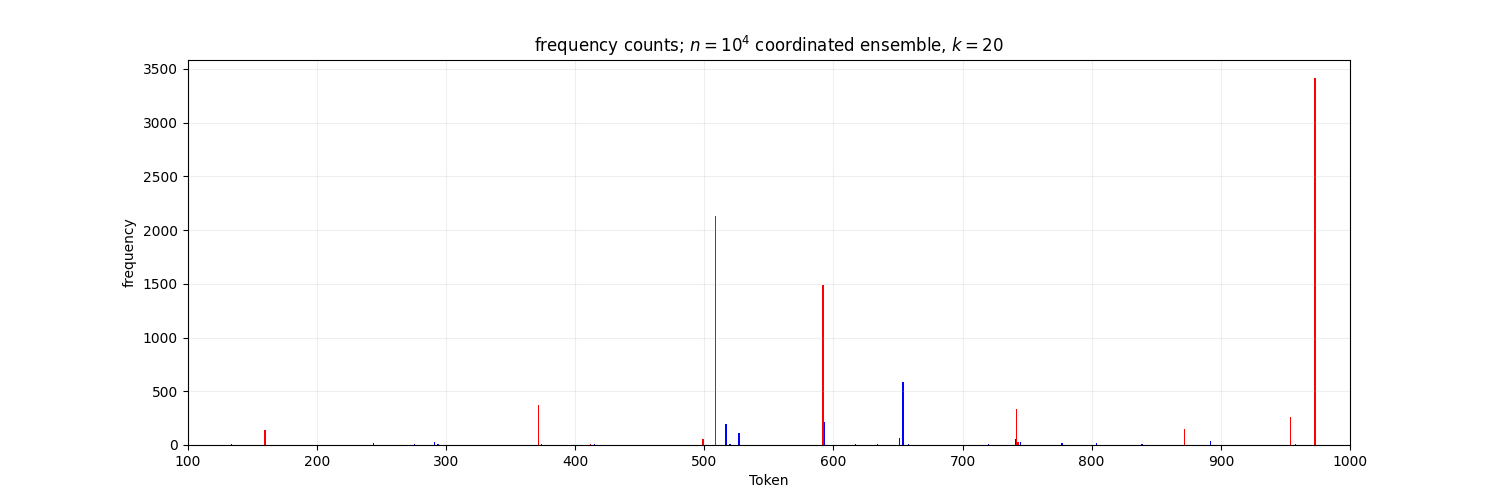}
    \includegraphics[width=0.32\textwidth]{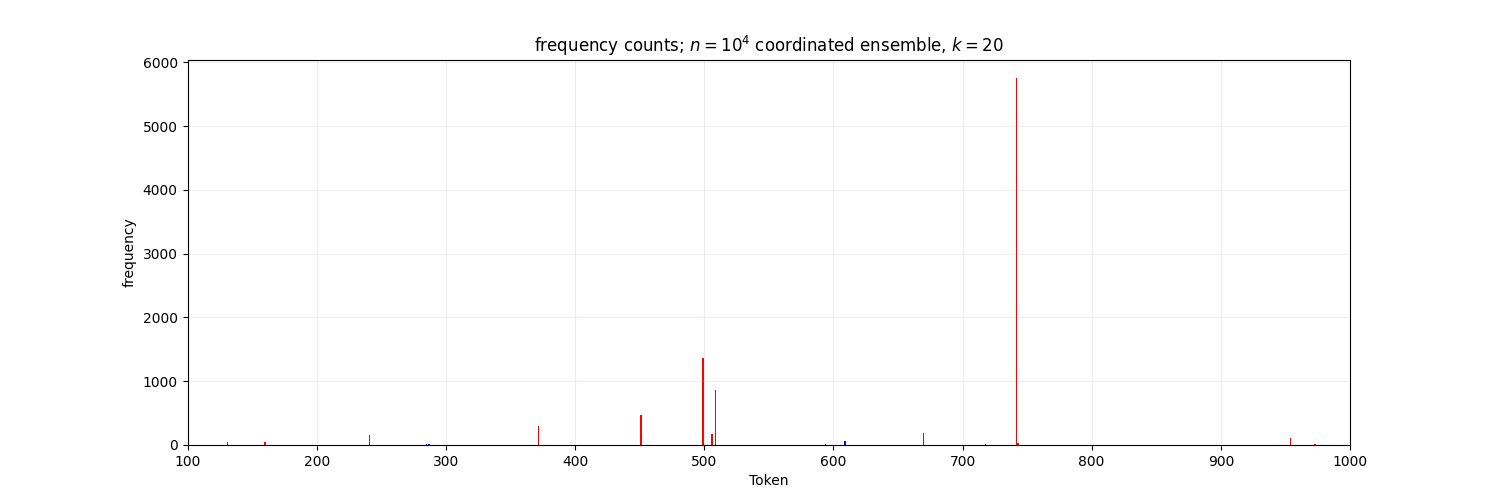}
    \includegraphics[width=0.32\textwidth]{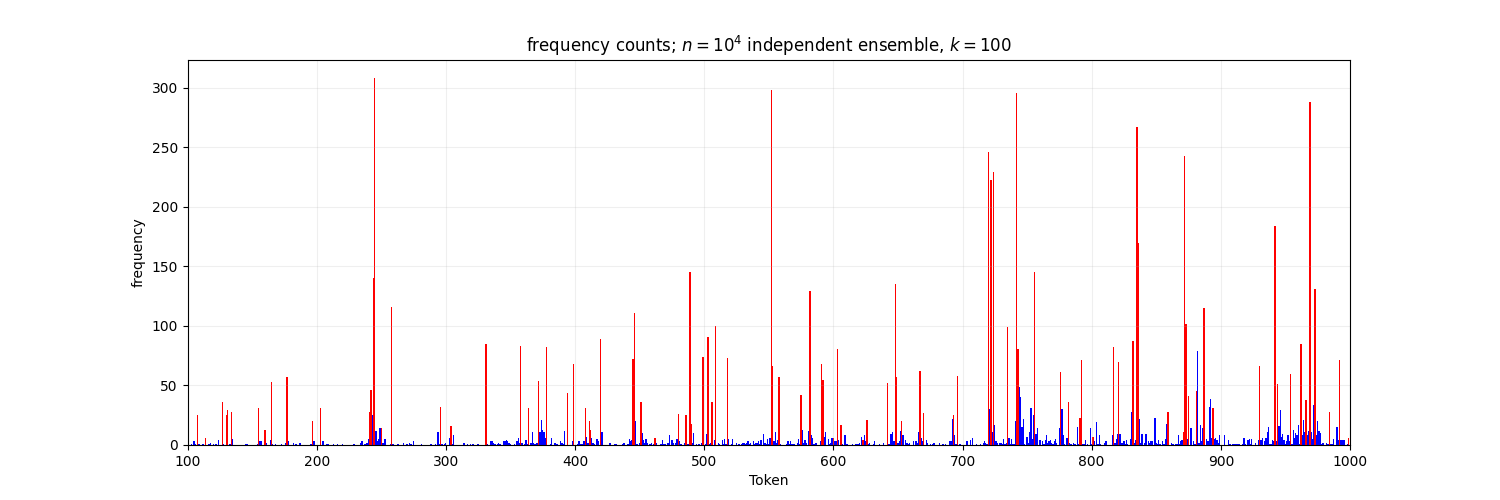}
    \includegraphics[width=0.32\textwidth]{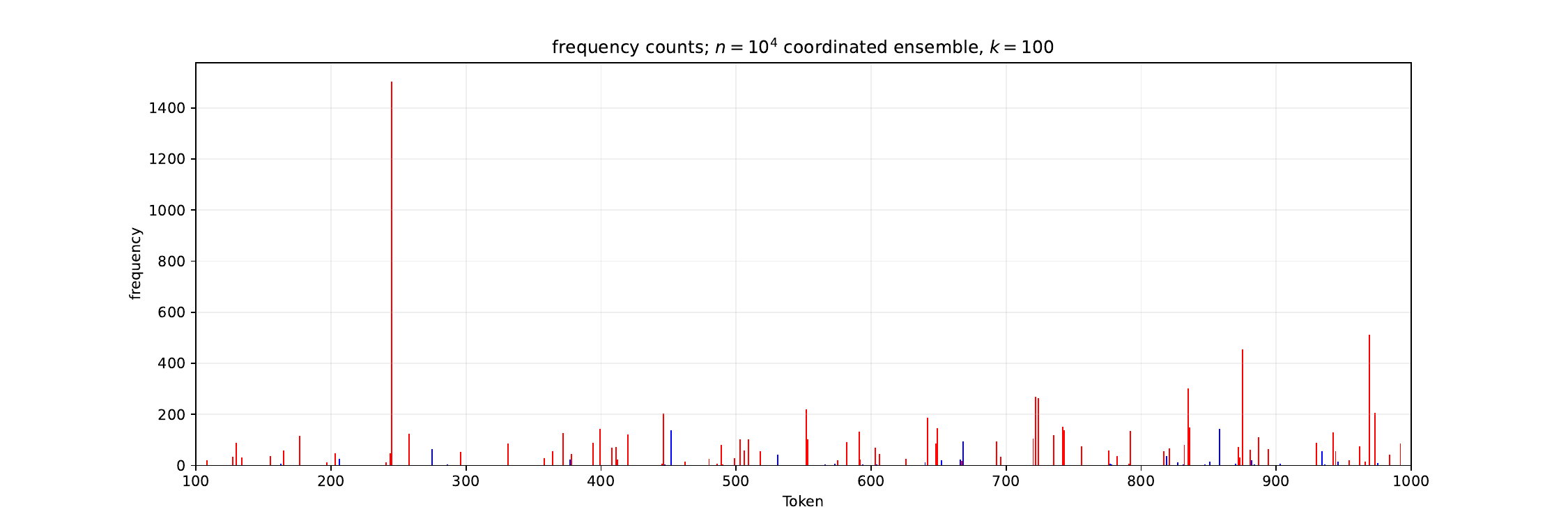}
    \includegraphics[width=0.32\textwidth]{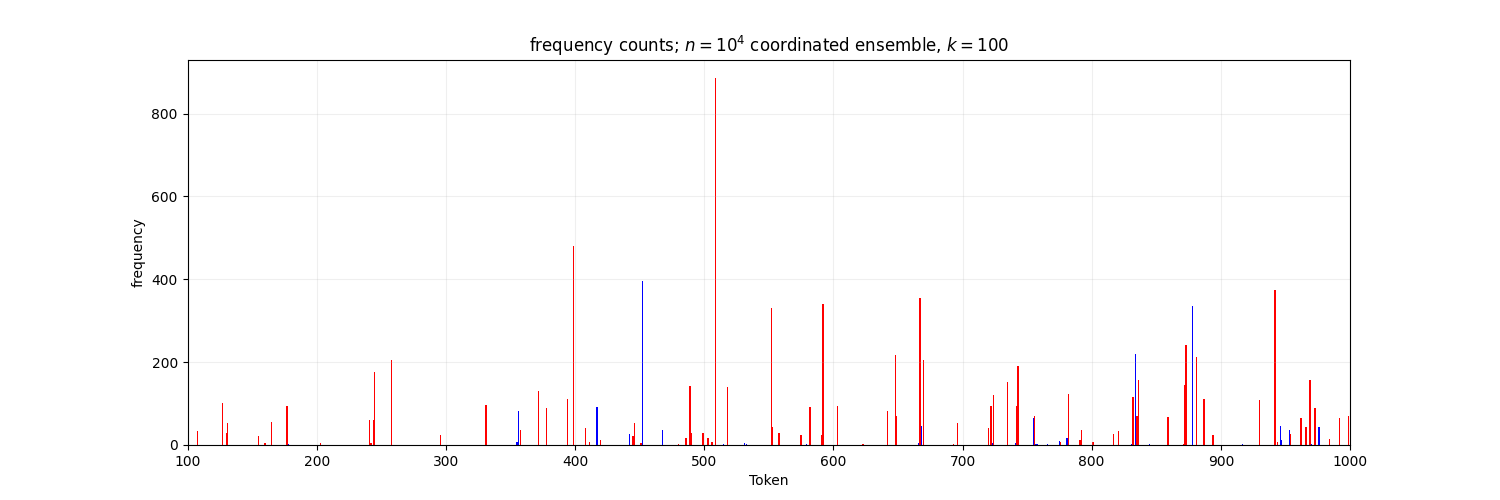}
    \caption{Frequency counts per token in individual sampled histograms. Left: Independent ensemble. Middle and Right: Coordinated ensemble. Top $k=20$ bottom $k=100$.}  
    \label{fig:persamplefreqcounts}  
\end{figure*}

\subsection{Visualized histograms of transferred mass}
Figures~\ref{fig:abovethreshold20} and~\ref{fig:abovethreshold100} visualize the histograms of the covered votes (averaged over the $r$ samples) per token, for varying thresholds $T$. For each $T$ we list coverage and support size. We can see that independent ensembles become ineffective with very low $T$, when $T/n$ exceeds the maximum average frequency of a token ($0.14$ with $k=20$ and $0.03$ with $k=100$), and transfer support-size is effectively limited to tokens with frequency at least $T/n$.
In particular, no generalization (shown in blue) is transferred. In contrast, coordinated ensembles are effective also when $T> 0.2n$ and transfer larger support size.

\ignore{
\begin{figure}[h]  
    \centering  
    \begin{tabular}{cc}
    \multicolumn{2}{l}{$T=100$ coverage: coo: 94\% ind: 75\%  support: coo: 417 ind: 24}\\
    \includegraphics[width=0.22\textwidth]{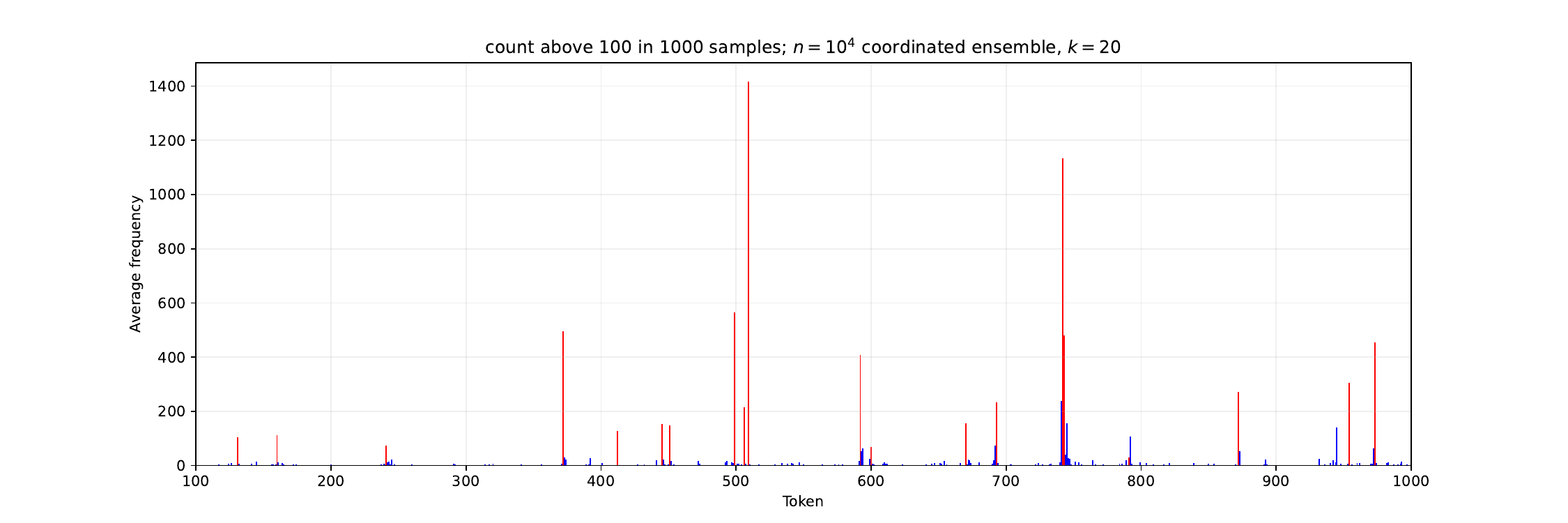} &
    \includegraphics[width=0.22\textwidth]{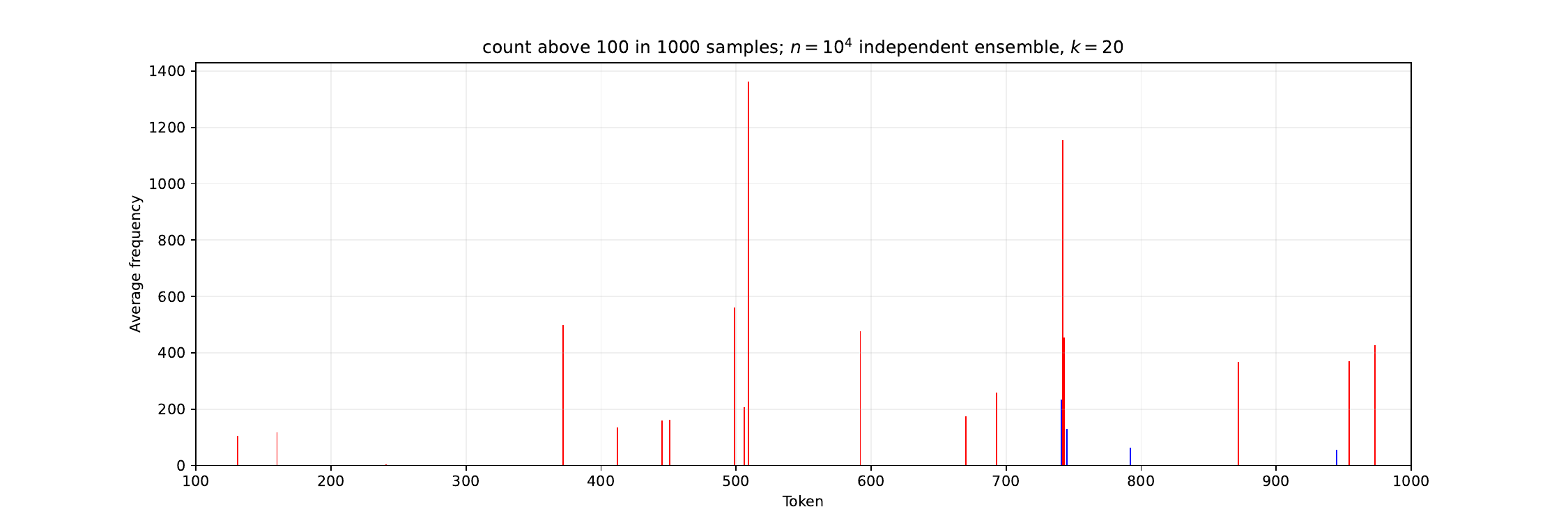}\\
    \multicolumn{2}{l}{$T=200$ coverage: coo: 91\% ind: 63\% support: coo: 365  ind: 13}\\
    \includegraphics[width=0.22\textwidth]{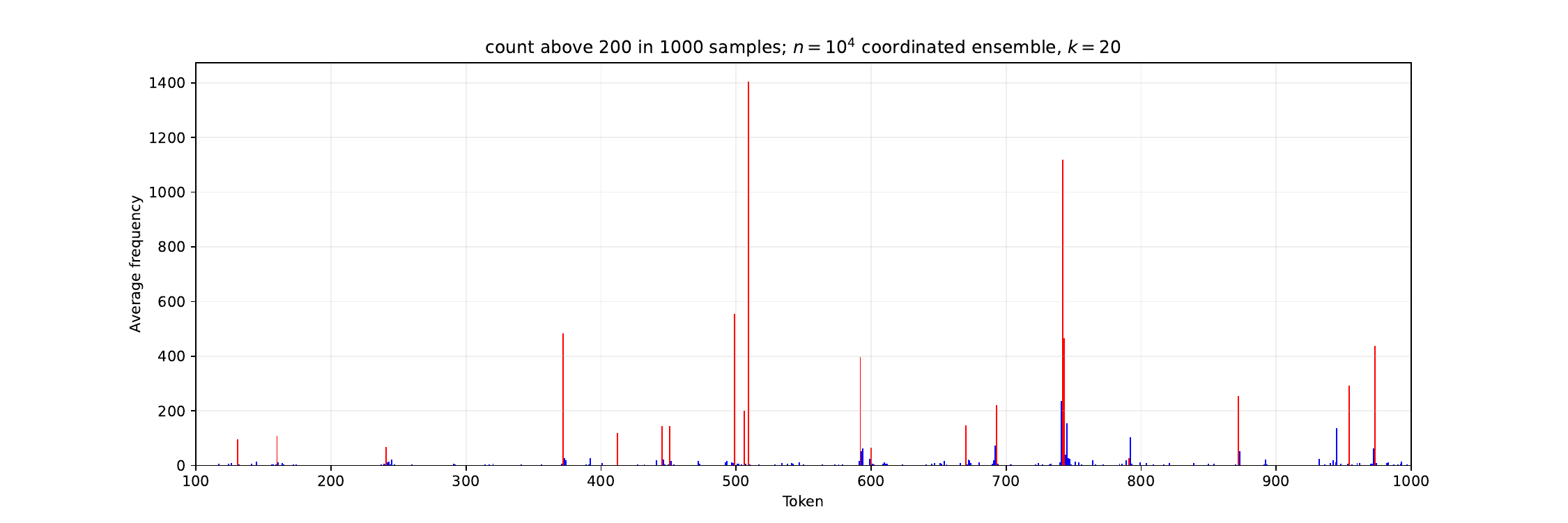} &
    \includegraphics[width=0.22\textwidth]{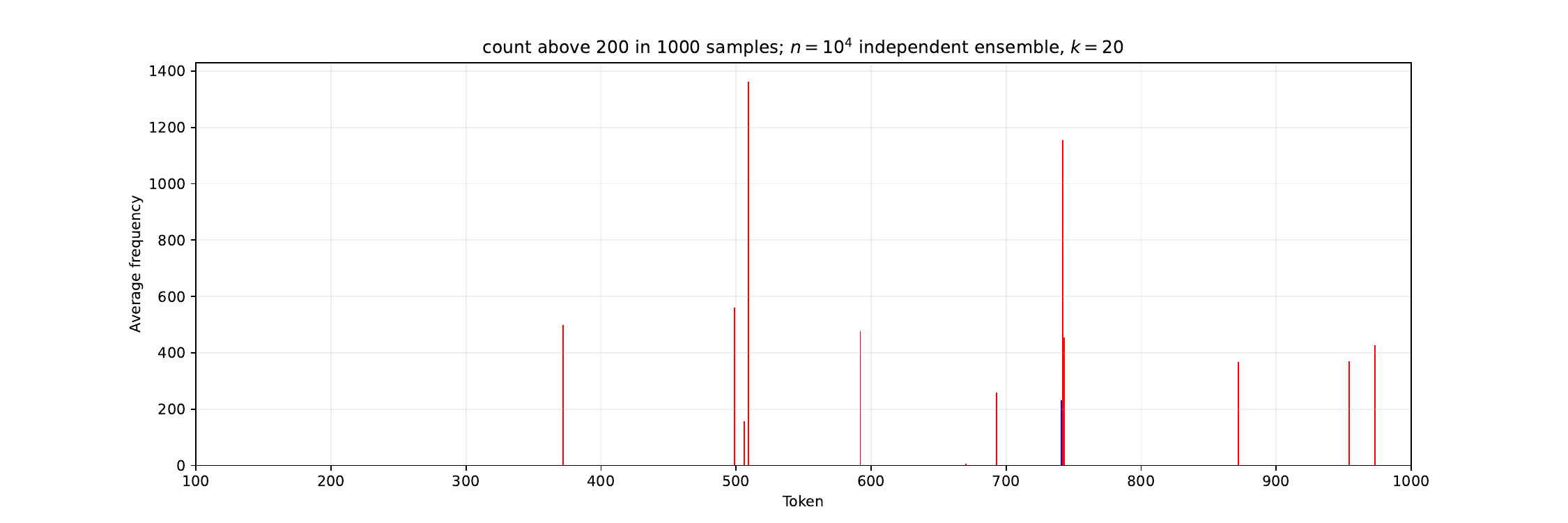}\\
    \multicolumn{2}{l}{$T=500$ coverage: coo: 83\% ind: 34\% support: coo: 277 ind: 6 }\\
    \includegraphics[width=0.22\textwidth]{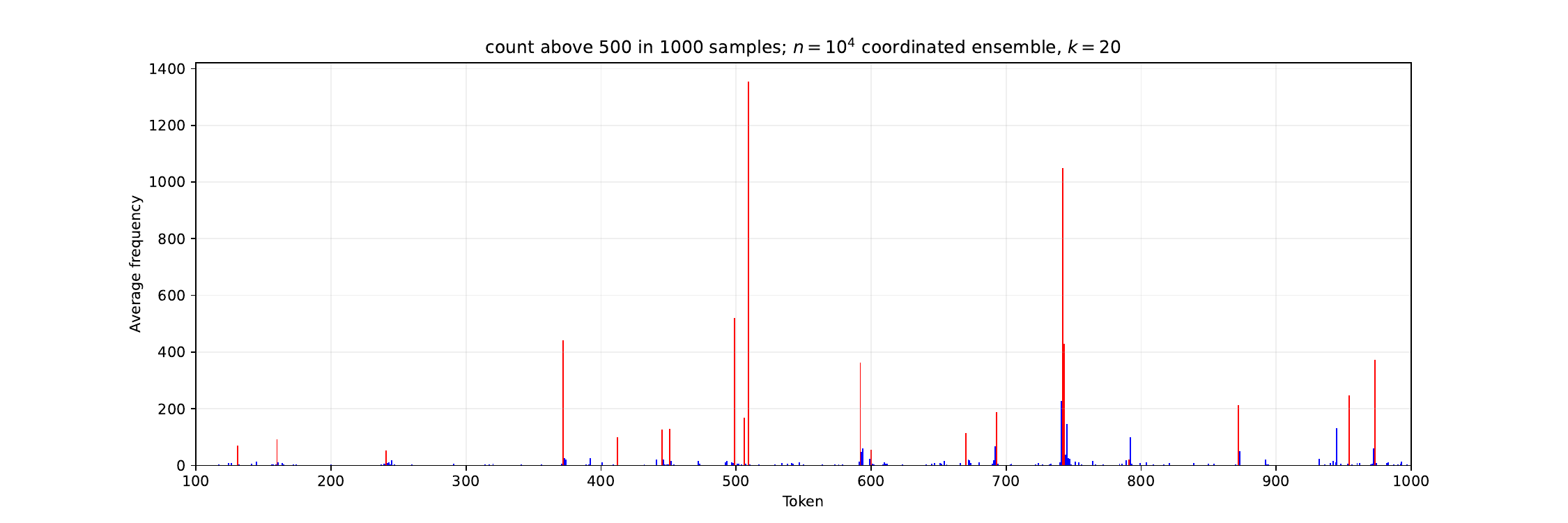} &
    \includegraphics[width=0.22\textwidth]{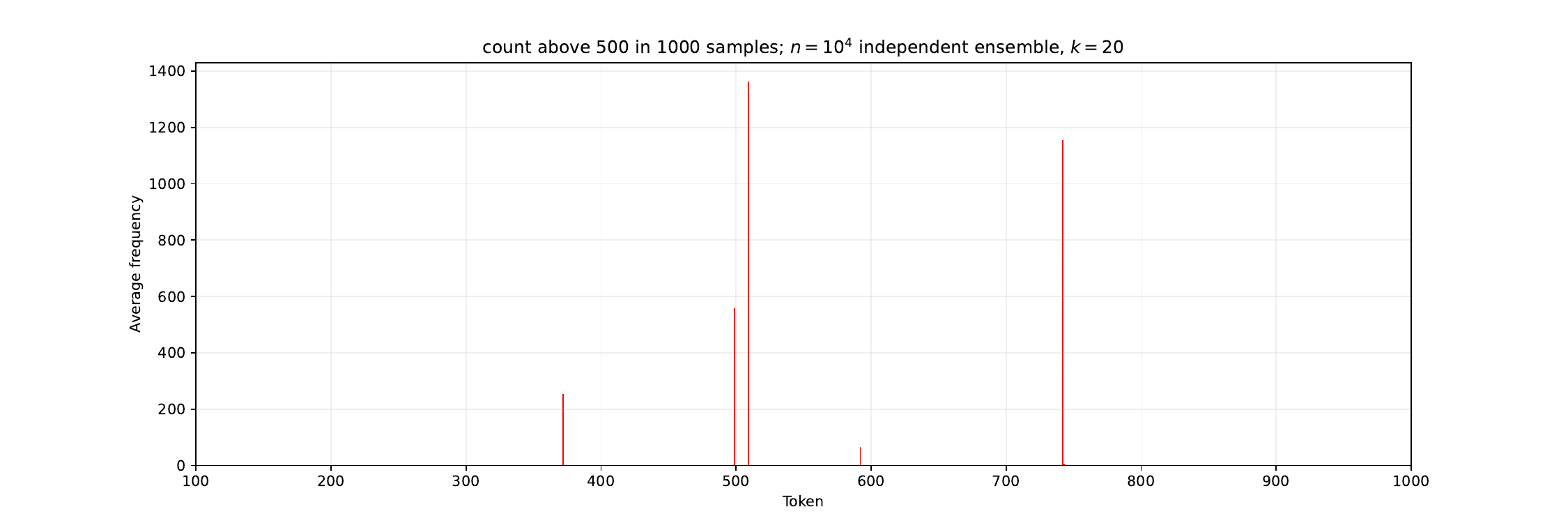}\\
    \multicolumn{2}{l}{$T=1000$ coverage: coo: 72\% ind: 25\% support: coo: 233 ind: 2}\\
    \includegraphics[width=0.22\textwidth]{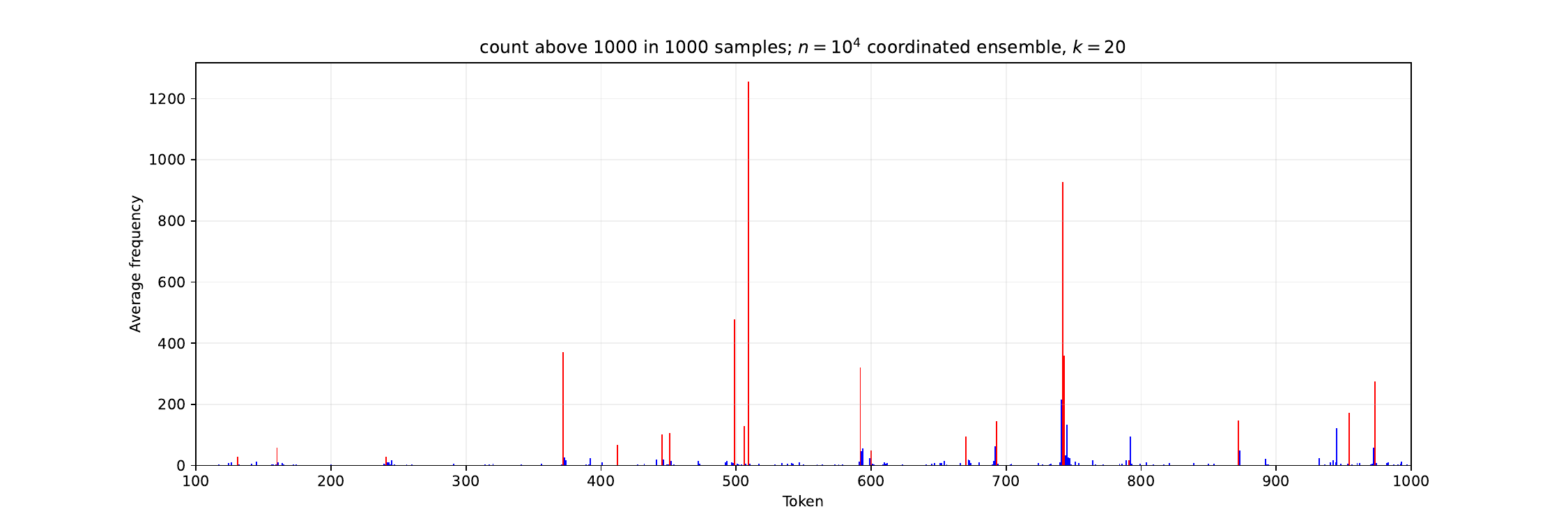} &
    \includegraphics[width=0.22\textwidth]{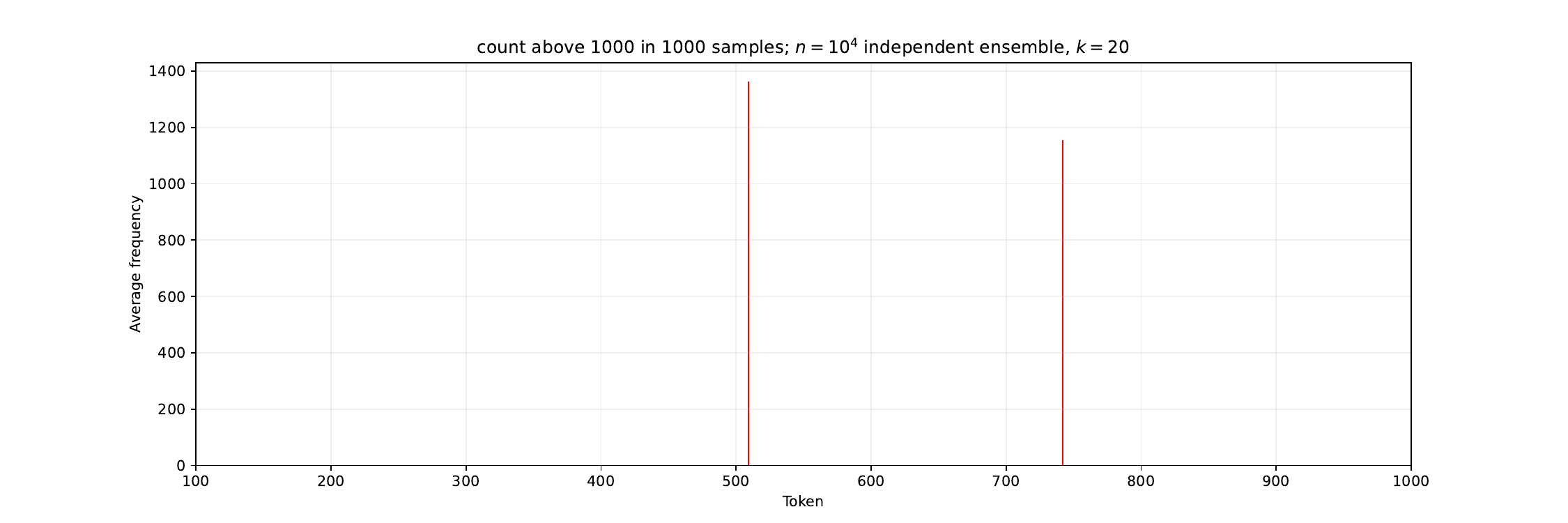}\\
    \multicolumn{2}{l}{$T=2000$ coverage: coo: 57\% ind: 0\% support: coo: 178 ind: 0}\\
    \includegraphics[width=0.22\textwidth]{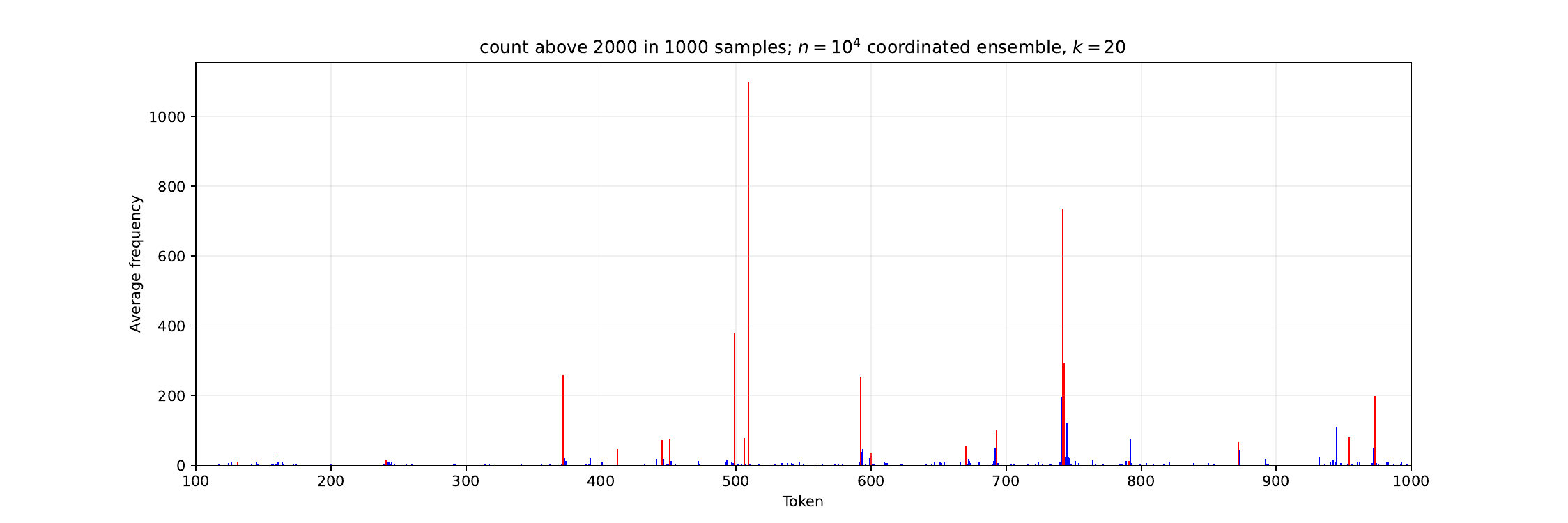} &
    \includegraphics[width=0.22\textwidth]{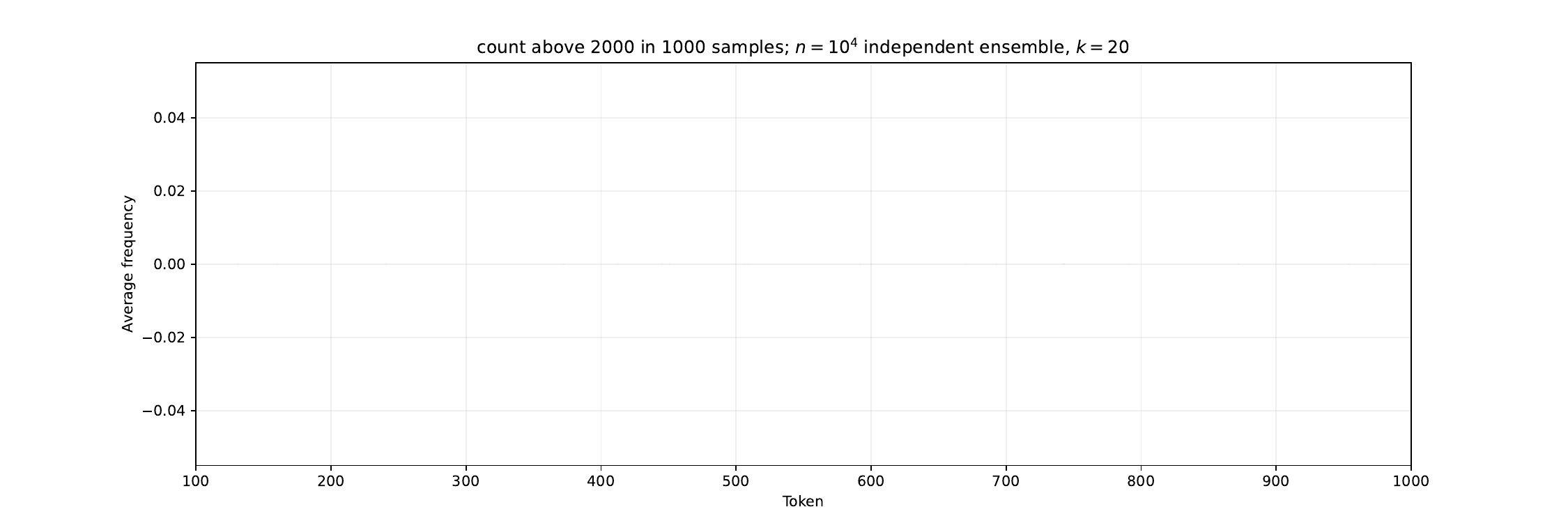}\\
    \multicolumn{2}{l}{$T=5000$ coverage: coo: 30\% ind: 0\% support: coo: 98 ind: 0}\\
    \includegraphics[width=0.22\textwidth]{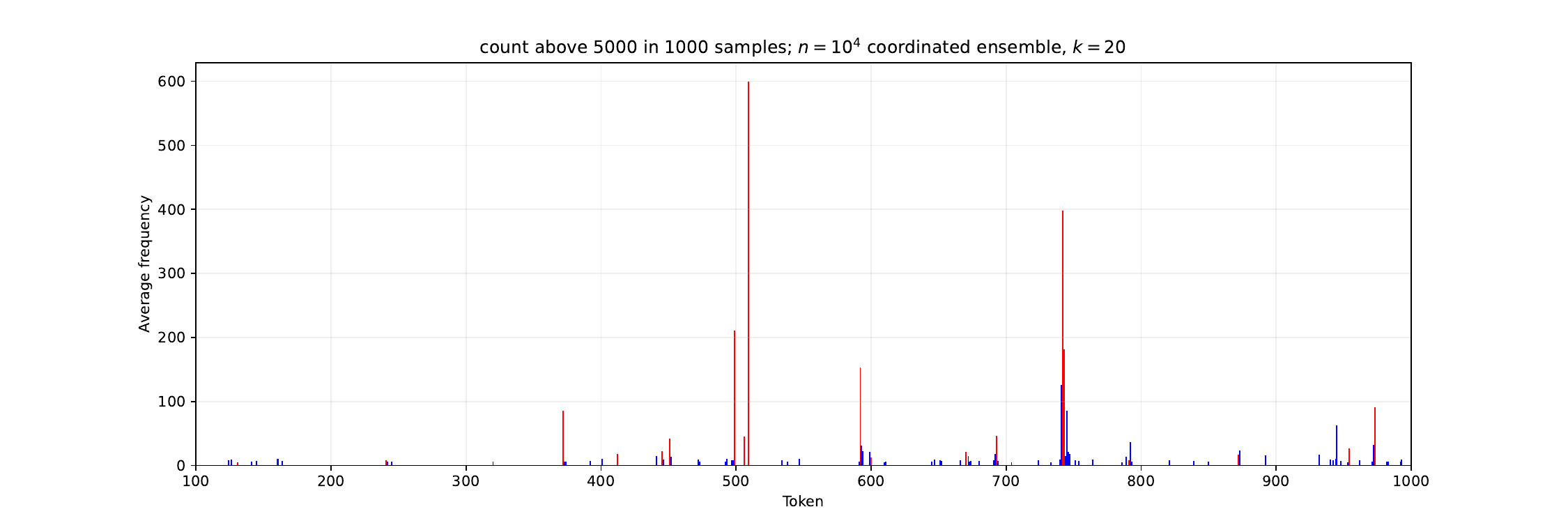} &
    \end{tabular}
    \caption{Coverage histograms averaged over $r=10^3$ samples. Filter $T\in [100,200,500,1000,2000,5000]$. $k=20$ Left: coordinated Right: Independent}  
    \label{fig:abovethreshold20}  
\end{figure}
}

\begin{figure}[h]
    \centering
    \begin{tabular}{c}
    
    \begin{minipage}{0.25\textwidth}
    {\scriptsize
      $T=100$ \\ coverage: coo: 94\% ind: 75\% \\ support: coo: 417 ind: 24}
    \end{minipage}\hfill
    \begin{minipage}{0.72\textwidth}
    \includegraphics[width=0.49\textwidth]{plots/llama3/llama3_k20_p1_r1000_T100cooabovethresh.pdf}
    \includegraphics[width=0.49\textwidth]{plots/llama3/llama3_k20_p1_r1000_T100indabovethresh.pdf} 
    \end{minipage}\\

    \begin{minipage}{0.25\textwidth}
    {\scriptsize
      $T=200$ \\ coverage: coo: 91\% ind: 63\% \\ support: coo: 365 ind: 13}
    \end{minipage}\hfill
    \begin{minipage}{0.72\textwidth}
    \includegraphics[width=0.49\textwidth]{plots/llama3/llama3_k20_p1_r1000_T200cooabovethresh.pdf}
    \includegraphics[width=0.49\textwidth]{plots/llama3/llama3_k20_p1_r1000_T200indabovethresh.pdf} 
    \end{minipage}\\

    \begin{minipage}{0.25\textwidth}
    {\scriptsize
      $T=500$ \\ coverage: coo: 83\% ind: 34\% \\ support: coo: 277 ind: 6}
    \end{minipage}\hfill
    \begin{minipage}{0.72\textwidth}
    \includegraphics[width=0.49\textwidth]{plots/llama3/llama3_k20_p1_r1000_T500cooabovethresh.pdf}
    \includegraphics[width=0.49\textwidth]{plots/llama3/llama3_k20_p1_r1000_T500indabovethresh.pdf} 
    \end{minipage}\\

    \begin{minipage}{0.25\textwidth}
    {\scriptsize
      $T=1000$ \\ coverage: coo: 72\% ind: 25\% \\ support: coo: 233 ind: 2}
    \end{minipage}\hfill
    \begin{minipage}{0.72\textwidth}
    \includegraphics[width=0.49\textwidth]{plots/llama3/llama3_k20_p1_r1000_T1000cooabovethresh.pdf}
    \includegraphics[width=0.49\textwidth]{plots/llama3/llama3_k20_p1_r1000_T1000indabovethresh.pdf} 
    \end{minipage}\\

    \begin{minipage}{0.25\textwidth}
    {\scriptsize
      $T=2000$ \\ coverage: coo: 57\% ind: 0\% \\ support: coo: 178 ind: 0}
    \end{minipage}\hfill
    \begin{minipage}{0.72\textwidth}
    \includegraphics[width=0.49\textwidth]{plots/llama3/llama3_k20_p1_r1000_T2000cooabovethresh.pdf}
    \includegraphics[width=0.49\textwidth]{plots/llama3/llama3_k20_p1_r1000_T2000indabovethresh.pdf} 
    \end{minipage}\\

    \begin{minipage}{0.25\textwidth}
    {\scriptsize
      $T=5000$ \\ coverage: coo: 30\% ind: 0\% \\ support: coo: 98 ind: 0}
    \end{minipage}\hfill
    \begin{minipage}{0.72\textwidth}
    \includegraphics[width=0.49\textwidth]{plots/llama3/llama3_k20_p1_r1000_T5000cooabovethresh.pdf} 
    \end{minipage}\\

    \end{tabular}
    \caption{Coverage histograms averaged over $r=10^3$ samples. Filter $T\in [100, 200, 500, 1000, 2000, 5000]$. $k=20$. Left: Coordinated. Right: Independent.}
    \label{fig:abovethreshold20}
\end{figure}

\begin{figure}[h]  
    \centering  
    \begin{tabular}{c}
    \begin{minipage}{0.25\textwidth}
    {\scriptsize
      $T=50$ \\ coverage: coo: 90\% ind: 65\% \\ support: coo: 638 ind: 77}
    \end{minipage}\hfill 
    \begin{minipage}{0.72\textwidth}
    \includegraphics[width=0.49\textwidth]{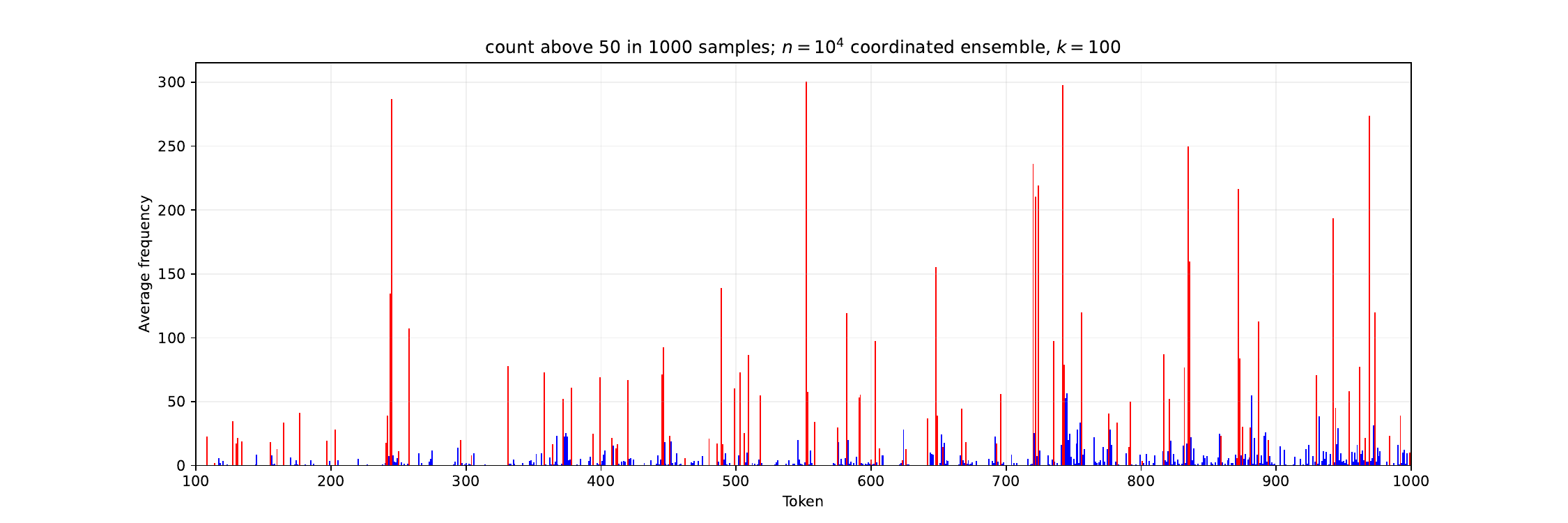}
    \includegraphics[width=0.49\textwidth]{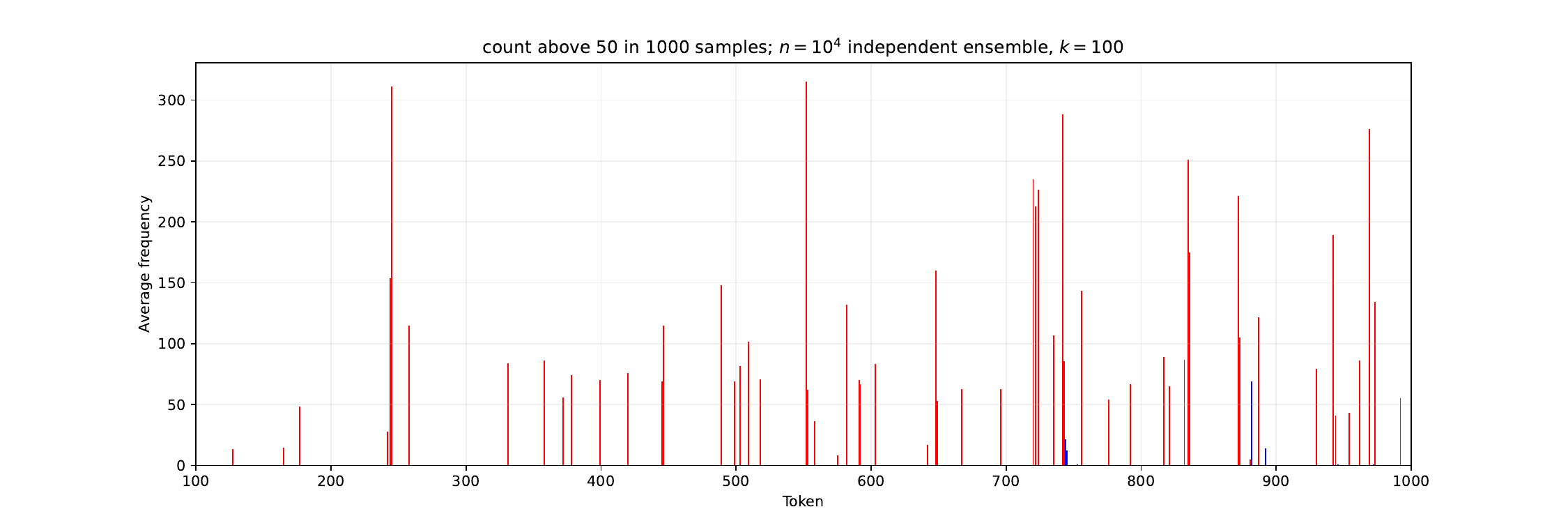} 
    \end{minipage}\\

    \begin{minipage}{0.25\textwidth}
    {\scriptsize
      $T=100$ \\ coverage: coo: 77\% ind: 42\% \\ support: coo: 576 ind: 34}
    \end{minipage}\hfill 
    \begin{minipage}{0.72\textwidth}
    \includegraphics[width=0.49\textwidth]{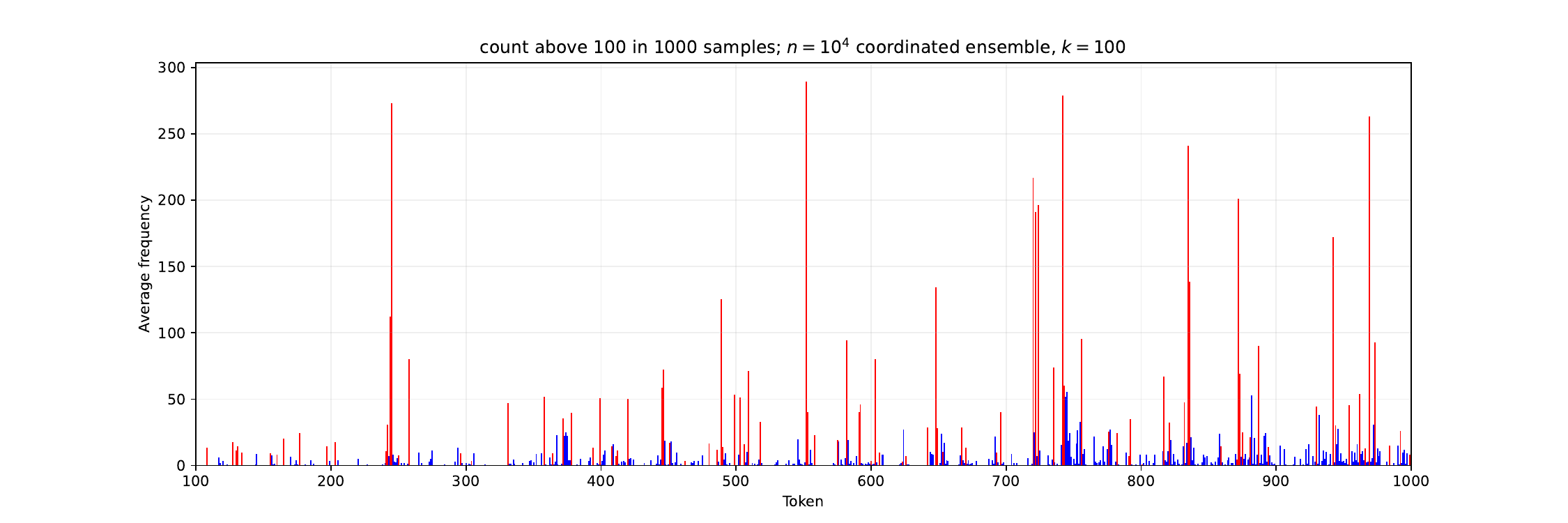}
    \includegraphics[width=0.49\textwidth]{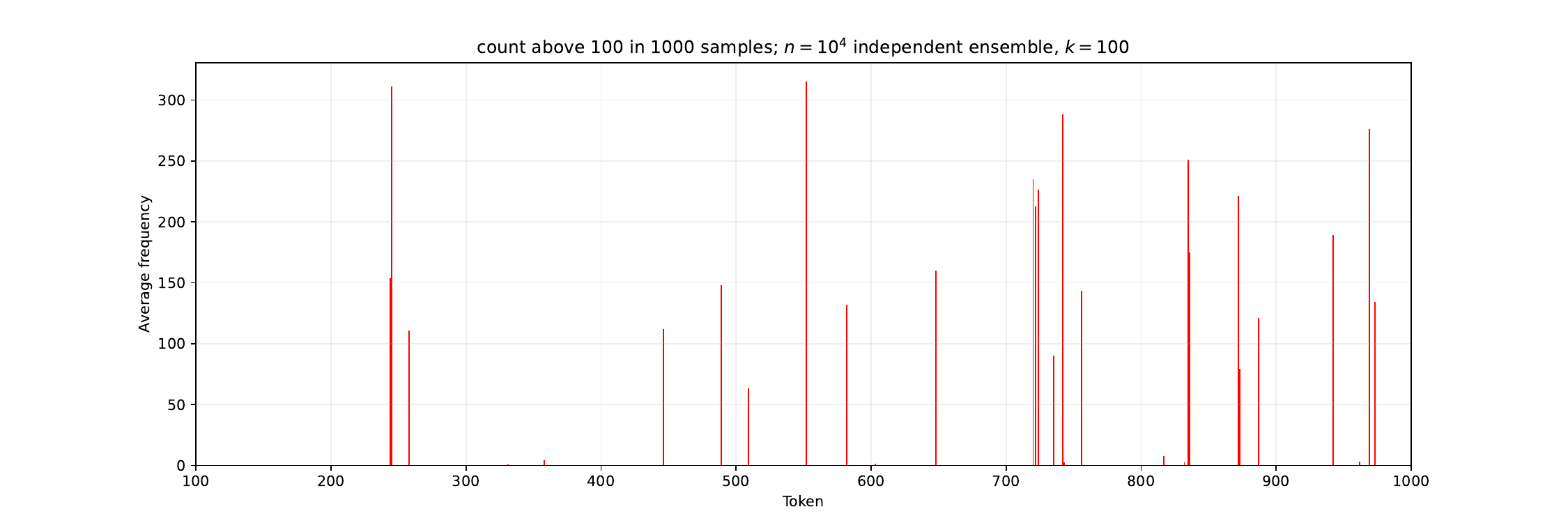} 
    \end{minipage}\\
    
    \begin{minipage}{0.25\textwidth}
    {\scriptsize
      $T=200$ \\ coverage: coo: 59\% ind: 23\% \\ support: coo: 509 ind: 11}
    \end{minipage}\hfill 
    \begin{minipage}{0.72\textwidth}
    \includegraphics[width=0.49\textwidth]{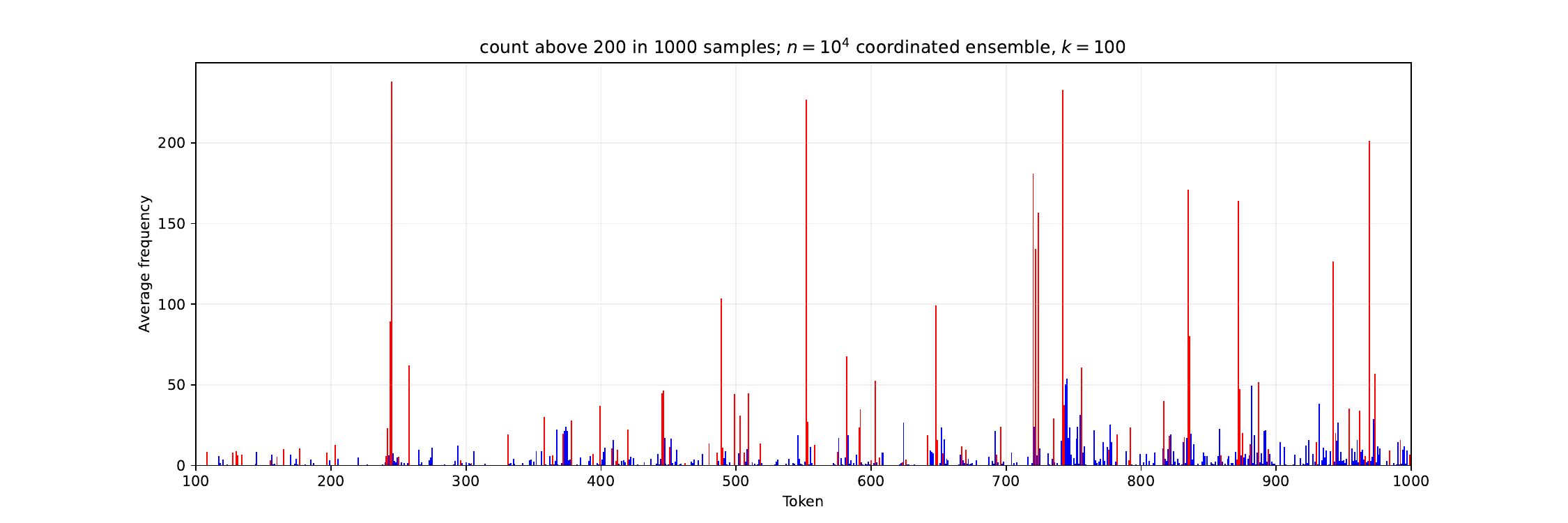}
    \includegraphics[width=0.49\textwidth]{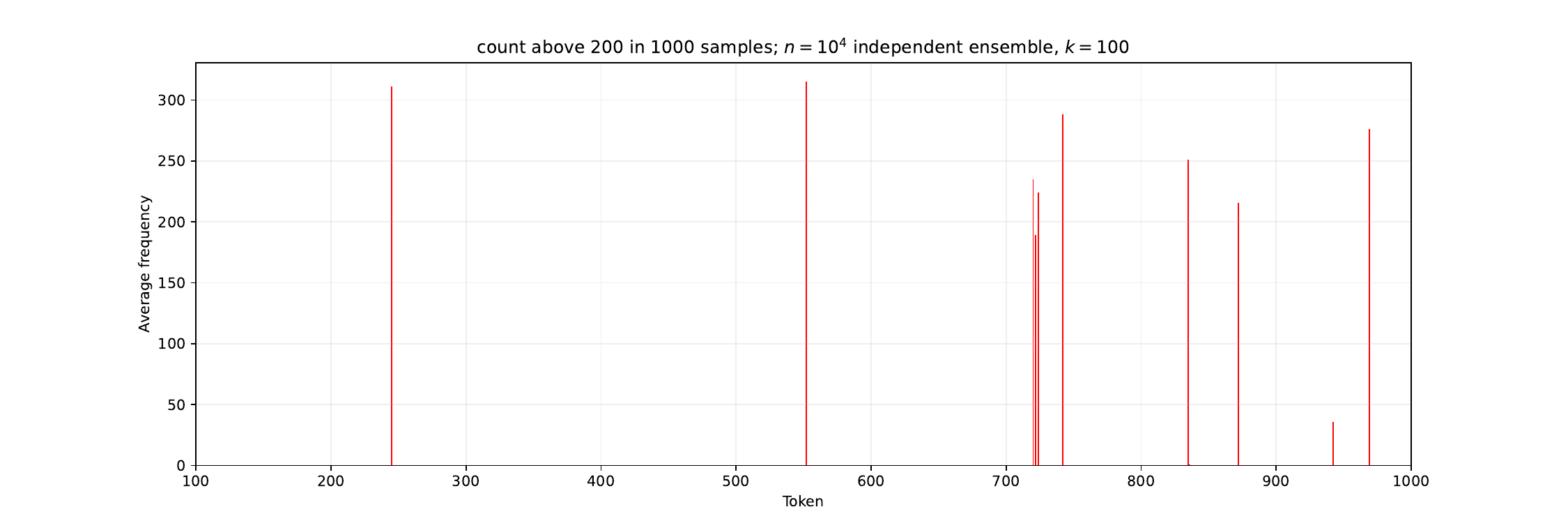} 
    \end{minipage}\\

    \begin{minipage}{0.25\textwidth}
    {\scriptsize
      $T=300$ \\ coverage: coo: 52\% ind: 6\% \\ support: coo: 468 ind: 4}
    \end{minipage}\hfill 
    \begin{minipage}{0.72\textwidth}
    \includegraphics[width=0.49\textwidth]{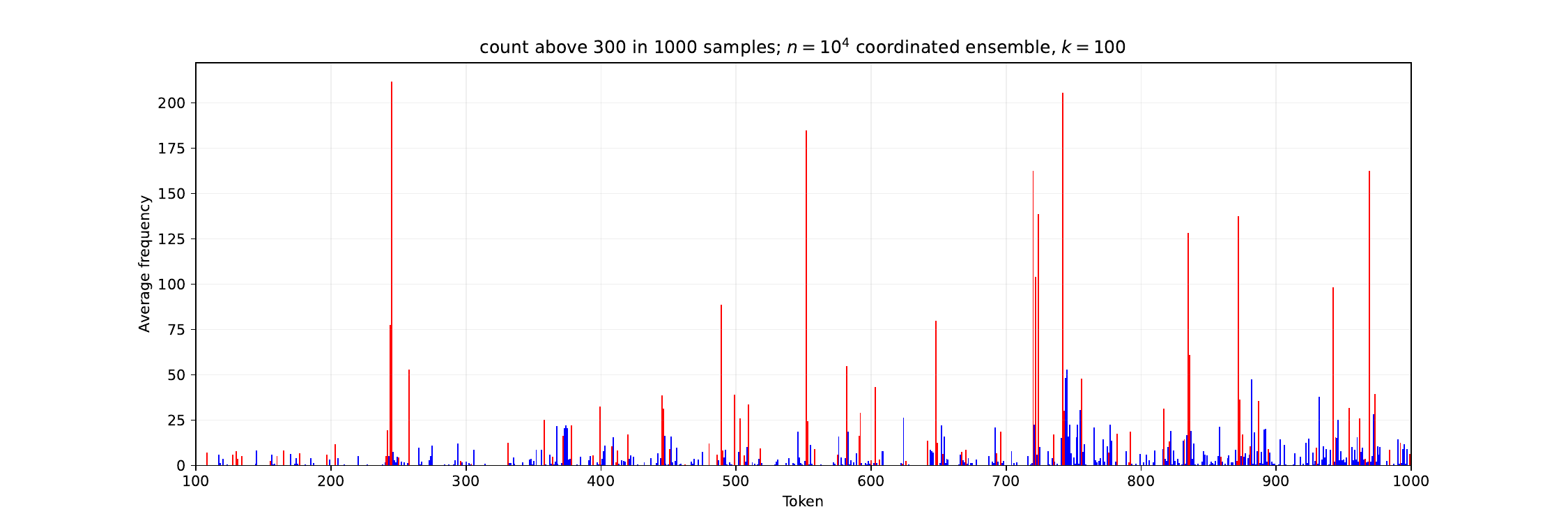}
    \includegraphics[width=0.49\textwidth]{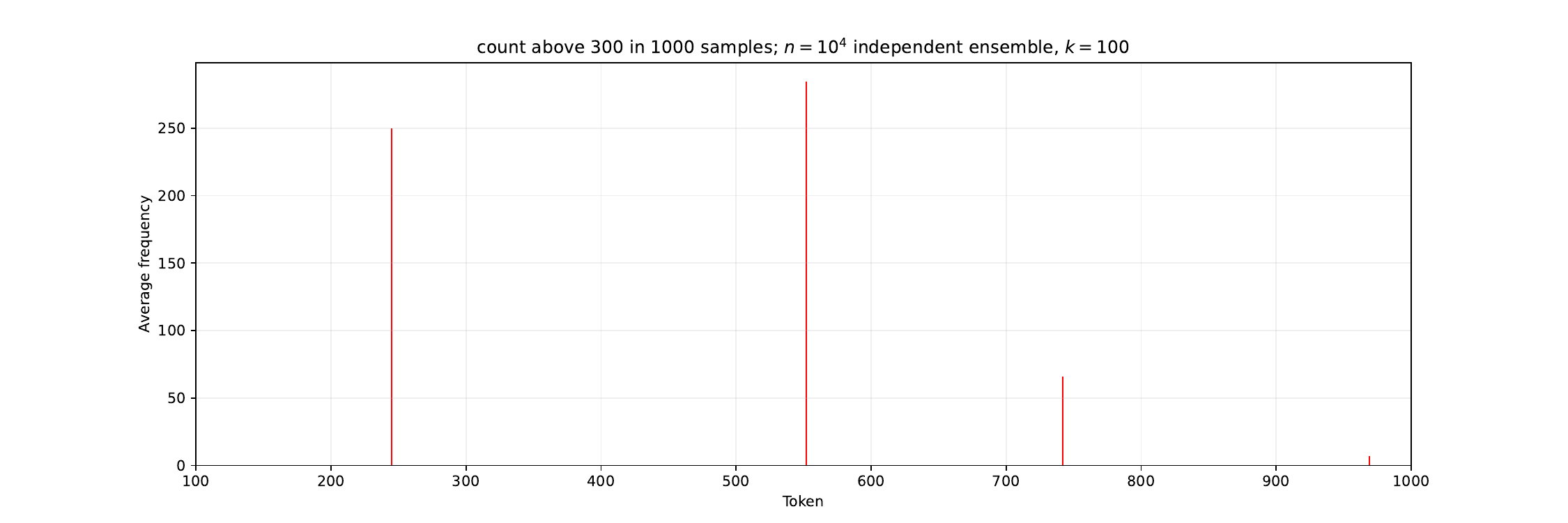} 
    \end{minipage}\\

       \begin{minipage}{0.25\textwidth}
    {\scriptsize
      $T=400$ \\ coverage: coo: 47\% ind: 0\% \\ support: coo: 441 ind: 0}
    \end{minipage}\hfill
    \begin{minipage}{0.72\textwidth}
    \includegraphics[width=0.49\textwidth]{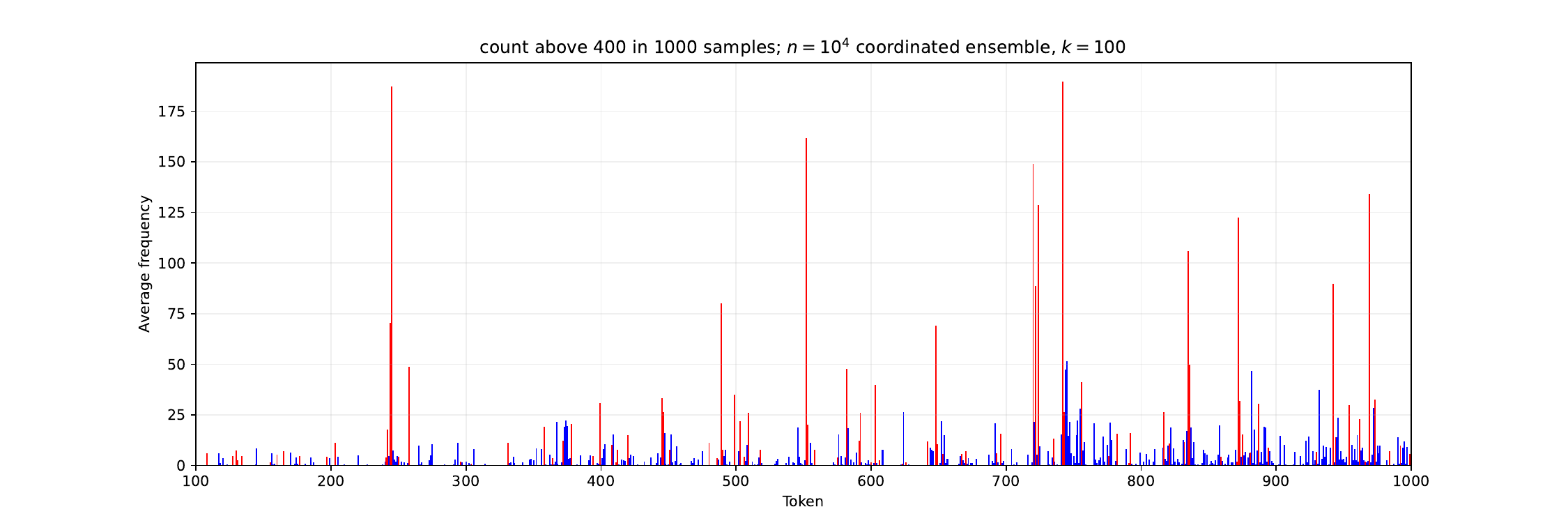}
    \includegraphics[width=0.49\textwidth]{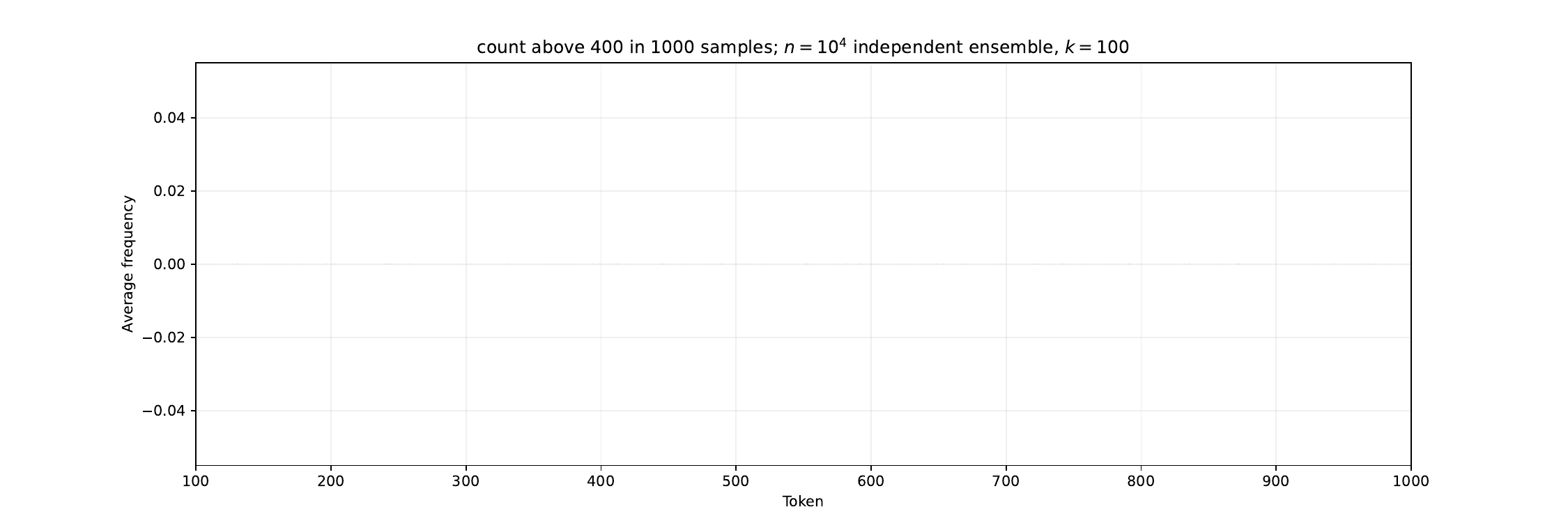} 
    \end{minipage}\\

    \begin{minipage}{0.25\textwidth}
    {\scriptsize
      $T=1000$ \\ coverage: coo: 33\% ind: 0\% \\ support: coo: 347 ind: 0}
    \end{minipage}\hfill
    \begin{minipage}{0.72\textwidth}
    \includegraphics[width=0.49\textwidth]{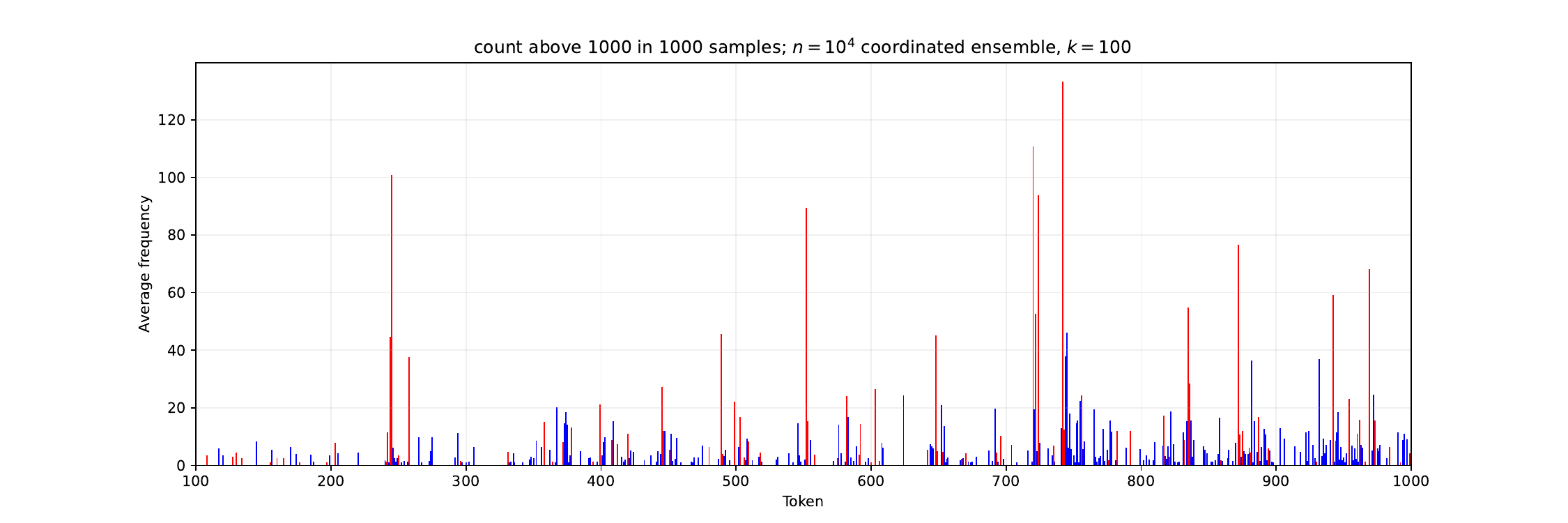} 
    \end{minipage}\\
\begin{minipage}{0.25\textwidth}
    {\scriptsize
      $T=2000$ \\ coverage: coo: 23\% ind: 0\% \\ support: coo: 234 ind: 0}
    \end{minipage}\hfill
    \begin{minipage}{0.72\textwidth}
    \includegraphics[width=0.49\textwidth]{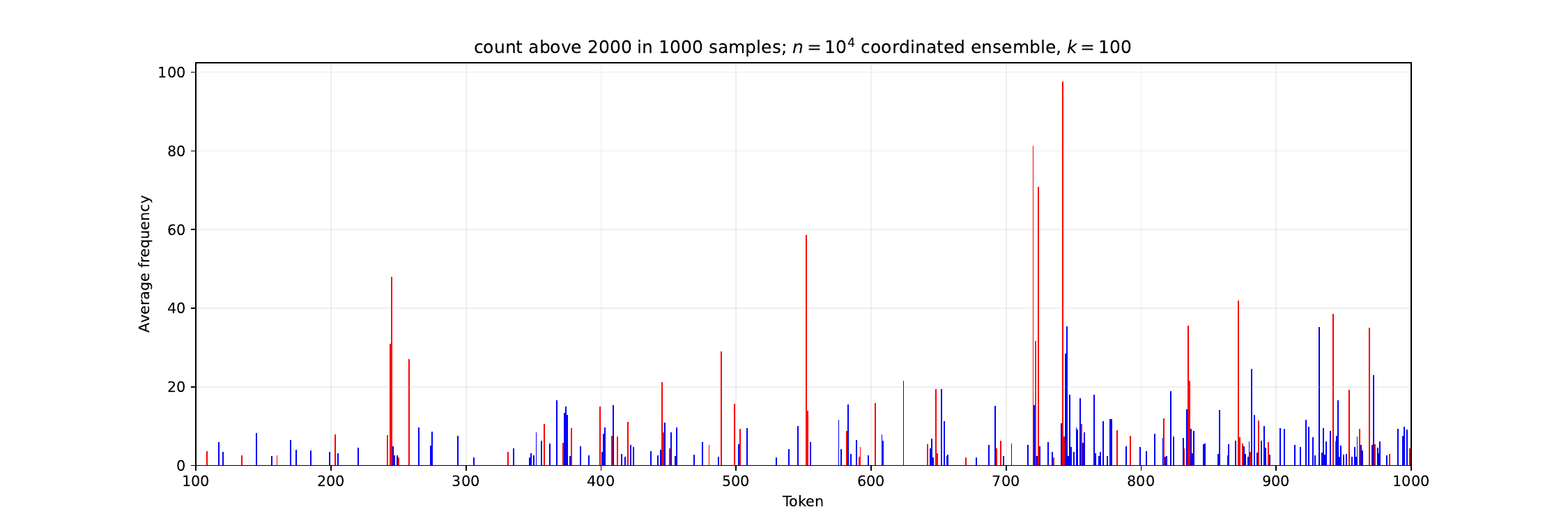} 
    \end{minipage}\\

    \begin{minipage}{0.25\textwidth}
    {\scriptsize
      $T=5000$ \\ coverage: coo: 9\% ind: 0\% \\ support: coo: 102 ind: 0}
    \end{minipage}\hfill
    \begin{minipage}{0.72\textwidth}
    \includegraphics[width=0.49\textwidth]{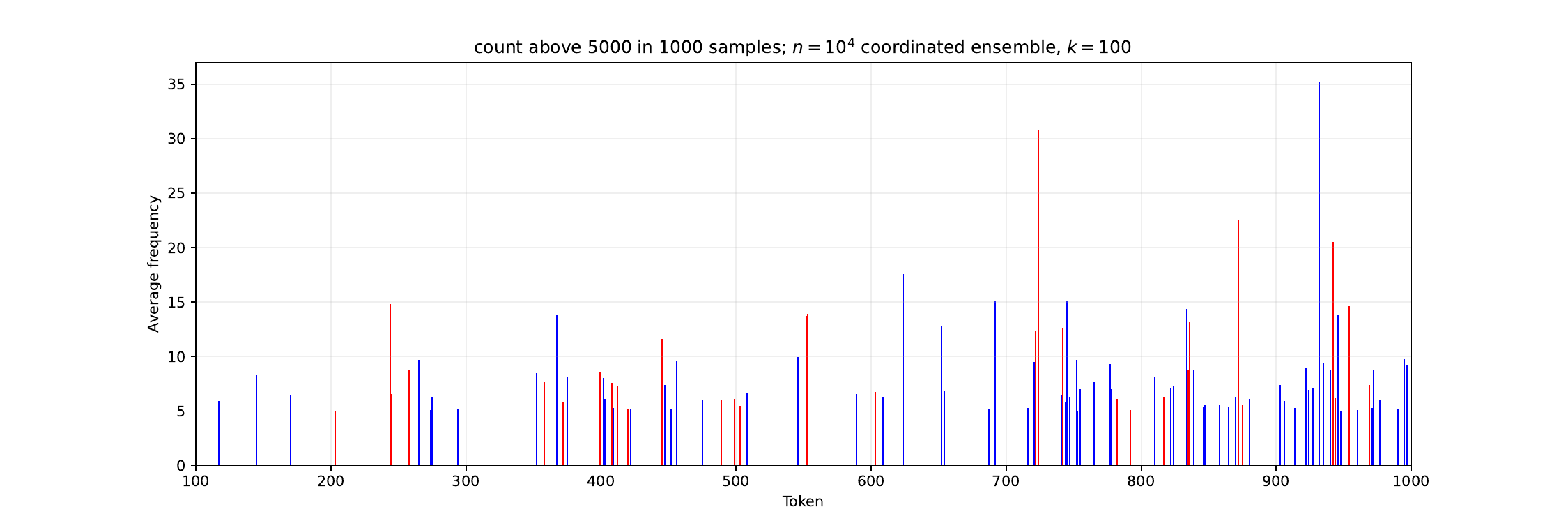} 
    \end{minipage}\\
    
    \end{tabular}
    \caption{Average of $r=10^3$ sampled histograms for filter $T\in [50,100,200,300,400,1000,2000,5000]$. $k=100$ Left: coordinated Right: Independent}  
    \label{fig:abovethreshold100}  
\end{figure}

\ignore{
\begin{figure}[h]  
    \centering  
    \begin{tabular}{cc}
    \multicolumn{2}{l}{$T=50$ coverage: coo: 90\% ind: 65\%  support: coo: 638 ind: 77}\\
    \includegraphics[width=0.22\textwidth]{plots/llama3/llama3_k100_p1_r1000_T50cooabovethresh.pdf} &
    \includegraphics[width=0.22\textwidth]{plots/llama3/llama3_k100_p1_r1000_T50indabovethresh.pdf}\\
    \multicolumn{2}{l}{$T=100$ coverage: coo: 77\% ind: 42\%  support: coo: 576 ind: 34}\\
    \includegraphics[width=0.22\textwidth]{plots/llama3/llama3_k100_p1_r1000_T100cooabovethresh.pdf} &
    \includegraphics[width=0.22\textwidth]{plots/llama3/llama3_k100_p1_r1000_T100indabovethresh.pdf}\\
    \multicolumn{2}{l}{$T=200$ coverage: coo: 59\% ind: 23\%  support: coo: 509 ind: 11}\\
    \includegraphics[width=0.22\textwidth]{plots/llama3/llama3_k100_p1_r1000_T200cooabovethresh.pdf} &
    \includegraphics[width=0.22\textwidth]{plots/llama3/llama3_k100_p1_r1000_T200indabovethresh.pdf}\\
    \multicolumn{2}{l}{$T=300$ coverage: coo: 52\% ind: 6\%  support: coo: 468 ind: 4}\\
    \includegraphics[width=0.22\textwidth]{plots/llama3/llama3_k100_p1_r1000_T300cooabovethresh.pdf} &
    \includegraphics[width=0.22\textwidth]{plots/llama3/llama3_k100_p1_r1000_T300indabovethresh.pdf}\\
    \multicolumn{2}{l}{$T=400$ coverage: coo: 47\% ind: 0\%  support: coo: 441 ind: 0}\\
    \includegraphics[width=0.22\textwidth]{plots/llama3/llama3_k100_p1_r1000_T400cooabovethresh.pdf} &
    \includegraphics[width=0.22\textwidth]{plots/llama3/llama3_k100_p1_r1000_T400indabovethresh.pdf}\\
    \multicolumn{2}{l}{$T=1000$ coverage: coo: 33\% ind: 0\%  support: coo: 347 ind: 0}\\
    \includegraphics[width=0.22\textwidth]{plots/llama3/llama3_k100_p1_r1000_T1000cooabovethresh.pdf} & \\
    \multicolumn{2}{l}{$T=2000$ coverage: coo: 23\% ind: 0\%  support: coo: 234 ind: 0}\\
    \includegraphics[width=0.22\textwidth]{plots/llama3/llama3_k100_p1_r1000_T2000cooabovethresh.pdf} & \\
    \multicolumn{2}{l}{$T=5000$ coverage: coo: 9\% ind: 0\%  support: coo: 102 ind: 0}\\
    \includegraphics[width=0.22\textwidth]{plots/llama3/llama3_k100_p1_r1000_T5000cooabovethresh.pdf} & 
    \end{tabular}
    \caption{Coverage histograms averaged over $r=10^3$ samples. Filter $T\in [50,100,200,300,400,1000,2000,5000]$. $k=100$ Left: coordinated Right: Independent}  
    \label{fig:abovethreshold100}  
\end{figure}
}

\section{DP Aggregation Methods} \label{cooagg:sec}

We propose two meta aggregation schemes that are parametrized by $L$ and allow 
for additive error of $L$ in the counts. In \Cref{cooagghomo:sec}, for the 
homogeneous ensembles regime ($\tau \gg n/2$), we  propose $\textsc{LArgMax}_L$, a variation of $\textsc{TArgMax}$. In \Cref{cooagghetero:sec} we propose $\textsc{TWS}_L$, a noisy version of $\textsc{TWS}$ which applies with $\tau \ge 4L$).
We establish the diversity preservation properties per \Cref{def:diversity} and show they can be instantiated to be $(\eps,\delta)$-DP with $L=\eps^{-1}\log(1/\delta)$.

\subsection{Homogeneous Ensembles} \label{cooagghomo:sec}
\begin{algorithm2e}\caption{$\textsc{LArgMax}_L$ Aggregator}\label{distagg:alg}
\DontPrintSemicolon
\KwIn{$\boldsymbol{c}\sim \mathcal{H}_{\mathrm{coo}}$}
\KwOut{$j\in V\cup\{\fail\}$}
$(j,\tilde{c}_j) \gets \texttt{NoisyArgMax}_L(\boldsymbol{c})$ \tcp*{noisy maximizer with additive error at most $L$: $\max_h c_h -L \leq \tilde{c}_j \leq c_j+L$}
\leIf{$\tilde{c}_j > (n/2 + L)$}{\Return{$j$}}{\Return{$\fail$}}
\end{algorithm2e}

The $\textsc{LArgMax}_L$ aggregator, a version of $\textsc{TArgMax}$ that allows for noisy histograms, is described in \Cref{distagg:alg}. It is specified in terms of an operator $\texttt{NoisyArgMAx}_L$ that inputs a histogram $\boldsymbol{c}$ and outputs $(j,\tilde{\boldsymbol{c}})$
such that $\max_h c_h -L \leq \tilde{c}_j \leq c_j+L$.

Observe that when $\tilde{c}_j > (n/2 + L)$ it holds that $c_j > n/2$ and therefore $j = \arg\max_h c_h$, that is, it is the true maximizer. Moreover, if the true maximizer satisfies $\max_j c_j > n/2+L$, it is the output of $\textsc{LArgMax}_L$.

We show that $\textsc{LArgMax}_L$ is diversity preserving:
\begin{lemma} [Diversity-preservation of $\textsc{LArgMax}_L$ (\Cref{distagg:alg})]\label{distalghigh:lemma} For $L<n/30$. The ensemble sampler
$\ESampler^{\mathrm{coo}}_A$, where  
$A= \textsc{LArgMax}_L$ (\Cref{distagg:alg}), is diversity preserving (as in \Cref{def:diversity}) with $\tau = 0.6 n$,
    $\beta = \Theta(1)$ and $\gamma = 2$.
\end{lemma}
\begin{proof}
Using the same argument as in the proof of \Cref{diversityhighcount:thm}, a token $j$ can be returned only when $\tilde{c}_j > n/2+L \implies c_j > n/2$. Therefore $\gamma=2$.

Consider a token $j$  with
support $m \geq \tau= 0.6n$ for probability $q$.
From \Cref{diverseAgg:Lemma} with  $p=18/17$, 
$\Pr[c_j \geq (17/30) n] \geq 0.5 \ln(18/17) q$. Since
$c_j \geq n/2+2L \implies \tilde{c}_j > n/2+L$,
in this case, the token $j$ is the output. We obtain
$\beta = 0.5\cdot 0.6 \ln(18/17) = \Theta(1)$.
\end{proof}

\paragraph{DP instantiations of $\texttt{NoisyArgMax}_L$}
Noisy maximizer aggregation is 
well studied in differential privacy 
\citep{DBLP:conf/focs/McSherryT07,DBLP:conf/nips/DurfeeR19,DBLP:conf/icml/QiaoSZ21}.  Generally, methods vary with the choice of noise distribution and  
there is a (high probability) additive error bound $L$ that depends on the privacy parameters and in some cases also on the support size and confidence.
Concretely, by adding truncated noise (e.g., truncated geometric~\citep{Desfontaines2022PartitionSelection}) to each count, we obtain $L= \eps^{-1}\log(1/\delta)$ with
$(\eps,\delta)$-DP.
Combining this with \Cref{distalghigh:lemma} we obtain an ensemble sampler with the following privacy and diversity preservation guarantees:
\begin{corollary} [Properties of $\textsc{DPArgMax}_{(\eps,\delta)}$] \label{DPArgMax:cor} Let $\eps,\delta>0$ be such that $\eps^{-1}\log(1/\delta)< n/30$.
Let $A= \textsc{DPArgMax}_{(\eps,\delta)}$ be the
aggregator
$\textsc{LArgMax}_L$ (\Cref{distagg:alg}) instantiated with an $(\eps,\delta)$-DP $\texttt{NoisyArgMax}_L$ (e.g. truncated geometrics and $L=\eps^{-1}\log(1/\delta)$).

Then the ensemble sampler
$\ESampler^{\mathrm{coo}}_A$
is $(\eps,\delta)$-DP and
diversity preserving (as in \Cref{def:diversity}) with $(\tau = 0.6 n,\beta = \Theta(1),\gamma = 2)$.
\end{corollary}

The two most common noise distributions for DP are Gaussian and Laplace noise. (Cold) PATE was studied with both. The Gaussian-noise based Confident-GNMax aggregator \citep{DBLP:conf/iclr/PapernotSMRTE18,duan2023flocks} empirically outperformed the Laplace-based LNMAX~\citep{PapernotAEGT:ICLR2017} on Cold PATE. 
The advantages of Gaussian noise are concentration (less noise to separate a maximizer from low frequency tokens) and efficient composition. and more effective data dependent privacy analysis. Laplace-based noise on the other hand can benefit from sparsity of the histogram (with approximate DP), a consideration as the key space of tokens or strings of token can be quite large, there is an optimized mechanism with weighted sampling. Both benefit from data dependent privacy analysis that benefits from consistently large maximum counts or large margins using tools such as~\citep{targetcharging:arxiv2023}. Our privacy analysis in Section~\ref{datadependent:sec} uses a data-dependent Laplace-based approach.

\subsection{Heterogeneous Ensembles} \label{cooagghetero:sec}
\begin{algorithm2e}\caption{$\textsc{LWS}_L$ Aggregator}\label{distaggmed:alg}
\DontPrintSemicolon
\KwIn{$\boldsymbol{c}\sim \mathcal{H}_{\mathrm{coo}}$}
\KwOut{$j\in V\cup\{\fail\}$}
$S\gets $ sample $j\in V$ with probability $\frac{c_j}{n}$\tcp*{Weighted sampling of a token from $\boldsymbol{c}$}
$S^*\gets \texttt{Select}_L(S)$ \tcp*{$S^*\subset S$ contains all tokens with $c_j\geq 2L$ and a subset of tokens with $c_j< 2L$}
\leIf{$S^*=\emptyset$}{\Return{$\fail$}}{\Return a uniform at random token from $S^*$}
\end{algorithm2e}

The $\textsc{LWS}_L$ aggregator, a relaxed weighted scheme version of $\textsc{TWS}$ is described in \Cref{distaggmed:alg}. It is specified in terms of $\texttt{Select}_L$ operators that inputs a subset of indices $S$, retains all those with histogram counts $c_j\geq 2L$ and possibly removes each token with $1\leq c_j<2L$.
The aggregator first obtains a (non privacy preserving)  weighted sample $S$ by independently including each token $j$ with probability $c_j/n$. We then apply $\texttt{Select}_L$ to $S$ to obtain $S^*\subset S$.
Finally, we return a random token from $S^*$ or $\fail$ is $S^*$ is empty.

\begin{lemma} [Diversity-preservation of $\textsc{LWS}_L$ (\Cref{distaggmed:alg})]\label{distaggmed:lemma}
For $L\geq 1$, the ensemble sampler $\mathcal{M}^{\mathrm{coo}}_A$, where $A=\textsc{LWS}_L$ is diversity preserving in the sense of \Cref{def:diversity} with $\tau=4 L$, $\beta = \Theta(1)$, and $\gamma=1$.
\end{lemma}
\begin{proof} 
A token $j$ can be included in $S^*$ and hence be the output with probability at most $c_j/n$. Hence, (using the same argument as in the proof of \Cref{diversityhighcount:thm}), $\gamma=1$. 

As for the diversity preservation property, consider a token $j$ with support $m \geq \tau=4L$ for probability $q$. From 
\Cref{diverseAgg:Lemma}, $\Pr[c_j\geq m/2 \geq 2L] \geq (1/2)\log(2) q$. In this case,
$\Pr[j\in S]\geq m/(2n)$ and since $c_j \geq 2L$, 
$\Pr[j\in S^*]\geq m/(2n)$. Now observer that conditioned on $j\in S$, $\Pr[|S|\leq 2]\geq 1/2$. That is, the probability that there is at most one additional item in the sample is at least $1/2$.
In this case, $j$ is the output with probability $1/2$.  So overall, if $m \geq \tau$, the probability that $j$ is the output is at least $\frac{m}{8n} (1/2)\log(2) q$. We therefore get $\beta=\log(2)/16$.
\end{proof}

DP implementations of $\texttt{Select}_L$ are discussed in \Cref{heteromethods:sec}.
For concreteness, the privacy-preserving weighted sampling method of~\cite{CohenGeriSarlosStemmer:ICML2021} gives 
$(\eps,\delta)$-DP with $L=\eps^{-1}\log(1/\delta)$.
Combining this with \Cref{distaggmed:lemma} we obtain an ensemble sampler with the following privacy and diversity preservation guarantees:

\begin{corollary} [Properties of $\textsc{DPWS}_{(\eps,\delta)}$] \label{DPWS:cor} 
For $\eps,\delta>0$ 
define the aggregator $A=\textsc{DPWS}_{(\eps,\delta)}$ to be
$\textsc{LWS}_L$ instantiated with $(\eps,\delta)$-DP 
$\texttt{Select}_L$ with $L=\eps^{-1}\log(1/\delta)$.

Then the ensemble sampler
$\ESampler^{\mathrm{coo}}_A$
is $(\eps,\delta)$-DP and
diversity preserving (as in \Cref{def:diversity}) with $(\tau = 4\eps^{-1}\log(1/\delta),\beta = \Theta(1),\gamma = 1)$.  
\end{corollary}




\section{Privacy analysis considerations} \label{datadependent:sec}

When performing DP sequential text generation we need to consider composition over steps. 

In this section (homogeneous ensembles) and \Cref{heteromethods:sec}) (heterogeneous ensembles) we explore data-dependent privacy analysis that allow for many more queries to be performed for the same privacy budget, compared with naive use of DP composition. We can avoid privacy loss on responses that agree with the prior distribution of the public model with a public prompt. We can  benefit from the particular structure of histograms generated by coordinated ensembles. The privacy loss does not depend on queries with no yield, with high agreement, or with agreement with a public prior. With heterogeneous ensembles we can also gain from 
individualized per-teacher privacy charging.

We explore the benefits of data-dependent privacy analysis when the aggregation follows Algorithm \ref{distagg:alg} (homogeneous ensembles).
The utility depends on the number of queries with yield (token returned) that can be returned for a given privacy budget. We use synthetically generated teacher distributions with varying size common component (that can be arbitrarily diverse) and distinct (private) components. 

Broadly speaking, with data-dependent analysis, we incur privacy loss on ``borderline'' queries where the output of the DP aggregation has two or more likely outputs. Queries that return a particular token with high probability or return $\fail$ with high probability incur little privacy loss.

We demonstrate that with  \Cref{distagg:alg}, we can expect that
only a small fraction of frequency histograms generated by coordinated ensembles are ``borderline.''  (i) For queries with high \emph{yield} (high probability of returning a token over the sampling of the shared randomness), the generated histograms tend to have a dominant token (and thus lower privacy loss). This because coordinated ensembles tend to ``break ties'' between tokens. (ii) For queries with low yield (high probability of $\fail$ response and low probability of returning a token), the total privacy loss only depends on yield responses. This means that high $\fail$ probability does not cause performance to deteriorate.

This is important because both these regimes are likely in sequential text generation and with coordinated ensembles.  We expect many of the tokens to follow the base model distribution and therefore have high agreement and not incur privacy loss. Or alternatively, instructions that require private data have no agreement and return $\fail$. The dependent privacy analysis means that generally we can process many more queries for the privacy budget than if we had just used a DP composition bound.

\begin{figure}[h]  
    \centering  
    \includegraphics[width=0.42\textwidth]{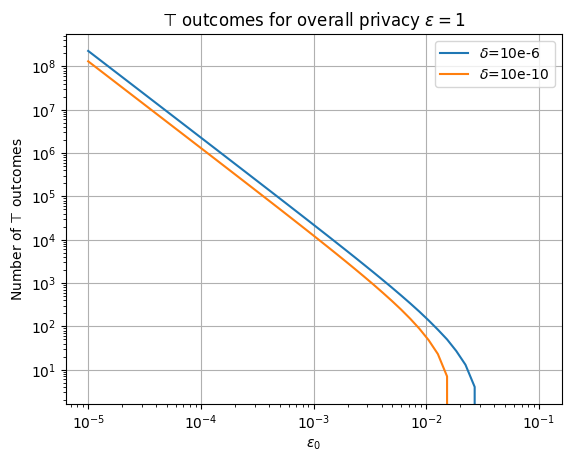} \includegraphics[width=0.42\textwidth]{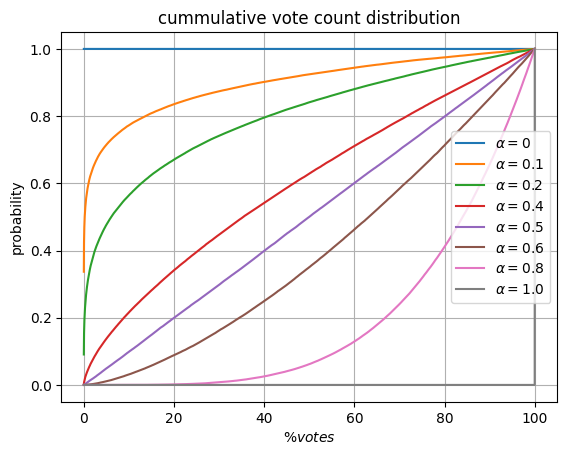}  
    \caption{Left: Number of $\top$ responses for $\varepsilon_0$-DP queries for total $\eps=1$ loss. Right: Cummulative maximum frequency for varying common part $\alpha$. }  
    \label{fig:cumvote}  
    \label{fig:TCT}  
\end{figure}

Our evaluation here uses $(\varepsilon,\delta)$ differential privacy~\citep{DMNS06}:
\begin{definition} [$(\varepsilon, \delta)$-Differential Privacy]
A randomized mechanism $\mathcal{M}$ provides $(\varepsilon, \delta)$-differential privacy if, for any two datasets $D$ and $D'$ differing in at most one element, and for any subset of outputs $S \subseteq \text{Range}(\mathcal{M})$, the following holds:
\[
\Pr[\mathcal{M}(D) \in S] \leq e^{\varepsilon} \Pr[\mathcal{M}(D') \in S] + \delta.
\]
\end{definition}

Concretely we consider \texttt{NoisyArgMax} using~\citep{CohenGeriSarlosStemmer:ICML2021}~\footnote{We mention the related (non optimized) sparsity-preserving methods \citep{BunNS19,KorolovaKMN09,VadhanDPsurvey:2017} and optimized but not sparsity-preserving \citep{GhoshRS:sicomp2012}.}
with the maximum sanitized frequency, with privacy parameters
$(\eps_0,\delta_0)$.  For privacy analysis across queries 
we applied the  Target Charging Technique (TCT) of~\citet{targetcharging:arxiv2023} with the \emph{boundary-wrapper} method. The wrapper modifies slightly the output distribution of the query algorithm (after conditioning on $\rho$!) to include an additional outcome $\top$ (\emph{target}). The wrapper returns $\top$ with this probability (that depends on the response distribution) and otherwise returns a sample from the output distribution of the wrapped algorithm.  The probability of $\top$ is at most $1/3$ and decreases with agreement (vanishes when there is response with probability closer to $1$). The technique allows us to analyse the privacy loss by only counting target hits, that is, queries with $\top$ response. Since the probability of $\top$ is at most $1/3$, we get in expectation at least two useful responses per target hit. But in case of agreements, we can get many more.
Figure~\ref{fig:TCT} (left) reports 
the number of $\top$ (target) responses we can have with the boundary wrapper method as a function of $\eps_0$ with overall privacy budget is $\varepsilon=1$. When $\eps_0 \leq 0.01$, it is about $(10 \eps_0)^{-2}$. 

With Hot PATE, we are interested in \emph{yield} responses, those that return a token (not $\fail$, and when we apply the boundary wrapper, also not $\top$). We study how the yield probability behaves for histograms generated by coordinated ensembles.

\paragraph{Synthetic Teacher distributions:}
We parametrize the set of teacher distributions by $\alpha\in (0,1]$, which is the probability of a common part to all distribution. This component is what we aim to transfer to the student.
The teacher distributions have probability vectors of the form \[\boldsymbol{p}^{(i)} = \alpha\cdot \boldsymbol{s} + (1-\alpha)\cdot \boldsymbol{r}^{(i)}\ ,\]
where $\boldsymbol{s}$ and $\boldsymbol{r}^{(i)}$ are probability vectors.
That is, with probability $\alpha$ there is a sample from the common  distribution $\boldsymbol{s}$, and with probability $(1-\alpha)$, there is a sample from an arbitrary distribution that is specific to each teacher.
Note that the common component $\boldsymbol{s}$ can be arbitrarily diverse, that is, $\|\boldsymbol{s}\|_1$ is permitted to be arbitrarily small.

When the histogram is generated by a coordinated ensemble, then the distribution of the maximum frequency $c$ of a token is dominated by 
sampling $y\sim \Exp[\alpha]$ and then 
$c \sim \Bin[e^{-y\cdot (1-\alpha)},n]$. It is visualized in Figure~\ref{fig:cumvote} (right) for varying values of $\alpha$.   
Note that across all weights $\alpha>0$ of the shared component, no matter how small $\alpha$ is, there is probability 
$\approx\alpha$ of being above a high threshold (and returning a token). The probability of $\fail$ (no agreement) in this case can be $\approx 1-\alpha$. Therefore $\alpha$ parametrizes the probability of yield over the sampling of the shared randomness. 
\begin{figure}[h]  
    \centering  
    \includegraphics[width=0.32\textwidth]{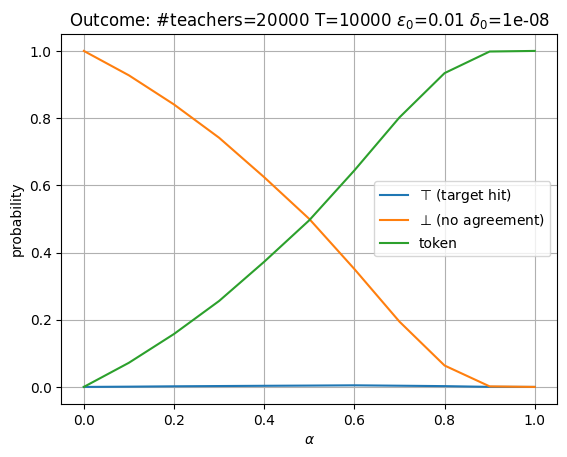}  
        \includegraphics[width=0.32\textwidth]{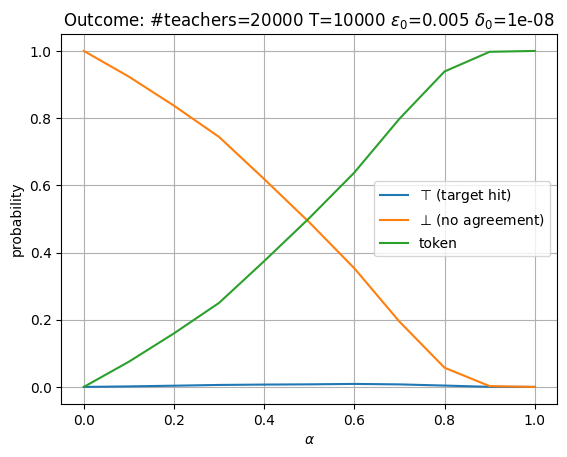}
        \includegraphics[width=0.32\textwidth]{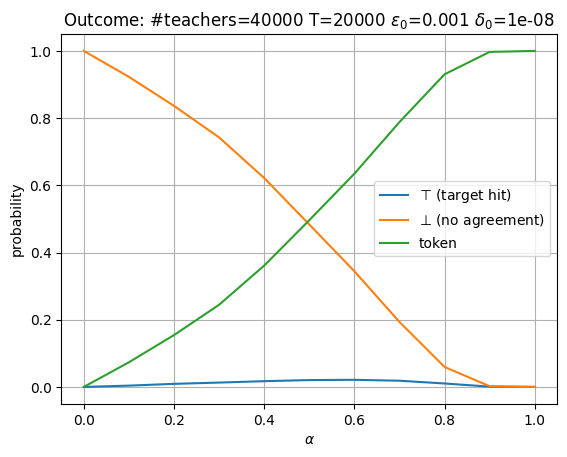}
    \caption{Sweep of $\alpha$, showing probabilities of outcomes: token, $\fail$, $\top$ (target hit).}    
    \label{fig:alphasweep2}  
\end{figure}

Figure~\ref{fig:alphasweep2} shows the distribution of responses as we sweep $\alpha$, broken down by $\top$ (target hit), $\fail$ (abort), and token (yield).  
The number of queries we process per target hit, which is the inverse of the probability of $\top$, is   $\gtrapprox \eps_0 n$. It is lowest at $\alpha \approx T/n$ and is very high for small and large $\alpha$, meaning that the privacy cost per query is very small.

The yield (probability of returning a token) per query is $\approx\alpha$. 
Note that as $\alpha$ decreases, both yield and target probabilities decrease but their ratio remains the same:
In the regime $\alpha\leq T/n$, the yield per target hit is
$\approx \eps_0 n/2$.   Queries with 
$\alpha \gg T/n$ are essentially free in that the yield (token) probability is very high and the $\top$ (target hit) probability is very low.

When using $n = C_\delta/\eps_0$ ($C_\delta \approx  2\log(1/\delta_0$) teachers and plugging this in, we obtain that we get 
$\gtrapprox 0.005\frac{1}{C_\delta} n^2$ yields for overall privacy budget $\eps = 1$.
This means that we pay only for yield and not for queries. 
Note that this holds in the ``worst case'' across all $\alpha$ values, but the number of yields can be much higher when queries have large $\alpha$ (and ``yields'' do not incur privacy loss).


\ignore{
\begin{figure}[h]  
    \centering  
    \includegraphics[width=0.4\textwidth]{plots/cum_vote_per_alpha.png}  
    \caption{Cummulative vote percentage for common part of varying $\alpha$}  

    \label{fig:cumvote}  
\end{figure}
}

\section{DP methods for heterogeneous ensembles}\label{heteromethods:sec}
We propose two DP methods to implement Algorithm~\ref{distaggmed:alg} (Section~\ref{cooagghetero:sec}) with different trade offs. 
In both cases we can apply data-dependent privacy analysis so that queries that do not yield a token (that is, return $\fail$) are essentially ``free'' in terms of the privacy loss.  The parameter $L$ depends on the privacy parameters (and logarithmically on $|V|$).

Importantly, with the second method we can apply privacy analysis with individual charging, where instead of charging the whole ensemble as a unit we only charge teachers that contributed to a response. With heterogeneous ensembles we expect the diversity to arise both from individual distributions and from differences between teachers and therefore with individual charging allows for much more efficient privacy analysis when different groups of teachers support each prediction.

\paragraph*{Private Weighted Sampling} 
This method gains from sparsity (histogram support being much smaller than $|V|$) but the calculation of privacy loss is for the whole ensemble. We can do the analysis in the TCT framework~\citep{targetcharging:arxiv2023} so that privacy loss only depends on yield queries (those that return a token).  We
perform weighted sampling by frequency of each token to obtain the sampled histogram $\boldsymbol{c}'$ and then sanitize the frequencies of sampled tokens using the end-to-end sparsity-preserving method of~\citet{CohenGeriSarlosStemmer:ICML2021} to obtain $\boldsymbol{c}^*$.
The sanitizing prunes out some tokens from $\boldsymbol{c}'$ with probability that depends on the frequency $c_j$, privacy parameters, and sampling rate.  All tokens in $\boldsymbol{c}'$ with frequency above $2L$, where $L$ only depends on the privacy parameters, remain in $\boldsymbol{c}^*$.\footnote{We note that the method also produces sanitized (noised) frequency values $c^*_j$ for tokens in $\boldsymbol{c}^*$ such that $|c^*_j - c_j |\leq L$. And hence can also be used for \texttt{NoisyArgMax}}
The final step is to return a token from $\boldsymbol{c}^*$ selected uniformly at random or to return $\fail$ if $\boldsymbol{c}^*$ is empty.

\paragraph*{Individual Privacy Charging} 
This method does not exploit sparsity, but benefits from individual privacy charging~\citep{DBLP:conf/colt/KaplanMS21,targetcharging:arxiv2023}. It is appropriate when $2L \ll n$.  The queries are formulated as counting queries over the set of teachers. The algorithm maintain a per-teacher count of the number of counting queries it ``impacted.''  A teacher is removed from the ensemble when this limit is reached. Our queries are formed such that at most $O(2L)$ teachers (instead of the whole ensemble) can get ``charged'' for each query that yields a token. 

To express Algorithm~\ref{distaggmed:alg} via counting queries we do as follows: 
We sample a sampling rate  $\nu\sim U[1/n,1]$ of teachers and sample a token $v\in V$ uniformly. We sample the teachers so that each one is included with probability $\nu$ 
and  count the number $c'_v$ of sampled teachers with $y_i = v$.
We then do a \texttt{BetweenThresholds} test on $c'_j$ (using~\citep{targetcharging:arxiv2023} which improves over \citet{BunSU:SODA2017})  to check if  $c'_v \geq 2L$. For ``above'' or ``between'' outcomes we report $v$. If it is a ``between'' outcome we increment the loss counter of all sampled teachers with $y_i = v$ (about $2L$ of them).
We note that this process can be implemented efficiently and does not require explicitly performing this ``blind'' search. 

Teachers that reach their charge limit get removed from the ensemble.
The uniform sampling of the sampling rate and token emulates weighted sampling, where the probability that a token gets selected is proportional to its frequency.  The sub-sampling of teachers ensures that we only charge the sampled teachers.  Teachers are charged only when the query is at the ``between'' regime so (with high probability) at most $\approx 2L$ teachers are charged.  Because we don't benefit from sparsity, there is overhead factor of
$\log(|V| (n/L))$ in the privacy parameter (to bound the error of this number of queries) but we gain a factor of $n/L$ by not charging the full ensemble for each query in the heterogeneous case where most teachers have different ``solutions'' to contribute.

\end{document}